\newtheorem{theorem}{Theorem}[section]
\newtheorem{lemma}[theorem]{Lemma}
\newtheorem{definition}[theorem]{Definition}
\newtheorem{remark}[theorem]{Remark}
\newenvironment{proofof}[1]{\begin{trivlist} \item {\bf Proof
#1:~~}}
  {\qed\end{trivlist}}
\newcommand{\namedref}[2]{\hyperref[#2]{#1~\ref*{#2}}}
\newcommand{\br}[1]{\left\lbrace #1 \right\rbrace}
\renewcommand{\O}[1]{\ensuremath{\mathcal{O}\left(#1\right)}}
\newcommand{\eps}{\varepsilon}
\def \a    {\mdef{\mathbf{a}}}
\def \A    {\mdef{\mathbf{A}}}
\def \calF    {\mdef{\mathcal{F}}}
\def \G    {\mdef{\mathbf{G}}}
\def \calH    {\mdef{\mathcal{H}}}
\def \I    {\mdef{\mathbb{I}}}
\def \calM    {\mdef{\mathcal{M}}}
\def \calQ    {\mdef{\mathcal{Q}}}
\def \calV    {\mdef{\mathcal{V}}}
\def \X    {\mdef{\mathbf{X}}}
\def \Y    {\mdef{\mathbf{Y}}}
\def \W    {\mdef{\mathbf{W}}}
\def \c    {\mdef{\mathbf{c}}}
\def \p    {\mdef{\mathbf{p}}}
\def \q    {\mdef{\mathbf{q}}}
\def \s    {\mdef{\mathbf{s}}}
\def \u    {\mdef{\mathbf{u}}}
\def \w    {\mdef{\mathbf{w}}}
\def \v    {\mdef{\mathbf{v}}}
\def \x    {\mdef{\mathbf{x}}}
\def \coreset    {\mdef{\textsc{Coreset}}}
\newcommand{\mdef}[1]{{\ensuremath{#1}}}  
\DeclareMathOperator*{\polylog}{polylog}
\DeclareMathOperator*{\poly}{poly}
\DeclareMathOperator*{\conv}{conv}
\DeclareMathOperator*{\dist}{dist}
\DeclareMathOperator*{\Span}{span}
\newcommand{\abs}[1]{\mdef{\left|#1\right|}}         
\newcommand{\ignore}[1]{}
\newif\ifnotes\notestrue 
\newcommand{\samson}[1]{\textcolor{purple}{{\bf (Samson:} {#1}{\bf ) }} \marginpar{\tiny\bf
             \begin{minipage}[t]{0.5in}
               \raggedright S:
            \end{minipage}}}            							
\newcommand{\samson}[1]{}
\renewcommand*{\@fnsymbol}[1]{\textcolor{blue}{\ensuremath{\ifcase#1\or *\or \dagger\or \ddagger\or
 \mathsection\or \triangledown\or \mathparagraph\or \|\or **\or \dagger\dagger
   \or \ddagger\ddagger \else\@ctrerr\fi}}}
\providecommand{\email}[1]{\href{mailto:#1}{\nolinkurl{#1}\xspace}}
\newcommand{\say}[1]{``#1''}
\definecolor{mahogany}{rgb}{0.75, 0.25, 0.0}
\definecolor{darkblue}{rgb}{0.0, 0.0, 0.55}
\definecolor{darkpastelgreen}{rgb}{0.01, 0.75, 0.24}
\definecolor{bleudefrance}{rgb}{0.19, 0.55, 0.91}
\definecolor{richmaroon}{rgb}{0.69, 0.19, 0.38}
\definecolor{darkgreen}{rgb}{0.0, 0.2, 0.13}
\definecolor{darkgoldenrod}{rgb}{0.72, 0.53, 0.04}
\definecolor{darkred}{rgb}{0.55, 0.0, 0.0}
\newcolumntype{T}[1]{%
    >{\centering\arraybackslash\hspace{0pt}}p{#1}}%
\title{New Coresets for Projective Clustering and
Applications}
\author{
\hspace{0.9in} Murad Tukan
\thanks{University of Haifa.  
E-mail: \email{muradtuk@gmail.com}}\\
\and
Xuan Wu\thanks{Johns Hopkins University. 
E-mail: \email{wu3412790@gmail.com}}
\and
Samson Zhou\thanks{Carnegie Mellon University. 
E-mail: \email{samsonzhou@gmail.com}}\hspace{0.9in}
\and
Vladimir Braverman\thanks{Johns Hopkins University. 
E-mail: \email{vova@cs.jhu.edu}}
\and
Dan Feldman\thanks{University of Haifa. 
E-mail: \email{dannyf.post@gmail.com}}
}
\date{\today}
\begin{document}
\maketitle

\begin{abstract}
$(j,k)$-projective clustering is the natural generalization of the family of $k$-clustering and $j$-subspace clustering problems. Given a set of points $P$ in $\mathbb{R}^d$, the goal is to find $k$ flats of dimension $j$, i.e., affine subspaces, that best fit $P$ under a given distance measure. In this paper, we propose the first algorithm that returns an $L_\infty$ coreset of size polynomial in $d$. Moreover, we give the first strong coreset construction for general $M$-estimator regression. Specifically, we show that our construction provides efficient coreset constructions for Cauchy, Welsch, Huber, Geman-McClure, Tukey, $L_1-L_2$, and Fair regression, as well as general concave and power-bounded loss functions. Finally, we provide experimental results based on real-world datasets, showing the efficacy of our approach.
\end{abstract}

\section{INTRODUCTION}
Coresets are often used in machine learning, data sciences, and statistics as a pre-processing dimensionality reduction technique to represent a large dataset with a significantly smaller amount of memory, thereby improving the efficiency of downstream algorithms in both running time and working space.
Intuitively, a coreset $C$ of a set $P$ of $n$ points in $\mathbb{R}^d$ is a smaller number of weighted representatives of $P$ that can be used to approximate the cost of any query from a set of a given queries.
Hence rather than optimizing some predetermined objective on $P$, it suffices to optimize the objective on $C$, which has significantly smaller dimension than $P$.
In this paper, we present coresets for projective clustering.

Projective clustering is an important family of clustering problems for applications in unsupervised learning~\citep{Procopiuc10}, data mining~\citep{AggarwalPWYP99,AggarwalY00}, computational biology~\citep{Procopiuc10}, database management~\citep{ChakrabartiM00}, and computer vision~\citep{ProcopiucJAM02}.
Given a set $P$ of $n$ points in $\mathbb{R}^d$, a parameter $z$ for the exponent of the distance, and a parameter $k$ for the number of flats of dimension $j$, the $(j,k)$-projective clustering problem is to find a set $\calF$ of $k$ $j$-flats that minimizes the sum of the distances of $P$ from $\calF$, i.e., $\min_{\calF}\sum_{\p\in P}\dist(\p,\calF)^z$, where $\dist(\p,\calF)^z$ denotes the $z$-th power of the Euclidean distance from $\p$ to the closest point in any flat in $\calF$.
We abuse notation by defining the projective clustering problem to be $\min_{\calF}\max_{\p\in P}\dist(\p,\calF)$ for $z=\infty$.
Projective clustering includes many well-studied problems such as the $k$-median clustering problem for $z=1$, $j=0$, $k\in\mathbb{Z}^+$, the $k$-means clustering problem for $z=2$, $j=0$, $k\in\mathbb{Z}^+$, the $k$-line clustering problem for $z\ge 0$, $j=1$, $k\in\mathbb{Z}^+$, the subspace approximation problem for $z\ge 0$, $j\in\mathbb{Z}^+$, $k=1$, the minimum enclosing ball problem for $z=\infty$, $j=0$, $k=1$, the $k$-center clustering problem for $z=\infty$, $j=0$, $k\in\mathbb{Z}^+$, the minimum enclosing cylinder problem for $z=\infty$, $j=1$, $k=1$, and the $k$-cylinder problem for $z=\infty$, $j=1$, $k\in\mathbb{Z}^+$.

\subsection{Related Work}
Finding the optimal set $C$ for projective clustering is known to be NP-hard~\cite{AloiseDHP09} and even finding a set with objective value that is within a factor of $1.0013$ of the optimal value is NP-hard~\cite{LeeSW17}.
\cite{ProcopiucJAM02} implemented a heuristics-based Monte Carlo algorithm for projective clustering while~\cite{Har-PeledV02} introduced a dimensionality reduction technique to decrease the size of each input point, which distorts the cost of the optimal projective clustering. 
Similarly, \cite{KerberR15} used random projections to embed the input points into a lower dimensional space. 
However, none of these approaches reduces the overall number of input points, whose often causes the main bottleneck for implementing approximation algorithms for projective clustering in big data applications.

\cite{BadoiuHI02} first introduced coresets for the $k$-center and $k$-median clustering problems in Euclidean space.
Their coresets constructions gave $(1+\eps)$-approximations and sampled a number of points with exponential dependency in both $\frac{1}{\eps}$ and $k$.
Their work also inspired a number of coresets for specific projective clustering problems; coresets have subsequently been extensively studied in $k$-median or $k$-means clustering~\citep{BadoiuHI02,Har-PeledM04,FrahlingS05,FrahlingS08,Chen09,FeldmanS12,BravermanLUZ19,HuangV20}, subspace approximation~\citep{DeshpandeRVW06,DeshpandeV07,FeldmanL11,FeldmanMSW10,ClarksonW15,SohlerW18,FeldmanSS20, tukan2021no}, and a number of other geometric problems and applications~\citep{AgarwalHY06,FeldmanFS06,Clarkson08,DasguptaDHKM08,
AckermannB09,PhillipsT18,HuangJLW18,AssadiBBMS19,MunteanuSSW18,
BravermanDMMUWZ20,MussayOBZF20,maalouf2020tight,tukan2020coresets,tukan2021coresets,jubran2020sets,maalouf2021coresets}.
However, these coreset constructions were catered toward specific problems rather than the general $(j,k)$-projective clustering problem.

\cite{FeldmanL11} introduced a framework for constructing coresets by sampling each input point with probability proportional to its \emph{sensitivity}, which informally quantifies the importance of the point with respect to the predetermined objective function. 
\cite{FeldmanL11} also performed dimensionality reduction for $(j,k)$-projective clustering by taking the union of two sets $\mathcal{S}$ and $\textrm{proj}(P,B)$, where $P$ is the input data set of size $n$. 
Although the set $\mathcal{S}$ can have size $\poly(j,k,d)$, the set $\textrm{proj}(P,B)$ still has size $n$, so their resulting output can actually have \emph{larger} size than the original input. 
The main point is that $\textrm{proj}(P,B)$ lies in a low-dimensional space, so their approach should be viewed as a dimensionality reduction technique to decrease the ambient dimension $d$ whereas our coreset construction decreases the input size $n$. \citep{ClarksonW15} suggested approximation algorithms based on matrix sketches for $(1,j)$-projective clustering problems with respect to family of $M$-estimator functions, and~\citep{clarkson2019dimensionality} provided tighter result with respect to the $(1,j)$-projective clustering problems with respect to the Tukey loss function.
\citep{VaradarajanX12} proved upper bounds for the total sensitivity of the input points for a number of shape fitting problems, including the $k$-median, $k$-means, and $k$-line clustering problems, as well as an $L_1$ coreset for the integer $(j,k)$-projective clustering problem.
On the other hand, \citep{Har-Peled04} showed that $L_\infty$ coresets for the projective clustering problem does not exist even for $j=k=2$ when the input set consists of points from $\mathbb{R}^d$.
When the input is restricted to integer coordinates, \citep{EdwardsV05} constructed an $L_\infty$ coreset that gives a $(1+\eps)$-approximation for $(j,k)$-projective clustering.
However, their construction uses a subset of points with size exponential in both $k$ and $d$, which often prevents practical implementations.
Hence, a natural open question is whether there exist $L_\infty$ coreset constructions for integer $(j,k)$-projective clustering with size polynomial in $d$.

\subsection{Our Contributions}
We give the first $L_\infty$ coreset construction for the integer $(j,k)$-projective clustering problem with size polynomial in $d$, resolving the natural open question from \cite{EdwardsV05}.
Specifically, we give an $L_\infty$ $\xi$-coreset $C$, so that for any choice $\calF$ of $k$ flats with dimension $j$, the maximum connection cost of $C$ to $\calF$ is at most $\xi$ times the maximum connection cost of $P$.
Previously, even in the case of $k=1$ and constant $j$, the best known $L_\infty$ coreset construction had size $\exp(d)$~\citep{EdwardsV05}.
We first introduce an $L_\infty$ coreset construction for the $(j,1)$-projective clustering problem using Carath\'{e}odory's theorem; see Figure~\ref{fig:illustration}. We then use our $L_\infty$ coreset for $(j,1)$-projective clustering as a base case to recursively build a coreset $D_k$ for $(j,k)$-projective clustering from coresets for $(j,k-1)$-projective clustering on the partitions of the input points that have geometrically increasing distances from the affine subspace spanned by the points chosen in the previous steps. 
We use properties from \cite{EdwardsV05,FeldmanSS20} to bound the number of partitions determined by the distances from the input points to each of the affine subspaces, which bounds our coreset size for an input with aspect ratio $\Delta$, i.e., the ratio of the largest and smallest coordinate magnitudes. 

\begin{theorem}[Small $L_\infty$ coreset for $(j,k)$-projective clustering]
\label{thm:main:infty}
There exists an $L_\infty$ constant-factor approximation coreset for the $(j,k)$-projective clustering problem with size $(8j^3\log(d\Delta))^{\O{jk}}$.
\end{theorem}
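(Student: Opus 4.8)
The plan is to induct on the number of flats $k$: I would establish the $(j,1)$ case directly and then reduce a $(j,k)$ instance to a bounded number of $(j,k-1)$ instances on a partition of the input. It is convenient to use the monotone reformulation that a subset $C\subseteq P$ is a constant-factor $L_\infty$ coreset exactly when $\max_{\p\in P}\dist(\p,\calF)\le\O{1}\cdot\max_{\p\in C}\dist(\p,\calF)$ for every admissible query $\calF$, since $C\subseteq P$ supplies the reverse inequality automatically. The linchpin observation is that for a single fixed $j$-flat $F$ with orthogonal-complement projection $Q$ and base point $a$, the squared distance $\dist(\p,F)^2=(\p-a)^\top Q(\p-a)$ is a degree-two polynomial in $\p$ and therefore a linear functional $\langle\phi(\p),\theta(F)\rangle$ of the lifted vector $\phi(\p)=(\p\p^\top,\p,1)$; the farthest point of $P$ from $F$ is thus a vertex of $\conv\{\phi(\p):\p\in P\}$ exposed in direction $\theta(F)$, which is precisely the regime where Carath\'eodory-type selection applies.

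For the base case I would build the $(j,1)$ coreset one geometric cost-scale at a time. Restricting attention to points whose connection cost lies in a dyadic band $[2^i,2^{i+1})$ and exploiting that $Q$ has rank governed by $j$, the admissible exposing directions $\theta(F)$ effectively vary in a space whose dimension is controlled by $j$ rather than by $d$ once the ambient coordinates are collapsed onto the structure relevant to the flat; Carath\'eodory's theorem then guarantees that a subset of $\poly(j)$ lifted points witnesses every nearly-extremal direction at that scale up to a constant factor. Because the coordinates are integral with aspect ratio $\Delta$, only $\O{\log(d\Delta)}$ dyadic scales are nonempty, so accumulating witnesses over scales (and over the $\O{j}$ peeled directions) yields a $(j,1)$ coreset of size $(8j^3\log(d\Delta))^{\O{j}}$, matching the theorem at $k=1$.

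For the inductive step I would keep the construction oblivious to the query. Let $S$ be the $(j,1)$ coreset of $P$ and let $A_1,\dots,A_{\O{j}}$ be the $\O{j}$ reference affine subspaces determined by $S$ that are needed to pin down a $j$-flat. I would partition $P$ according to the joint vector of geometric distance bands $\dist(\p,A_\ell)\in[2^{i_\ell},2^{i_\ell+1})$; the structural bounds of \cite{EdwardsV05,FeldmanSS20} on the number of distinct distance scales between integer points and a flat cap the number of nonempty cells at $(8j^3\log(d\Delta))^{\O{j}}$. On each cell $P_c$ I would recursively build a $(j,k-1)$ coreset and set $D_k=S\cup\bigcup_c D_{k-1}(P_c)$. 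Correctness follows from a case analysis on the farthest point $\p^\ast$ of $P$ under an arbitrary query $\calF=\{F_1,\dots,F_k\}$: if $\p^\ast$ is close to the reference subspaces its cost is already witnessed by $S$, and otherwise $\p^\ast$ sits in a cell whose distances to the $A_\ell$ are comparable to $\dist(\p^\ast,\calF)$, so its cost against the remaining flats is preserved, within a constant, by the inductive guarantee of $D_{k-1}(P_c)$ against a sub-collection of $k-1$ flats.

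Writing $s_k$ for the size, the construction yields $s_k\le(8j^3\log(d\Delta))^{\O{j}}\cdot s_{k-1}$ with $s_1=(8j^3\log(d\Delta))^{\O{j}}$, and unrolling over the $k$ levels gives the claimed $(8j^3\log(d\Delta))^{\O{jk}}$. I expect the main obstacle to be the correctness of the inductive step rather than the size bookkeeping: since $S$ and the cells are read off from $P$ alone and are blind to $\calF$, I must show that partitioning by distance to the query-independent subspaces $A_\ell$ genuinely decouples the $k$-flat cost into a within-cell $(k-1)$-flat cost, and that the constant lost at each of the $k$ recursion levels telescopes to an absolute constant instead of compounding to $2^{\O{k}}$. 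The geometric (rather than linear) spacing of the bands, together with the exposed-vertex guarantee from the Carath\'eodory base case, is what I would rely on to make both of these controls go through.
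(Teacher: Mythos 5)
Your high-level architecture (a $(j,1)$ base case plus a recursion on $k$ driven by dyadic distance bands, with the number of nonempty bands controlled by the integrality and aspect-ratio bounds of \cite{EdwardsV05,FeldmanSS20}) matches the paper, but both of your key steps have genuine gaps. In the base case, you lift each point to $\phi(\p)=(\p\p^\top,\p,1)$ so that $\dist(\p,F)^2$ is linear in $\phi(\p)$, note that the farthest point is an exposed vertex of $\conv\{\phi(\p):\p\in P\}$, and then invoke ``Carath\'eodory-type selection'' to extract a $\poly(j)$-size witness set valid for all directions. Carath\'eodory's theorem does not do this: it represents a \emph{single} point of the convex hull as a combination of $D+1$ points, where $D$ is the ambient dimension (here $\Theta(d^2)$), and it gives no control over the set of exposed vertices, which can be all $n$ lifted points. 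Nor do the directions $\theta(F)$ range over a $\poly(j)$-dimensional subspace: the projections onto $(d-j)$-dimensional complements span essentially the whole space of symmetric matrices as $F$ varies. What closes the base case in the paper is (i) the standing assumption that $P$ lies in a $j$-dimensional flat, so one works with the projected set $Q\subseteq\mathbb{R}^j$, and (ii) the John-L\"{o}wner ellipsoid sandwich $\conv(S)\subseteq\conv(Q)\subseteq j^{1.5}(\conv(S)-\c)+\c$, which supplies a query-independent skeleton $S$ of only $2j$ points whose dilation dominates every point of $Q$ in every direction; Carath\'eodory is then used in the opposite direction from yours, to write each of the $\O{j}$ skeleton points (which lie \emph{inside} $\conv(Q)$) as a convex combination of $j+1$ actual data points. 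Without such a universal skeleton your base case does not close.

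In the inductive step, you fix the reference subspaces $A_1,\ldots,A_{\O{j}}$ once and for all from the $(j,1)$ coreset $S$ and partition $P$ by the joint distance-band vector. But for an arbitrary query with cylinders $S_1,\ldots,S_k$, there is no reason that a point close to your fixed $A_\ell$'s is covered by a constant-factor expansion of the cylinder $S_k$: that implication is exactly what Lemma 1 of \cite{EdwardsV05} provides, and it requires the reference points spanning the relevant affine subspace to themselves lie in $S_k$. The paper arranges this by making the construction \emph{branch over all choices} of $\v_0\in D_{k-1}$, then $\v_1$ over all points of all level-$0$ coresets, and so on through $j+1$ levels (this enumeration is also the source of the exponent $\O{jk}$ in the size bound); the correctness proof then navigates to the branch whose reference points happen to lie inside $S_k$ and applies the hyperrectangle containment there. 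Your single query-independent partition omits this enumeration, and the decoupling claim you flag as the main obstacle is indeed the missing step: as stated it does not go through.
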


Our main technical contribution is the novel $L_\infty$ coreset construction for the $(j,1)$-projective clustering problem that relies on Carath\'{e}odory's  theorem, which we crucially use to form the base case in our recursive argument. 
We then build upon our novel coreset construction by adding a polynomial number of points to the coreset over each step in the inductive argument. 
By comparison, even the base case for the previous best coreset~\citep{EdwardsV05} uses exponential space by essentially constructing an epsilon net with $\left(\frac{1}{\epsilon}\right)^{\O{d}}$ points. 

We then give the first $L_\infty$ coresets for a number of $M$-estimator regression problems. 
Although the framework of Theorem~\ref{thm:main:infty} immediately gives coreset constructions for Cauchy, Welsch, Huber, Geman-McClure, Tukey, $L_1-L_2$, and Fair regression, we instead apply sharper versions of the proof of Theorem~\ref{thm:main:infty} to the respective parameters induced by each of the loss functions to obtain even more efficient coreset constructions. 
Our constructions give strong coresets so that with high probability, the data structure simultaneously succeeds for all queries. 
We then apply the framework of Theorem~\ref{thm:main:infty} to give $L_\infty$ coresets for any non-decreasing concave loss function $\Psi$ with $\Psi(0)=0$. 
We generalize this approach to give $L_\infty$ coresets for any non-decreasing concave loss function $\Psi$ with $\Psi(y)/\Psi(x)\le(y/x)^z$ for a fixed constant $z>0$, for all $0\le x\le y$. 
Note that this property essentially states that the loss function $\Psi(x)$ is bounded by some power function $x^z$. 
We summarize these results in Table~\ref{table:Mestimators}. 

We also use Theorem~\ref{thm:main:infty} along with the well-known sensitivity sampling technique to obtain an $L_2$ coreset for integer $(j,k)$-projective clustering with approximation $(1+\eps)$. 

\begin{figure*}[!htb]
\centering
    \begin{subfigure}[t]{0.20\textwidth}
		\centering
		\includegraphics[width = 0.8\textwidth]{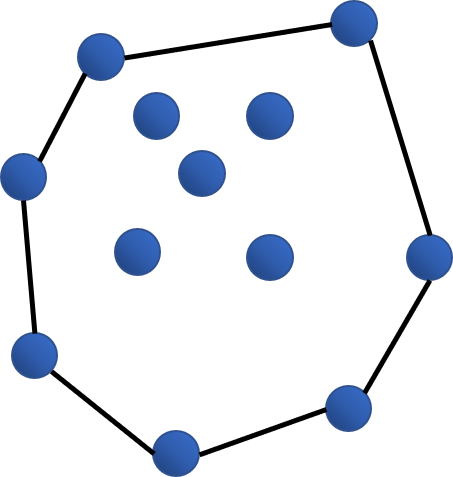}
        \label{fig:step1}
    \end{subfigure}
    \begin{subfigure}[t]{0.20\textwidth}
		\centering
		\includegraphics[width = 0.8\textwidth]{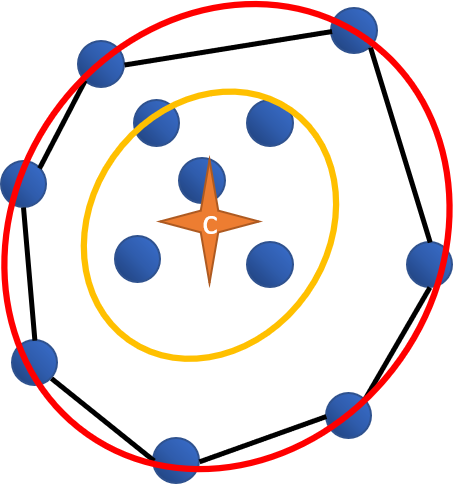}
        \label{fig:step2}
    \end{subfigure}
	\begin{subfigure}[t]{0.20\textwidth}
		\centering
		\includegraphics[width = 0.8\textwidth]{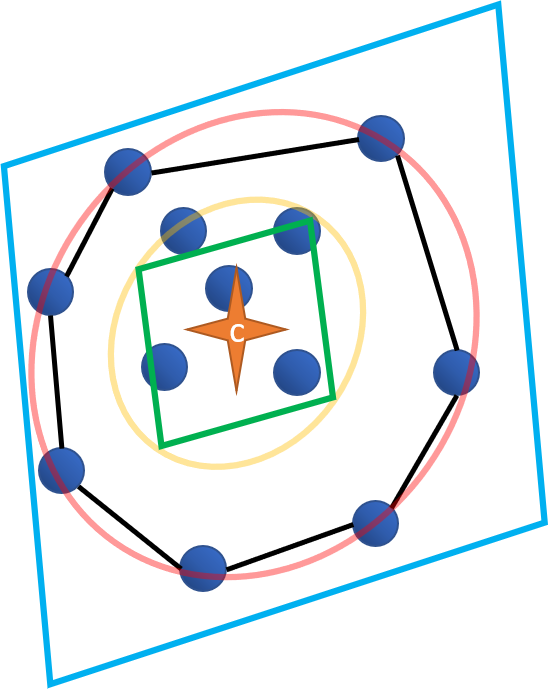}
        \label{fig:step3}
	\end{subfigure}
    \begin{subfigure}[t]{0.20\textwidth}
		\centering
		\includegraphics[width = 0.8\textwidth]{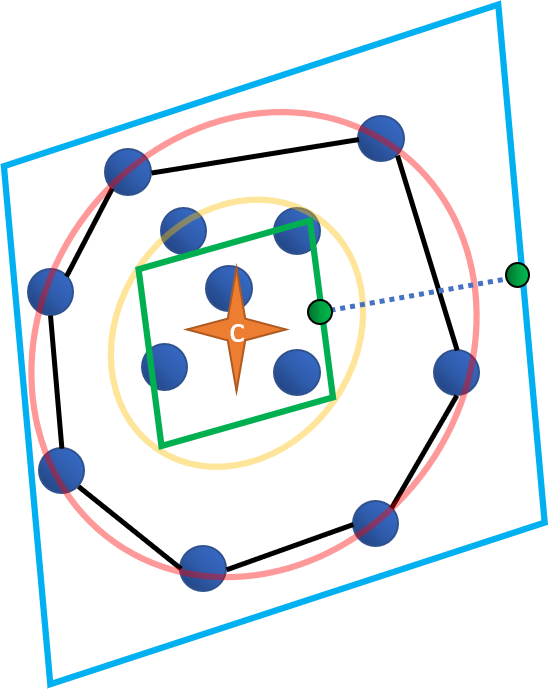}
        \label{fig:step4}
    \end{subfigure}
    \caption{\textbf{Overview of our approach (see Algorithm~\ref{alg:single:projective}).} Images from left to right: Steps $1$ and $2$: An approximated ellipsoid $E$ is computed satisfying the conditions of Theorem~\ref{thm:loewner:ellipsoid} (red ellipsoid), where the ellipsoid $E^\prime$ (in orange) is a dilation of $E$ by a factor of $\alpha=d$ with respect to the center $c$ of $E$ (orange star). Step $3$: The vertices of $E^\prime$ (orange ellipsoid) are computed and then dilated such that their convex hull (cyan outline) will contain $\conv(P)$ (black outline). Step $4$: A Caratheodory set of $d+1$ points from $P$ is computed for each vertex point of $E^\prime$ (green convex hull). Finally, each point on $\conv(P)$ (black outline) can be represented by a convex combination of a point on the convex hull of the vertices of $E^\prime$ and their dilated points (cyan outline).}
\label{fig:illustration}
\end{figure*}

\begin{theorem}[Small $L_2$ coreset for $(j,k)$-projective clustering]
\label{thm:main:two}
There exists an $L_2$ coreset with approximation guarantee $(1+\eps)$ for the $(j,k)$-projective clustering problem with size $\O{(8j^3\log(d\Delta))^{\O{jk}}\log n}$.
\end{theorem}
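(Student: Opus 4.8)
The plan is to invoke the Feldman--Langberg importance-sampling framework, which reduces the construction of a strong $L_2$ (that is, $z=2$) coreset to two quantitative ingredients: a bound on the \emph{total sensitivity} of the point set and a bound on the \emph{pseudo-dimension} of the associated family of cost functions. Writing $f_\p(\calF)=\dist(\p,\calF)^2$ and defining the sensitivity of $\p$ by $s(\p)=\sup_{\calF}\frac{f_\p(\calF)}{\sum_{\q\in P}f_\q(\calF)}$, the framework guarantees that if we sample $\tO{\frac{\mathfrak{S}}{\eps^2}\left(\lambda+\log\frac1\delta\right)}$ points with probability proportional to (an upper bound on) $s(\p)$ and reweight each sampled point by the reciprocal of its sampling probability, then with probability at least $1-\delta$ the resulting weighted set simultaneously $(1+\eps)$-approximates $\sum_{\q\in P}f_\q(\calF)$ for \emph{every} choice $\calF$ of $k$ flats of dimension $j$. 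Here $\mathfrak{S}=\sum_{\p\in P}s(\p)$ is the total sensitivity and $\lambda$ is the pseudo-dimension, so the entire burden of the proof is to bound these two quantities.

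First I would bound the total sensitivity using \thmref{main:infty}. The key point is that the recursive construction underlying the $L_\infty$ coreset implicitly partitions $P$ into $m=(8j^3\log(d\Delta))^{\O{jk}}$ groups — one Carath\'eodory cell at the base case, refined across the $\O{jk}$ recursive levels and the geometric distance rings. I claim that each group $G$ contributes only $\O{1}$ to the total sensitivity: because the $L_\infty$ construction groups points by geometrically increasing distance, the points inside a single group have connection costs to any query $\calF$ that agree up to a constant factor, so no single point in the group can carry more than an $\O{1/|G|}$ fraction of the group's (and hence of the whole set's) cost for any $\calF$. Summing $s(\p)\le\O{1/|G|}$ over $G$ gives $\O{1}$, and summing over all $m$ groups yields $\mathfrak{S}=\O{m}=(8j^3\log(d\Delta))^{\O{jk}}$. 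Establishing this per-group comparability uniformly over \emph{all} queries — rather than only for the specific flats fixed during the recursion — is the main obstacle, and it is exactly where the full strength of the $L_\infty$ coreset (as opposed to a mere constant-factor approximate solution) is needed.

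Second, the pseudo-dimension $\lambda$ of the class $\{\,\p\mapsto\dist(\p,\calF)^2\,\}$ is $\poly(j,k,d)$: each function is a minimum of piecewise-polynomial expressions of bounded degree in the $\O{jkd}$ real parameters describing the $k$ flats, so standard bounds on the number of sign patterns of polynomial systems apply. Both $\lambda$ and the $1/\eps^2$ and $\log\mathfrak{S}$ factors are polynomial in the parameters already appearing in the base $8j^3\log(d\Delta)$ and are therefore absorbed into the $\O{jk}$ exponent. The only genuinely $n$-dependent term is the confidence factor $\log\frac1\delta$: to obtain a coreset that succeeds with high probability we take $\delta=1/\poly(n)$, contributing the stated $\log n$. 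Combining these bounds, the sample size is $\O{(8j^3\log(d\Delta))^{\O{jk}}\log n}$, which proves \thmref{main:two}.
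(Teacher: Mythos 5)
Your overall architecture --- sensitivity sampling plus a pseudo-dimension bound, with the total sensitivity extracted from the $L_\infty$ coreset of \thmref{main:infty} --- matches the paper's, and your treatment of the dimension term is consistent with the paper's (which cites a shattering-dimension bound of $\tilde{O}(djk)$ from \cite{FeldmanSS20}). The genuine gap is in the total-sensitivity bound, which you yourself flag as ``the main obstacle'' and then do not close. The claim that the recursive $L_\infty$ construction partitions $P$ into groups whose members have connection costs agreeing up to a constant factor \emph{for every query} $\calF$ is false: the cells $K_{t,i}$ are distance rings around the particular affine subspaces $A_t$ fixed during the recursion, and two points in the same ring can be far from each other, so an adversarial choice of $k$ flats can pass arbitrarily close to one of them while remaining far from the other, making one point's cost dominate the cell. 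Nothing in \thmref{main:infty} gives per-point cost comparability within a cell; the $L_\infty$ guarantee only controls the \emph{maximum} cost over the whole set, so the step $s(\p)\le \O{1/|G|}$ has no support.

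The paper closes this gap by a different mechanism, the layered-peeling argument of Lemma 3.1 of \cite{Varadarajan2012ANA}, implemented in Algorithm~\ref{alg:full:projective}: repeatedly extract an $L_\infty$ coreset $S_i$ from the remaining points $P_i$ and remove it. A point $\p\in S_i$ then has sensitivity $\O{\xi^2/i}$, because each of the $i-1$ previously extracted sets is an $L_\infty$ coreset of a superset of $P_i$ and therefore contains a point whose cost under any query is at least $\dist(\p,\calF)^2/\xi^2$; summing the harmonic series over layers gives total sensitivity $t=\O{|S_{\max}|\log n}$. This is also where the $\log n$ in the theorem actually originates --- not from setting $\delta=1/\poly(n)$ as you assert (the paper's \thmref{ltwo:main} claims only constant success probability). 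To repair your proof you should adopt this peeling argument; the per-cell comparability claim you would otherwise need does not hold.
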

\begin{table*}[!htb]
\centering
\caption{$M$-estimator loss functions that can be captured by our coreset construction; $d$ here denotes the dimension of the input data $P$; all lemmata below can be found at Section~\ref{supplement:app} of the supplementary material.}
\begin{tabular}{l|T{0.35\textwidth}|T{0.2\textwidth}|r}
\hline
Loss Function $\Psi$ & Formulation & multiplicative error ($\ell_\infty$-coreset) & Reference \\
\hline
Cauchy & $\left(\lambda^2/2\right) \log{\left( 1 + (x/\lambda)^2\right)}$ & $8(d+1)^3$ & Lemma~\ref{lem:cauchy} \\ \hline
Welsch & $\frac{\lambda^2}{2}\left(1-e^{-\left(\frac{x}{\lambda}\right)^2}\right)$ & $8(d+1)^3$ & Lemma~\ref{lem:welsch} \\ \hline
Huber     &  $\begin{cases} x^2/2 & \text{If } \abs{x} \leq \lambda\\
 \lambda\abs{x} - \lambda^2/2 & \text{otherwise} \end{cases}$ & $16(d+1)^3$ & Lemma~\ref{lem:huber} \\ \hline
Geman-McClure & $x^2/\left(2 + 2x^2\right)$ & $8(d+1)^3$ & Lemma~\ref{lem:gm} \\ \hline
Concave & $\frac{d^2\Psi}{dx^2}\le 0$ & $4(d+1)^{1.5}$& Lemma~\ref{lem:concave} \\ \hline
Tukey & $\begin{cases} \frac{\lambda^2}{6}\left(1-\left(1-\frac{x^2}{\lambda^2}\right)^3\right) & \text{if } \abs{x} \leq \lambda\\
 \frac{\lambda^2}{6} & \text{otherwise} \end{cases}$ & $8(d+1)^3$ & Lemma~\ref{lem:tukey} \\ \hline
$L_1-L_2$ & $2\left(\sqrt{1+x^2/2}-1\right)$ & $8(d+1)^3$ & Lemma~\ref{lem:ll} \\ \hline
Fair & $\lambda|x|-\lambda^2\ln\left(1+|x|/\lambda\right)$  & $8(d+1)^3$ & Lemma~\ref{lem:fair} \\ \hline
Power Bounded & $\Psi_{Pow}(y)/\Psi_{Pow}(x)\le(y/x)^z$ for all $0\le x\le y$ & $4^z(d+1)^{1.5z}$ & Lemma~\ref{lem:power} \\ 
\hline
\end{tabular}
\label{table:Mestimators}
\end{table*}


\textbf{Experiments.}
Finally, we complement our theoretical results with empirical evaluations on synthetic and real world datasets for regression and clustering problems. 
We first consider projective clustering on a bike sharing dataset and a 3D spatial network from the UCI machine learning repository~\citep{Dua:2019}. 
We then generate a synthetic dataset in the two-dimensional Euclidean plane. 
Since previous coreset constructions with theoretical guarantees are impractical for implementations, we compare our algorithms to a baseline produced by uniform sampling. 
Our experiments demonstrate that our algorithms have superior performance both across various ranges of $j$ and $k$ for the $(j,k)$-projective clustering problem as well as across various regression problems, e.g., Cauchy, Huber loss functions. 

\subsection{Preliminaries}
For a positive integer $n$, we write $[n]:=\{1,\ldots,n\}$.
We use bold font variables to denote vectors and matrices.
For a vector $\x\in\mathbb{R}^d$, we have the Euclidean norm $\|\x\|_2=\sqrt{\sum_{i=1}^d x_i^2}$.
We use $\log$ to denote the base two logarithm.
We use the notation $\circ$ to denote vertical concatenation, so that if $\u$ and $\v$ are row vectors with dimension $d$, then $\u\circ\v$ is the matrix with dimension $2\times d$ whose first row is $\u$ and second row is $\v$. Recall that for $\c\in\mathbb{R}^d$ and a symmetric positive definite matrix $\G\in\mathbb{R}^{d\times d}$, we define the ellipsoid $E(\G,\c)$ to be the set $E(\G,\c):=\br{\x\in\mathbb{R}^d\,|\,(\x-\c)^\top\G(\x-\c)\le1}.$

\begin{theorem}[John-L\"{o}wner ellipsoid]
\citep{John14}
\label{thm:loewner:ellipsoid}
For a set $L\subseteq\mathbb{R}^d$ of points with nonempty interior, there exists an ellipsoid $E(\G,\c)$, where $\G\in\mathbb{R}^{d\times d}$ is a positive definite matrix and $\c\in\mathbb{R}^d$, of minimal volume such that $\frac{1}{d}(E(\G,\c)-\c)+\c\subseteq\conv(L)\subseteq E(\G,\c).$
\end{theorem}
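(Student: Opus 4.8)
The plan is to treat this as the classical Löwner--John circumscribed-ellipsoid theorem. Write $K=\conv(L)$, which is a compact convex body with nonempty interior. I would first establish existence of a minimal-volume enclosing ellipsoid, then reduce the general ellipsoid to the Euclidean unit ball by an affine change of coordinates, and finally prove the inner containment $\frac1d\B\subseteq K$ (writing $\B$ for the unit ball) using the first-order optimality (John) conditions satisfied by the minimizer. Undoing the affine map at the end returns the stated two-sided sandwich.

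\textbf{Existence and reduction.} The admissible pairs $(\G,\c)$ with $E(\G,\c)\supseteq K$ form a set on which boundedness of $K$ forces the eigenvalues of $\G$ to stay bounded away from $0$ and $\infty$ and $\c$ to lie in a bounded region; since $\mathrm{vol}(E(\G,\c))\propto\det(\G)^{-1/2}$ is continuous, a minimizer exists (uniqueness, which I do not need, would follow from strict log-concavity of the volume under interpolation of two candidate minimizers). Applying the affine map $\x\mapsto\M(\x-\c)$ with $\M=\G^{1/2}$ sends $E(\G,\c)$ to the unit ball $\B$ centered at the origin. Such maps scale every volume by the constant $\abs{\det\M}$, so $\B$ is still the minimal enclosing ellipsoid of the image $K'=\M(K-\c)$; since they preserve inclusions and commute with the operation ``shrink by $1/d$ about the center,'' it suffices to prove $\frac1d\B\subseteq K'$.

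\textbf{Optimality (John) conditions --- the main obstacle.} Since $\B$ is the minimal enclosing ball of $K'$, I claim there are contact points $\u_1,\dots,\u_m\in\partial\B\cap K'$ and weights $c_i>0$ with $\sum_i c_i\u_i=\mathbf{0}$ and $\sum_i c_i\u_i\u_i^\top=\I$. The proof is a separation argument: if $\I$ did not lie in the convex hull generated by the matrices $\u\u^\top$ over contact points $\u$ (subject to the centering constraint), a separating direction would furnish a symmetric perturbation $\I\mapsto\I+t\Y$ of the ball into an ellipsoid of strictly smaller volume that still contains $K'$ for small $t>0$, contradicting minimality. Taking the trace of $\sum_i c_i\u_i\u_i^\top=\I$ then gives $\sum_i c_i=d$. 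Making this perturbation analysis rigorous --- tracking the center, verifying the first-order volume decrease, and checking that containment of $K'$ is preserved --- is the technical heart of the argument.

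\textbf{Inner containment.} Fix a unit direction $\theta$, set $a_i=\langle\u_i,\theta\rangle$, and let $M=h_{K'}(\theta)=\max_{\x\in K'}\langle\x,\theta\rangle$. Because $K'\subseteq\B$ we have $M\le1$ and $a_i\le M$, while $\abs{\u_i}=1$ gives $a_i\ge-1$. Convexity of $t\mapsto t^2$ yields the chord bound $a_i^2\le1+(a_i+1)(M-1)$ on $[-1,M]$. Summing against the weights and using $\sum_i c_i=d$, $\sum_i c_i a_i=0$, and $\sum_i c_i a_i^2=\theta^\top\I\,\theta=1$ gives $1\le Md$, i.e. $h_{K'}(\theta)\ge\frac1d$ for every unit $\theta$. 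For a closed convex $K'$ whose enclosing-ball center is the origin, this support-function bound is exactly $\frac1d\B\subseteq K'$. Transporting back through the affine map produces $\frac1d(E(\G,\c)-\c)+\c\subseteq\conv(L)\subseteq E(\G,\c)$, as claimed.
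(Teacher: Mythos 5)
The paper does not prove this statement at all: it is quoted as a classical result with a citation to John's 1948 paper, so there is no in-paper argument to compare yours against. Judged on its own, your proposal is the standard modern proof of John's theorem (essentially the argument in Ball's exposition) and its quantitative core is correct: the reduction to the unit ball, the use of a John decomposition $\sum_i c_i\u_i=\mathbf{0}$, $\sum_i c_i\u_i\u_i^\top=\I$ with $\sum_i c_i=d$, and the chord estimate $a_i^2\le 1+(a_i+1)(M-1)$ on $[-1,M]$ combine exactly as you say to give $1=\sum_i c_ia_i^2\le d+ (M-1)\bigl(\sum_i c_ia_i+\sum_i c_i\bigr)=dM$, hence $h_{K'}(\theta)\ge 1/d$ and the inner containment by the support-function criterion (the clause about the ball's center being the origin is not needed there). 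Two caveats. First, the step you yourself flag as the ``technical heart''---deriving the existence of the contact points and weights from minimality via a separation/perturbation argument---is only sketched; one does need to carry out the first-order expansion of $\det(\I+t\Y)^{-1/2}$ and check that containment survives the perturbation, and as written your proof is an outline of that lemma rather than a proof of it. Second, the theorem as stated in the paper omits a boundedness hypothesis on $L$ (without it no enclosing ellipsoid exists); your existence argument silently assumes $\conv(L)$ is compact, which is the intended setting (in the paper $L$ is always a finite point set), and also note that it is the nonempty interior, not boundedness, that pins the eigenvalues of $\G$ away from infinity, while the volume constraint supplies the bound away from zero. Neither caveat affects the correctness of the overall plan.
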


The following defines an approximated solution to problem of finding the L\"{o}wner ellipsoid.
\begin{definition}[$\alpha$-rounding]
\citep{todd2007khachiyan}
Let $L\subseteq\mathbb{R}^d$ be a finite set such that $\Span(L)=\mathbb{R}^d$ and let $\alpha\ge 1$.
Then an ellipsoid $E(\G,\c)$ is called an $\alpha$-rounding of $\conv(L)$ if $\frac{1}{\alpha}(E(\G,\c)-\c)+\c\subseteq\conv(L)\subseteq E(\G,\c).$
\end{definition}
Note that if $\alpha$ in the above definition is $d$ (or equiv. $\sqrt{d}$), the corresponding ellipsoid is the L\"{o}wner ellipsoid.

In order to define a distance to any affine subspace, we first need the following ingredients.
\begin{definition}[Orthogonal matrices]
Let $d>j\ge 1$ be integers.
We say $\X\in\mathbb{R}^{d\times j}$ is an orthogonal matrix if $\X^\top\X=\I_j$.
We use $\calV_j\subseteq\mathbb{R}^{d\times j}$ to denote the set of all $d\times j$ orthogonal matrices.
\end{definition}

\begin{definition}[$j$-dimensional subspace]
Let $d>j\ge 1$ be integers and let $\v\in\mathbb{R}^d$.
Let $\X\in\calV_j$ and $\Y\in\calV_{d-j}$ such that $\Y^\top\X=0^{(d-j)\times j}$ and $\X^\top\Y=0^{j\times(d-j)}$.
Let $H(\X,\v):=\{\X\X^\top\p+\v\,|\,\p\in\mathbb{R}^d\}$ denote the $j$-dimensional affine subspace $H$ that is spanned by the column space of $\X$ and offset by $\v$. 
Let $\calH_j:=\{H(\X,\v)\,|\X\in\calV_j,\v\in\mathbb{R}^d\}$ denote the set of all $j$-affine subspaces in $\mathbb{R}^d$.
\end{definition}
We use $\dist(H(\X,\v),\p):=\|(\p-\v)^\top\Y\|_2$ to denote the distance between any point $\p\in\mathbb{R}^d$ and the $j$-dimensional affine subspace $H(\X,\v)$, where here $\Y \in \mathbb{R}^{d \times (d - j)}$ such that $\Y^\top\X = 0^{(d-j) \times j}$ .

We now define the term \emph{query space} which will aid us in simplifying the proofs as well as the corresponding theorems.
\begin{definition}[query space]
Let $1\le j<d<n$ be positive integers and let $P\subseteq\mathbb{R}^d$ be a set of $n$ points such that $\Span(P)=\mathbb{R}^d$.
Then for the union of all $j$-affine subspaces $\calH_j$, the tuple $(P,\calH_j,\dist)$ is called a \emph{query space}.
\end{definition}

Following the previous definition, we now can define the notion of $L_\infty$ coreset and $L_2$ coreset.
\begin{definition}[$L_\infty$ coreset]
Let $j\in[d-1]$, $\eps\in(0,1)$, and $(P,\calH_j,\dist)$ be a query space.
Then a set $C \subseteq P$ is called an $L_\infty$ $\eps$-coreset with respect to the query space $(P,\calH_j,\dist)$ if for every $\X\in\calV_j$ and $\v\in\mathbb{R}^d$, $\max_{\p\in P}\dist(H(\X,\v),\p)\le(1+\eps)\max_{\p\in C}\dist(H(\X,\v),\p).$
\end{definition}

\begin{definition}[$L_2$ coreset]
Let $j\in[d-1]$, $\eps\in(0,1)$, and $(P,\calH_j,\dist)$ be a query space.
Then a set $C \subseteq P$ with a weight function $w:C\to\mathbb{R}$ is called an $L_2$ $\eps$-coreset with respect to the query space $(P,\calH_j,\dist)$ if for every $\X\in\calV_j$ and $\v\in\mathbb{R}^d$,
$\sum_{\p\in P}\dist(H(\X,\v),\p)^2\le(1+\eps)\sum_{\p\in C}w(\p)\dist(H(\X,\v),\p)^2$.
\end{definition}

Finally, we define a coreset for the $k$ $j$-cylinders problem, followed by the Carath\'{e}odory's theorem which will be used in our proofs and algorithms in computing the $L_\infty$ coreset for the $(k,j)$-projective clustering problem.

\begin{definition}
A closed $j$-cylinder of radius $r$ is a set of points in $\mathbb{R}^d$ whose distance to a certain $j$-flat is at most $r$.
A set $D$ is an $L_\infty$ $C$-coreset of $P\subseteq\mathbb{R}^d$ for the $(j,k)$-projective clustering problem if $D$ is a subset of $P$ such that there exists a union of $k$ $j$-cylinders of radius $Cr$ that covers $P$ for each union of $k$ $j$-cylinders of radius $r$ that covers $D$.
\end{definition}

\begin{theorem}[Carath\'{e}odory's theorem]
\citep{Caratheodory07,Steinitz13}
For any $A\subset\mathbb{R}^d$ and $\p\in\conv(A)$, there exists $m\le d+1$ points $\p_1,\ldots,\p_m\in A$ such that $\p\in\conv(\{\p_1,\ldots,\p_m\})$.
\end{theorem}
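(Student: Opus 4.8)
The plan is to start from a finite convex representation of $\p$ and repeatedly reduce the number of points used whenever it exceeds $d+1$, each reduction driven by a linear-dependence argument. Since $\p\in\conv(A)$, by definition of the convex hull there is a \emph{finite} representation $\p=\sum_{i=1}^m\lambda_i\p_i$ with each $\p_i\in A$, each $\lambda_i>0$, and $\sum_{i=1}^m\lambda_i=1$ (so the fact that $A$ may be infinite causes no difficulty). If $m\le d+1$ we are already done, so the interesting case is $m\ge d+2$, and the goal of one reduction step is to produce a convex representation of $\p$ using at most $m-1$ points of $A$.

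First I would exploit a dimension count. The $m-1$ difference vectors $\p_2-\p_1,\ldots,\p_m-\p_1$ lie in $\mathbb{R}^d$ and number $m-1\ge d+1>d$, hence they are linearly dependent. So there are scalars $\mu_2,\ldots,\mu_m$, not all zero, with $\sum_{i=2}^m\mu_i(\p_i-\p_1)=0$. Setting $\mu_1:=-\sum_{i=2}^m\mu_i$ yields a nontrivial tuple $(\mu_1,\ldots,\mu_m)$ satisfying both $\sum_{i=1}^m\mu_i\p_i=0$ and $\sum_{i=1}^m\mu_i=0$ simultaneously.

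Next I would perturb the representation along this dependency. For every $t\in\mathbb{R}$ we have $\p=\sum_{i=1}^m(\lambda_i-t\mu_i)\p_i$, and because $\sum_i\mu_i=0$ the new coefficients still sum to $1$. Since the $\mu_i$ sum to zero but are not all zero, at least one is strictly positive, so I can set $t:=\min_{i:\mu_i>0}\lambda_i/\mu_i\ge0$. For this choice every coefficient $\lambda_i-t\mu_i$ is nonnegative — immediate when $\mu_i\le0$ (as $t\ge0$), and forced by the minimality of $t$ when $\mu_i>0$ — while the index attaining the minimum has coefficient exactly $0$. Dropping that point gives a convex representation of $\p$ using at most $m-1$ elements of $A$.

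Finally I would iterate: each step strictly decreases the number of points while preserving the property of being a convex combination of elements of $A$, so after finitely many steps the representation uses at most $d+1$ points, which is exactly the claim. The step requiring the most care is the selection of $t$: I must confirm both that some $\mu_i>0$ exists (guaranteed by $\sum_i\mu_i=0$ together with nontriviality) and that the minimizing index produces a vanishing coefficient while all remaining coefficients stay nonnegative. This is the crux that pins the bound down to $m-1$ rather than merely ``strictly fewer,'' and it is the only place where the argument could go wrong if the sign bookkeeping were mishandled.
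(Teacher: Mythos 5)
Your proof is correct and complete: it is the standard affine-dependence argument for Carath\'{e}odory's theorem, and every step (the existence of a finite representation, the linear dependence of the $m-1\ge d+1$ difference vectors, the sign bookkeeping in the choice of $t$, and the termination of the iteration) checks out. The paper does not supply its own proof of this statement --- it is quoted as a classical result with citations --- so there is nothing to compare against; your argument is the canonical one from the cited sources.
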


\section{$L_\infty$ CORESETS FOR PROJECTIVE CLUSTERING}
First, we note that~\cite{Har-Peled04} showed that $L_\infty$ coresets do not exist when the input set is $n$ points from $\mathbb{R}^d$. However in this paper, we consider the integer projective clustering problem, e.g.~\cite{EdwardsV05}, where the input points lie on a polynomial grid. 

We first give an $L_\infty$ coreset for the $(j,1)$-projective clustering problem in Section~\ref{sec:linfty:j1}. We then use our $L_\infty$ coreset for the $(j,1)$-projective clustering to inductively build an $L_\infty$ coreset for the $(j,k)$-projective clustering problem. 
\subsection{$L_\infty$ Coreset for $(j,1)$-Projective Clustering}
\label{sec:linfty:j1}
We first give an overview for our algorithm that produces a constant factor approximation coreset for the $(j,1)$-projective clustering problem. 
We again emphasize that our coreset for the $(j,1)$-projective clustering problem serves as our main technical contribution because we use Carath\'{e}odory's theorem to explicitly find a polynomial number of points to add to our coreset. 
We can then use a natural inductive argument to recursively add a polynomial number of points to create a coreset for the integer $(j,k)$-projective clustering problem. 
By contrast, even the base case for the only existing coreset for the integer $(j,k)$-projective clustering problem already contains an exponential number of points~\citep{EdwardsV05}. 

The algorithm takes as input a set $P\subseteq\mathbb{R}^d$ of $n$ points, which are promised to lie on a flat of dimension $j$, and computes a subset $C\subseteq P$, which satisfies Theorem~\ref{thm:single:projective}.
The algorithm appears in full detail in Algorithm~\ref{alg:single:projective} and first initializes $C$ to be an empty set.
Our algorithm computes $H(\W,\u)$ to be the $j$-dimensional flat that contains $P$ and sets $Q$ to be the set of points obtained by projecting $P$ onto the column space of $\W$. 
The algorithm then defines $E(\G,\c)$ to be the John-L\"{o}wner ellipsoid containing the convex hull of $Q$ and $S$ to be the set of vertices defined the axes of symmetry and the center of the scaled ellipsoid $\frac{1}{j}(E(\G,\c)-\c)+\c$, which can be explicitly and efficiently computed, and note that $|S|\le 2j$. 
From Carath\'{e}odory's theorem, we can express each point in $S\cup\{\c\}$ as a linear combination of $j+1$ points from $Q$.
We thus define $K$ to be the $\O{j^2}$ points of $Q$ needed to represent all points in $S\cup\{\c\}$ and set $C=\mu(K)$, where $\mu$ is the inverse mapping from $Q$ to $P$.
\begin{algorithm}[!htb]
\caption{Coreset for $(j,1)$-Projective Clustering}
\label{alg:single:projective}
\DontPrintSemicolon
\KwIn{$P\subseteq\mathbb{R}^d$ of $n$ points that lie on a flat of dimension $j$}
\KwOut{Coreset of size $\O{j^2}$}
$C\gets\emptyset$\;
Let $H(\W,\u) := $ a $j$-dimensional flat containing $P$\;
$Q:= \br{\W^\top\p\,|\,\p\in P}$\;
Let $\mu$ be function that maps each point $q\in Q$ to its original point in $P$\;
Let $E(\G,\c) := $ the John-L\"{o}wner ellipsoid of the convex hull of $Q$\;
$S := $ the vertices of the scaled ellipsoid $\frac{1}{j}\left(E(\G,\c)-\c \right)+\c$\;
\For{each $\s\in S\cup\{\c\}$}{
$K_{\s} := $ be at most $j+1$ points from $Q$ whose convex hull contains $\s$ \;
$C :=  C\cup\mu\left(K_{\s}\right)$\;
}
\Return{$C$}\;
\end{algorithm}
We first prove the following structural property that follows from Carath\'{e}odory's theorem.
\begin{lemma}
\label{lem:conv:ineq}
Let $d,\ell,m\ge 1$ be integers.
Let $\p\in\mathbb{R}^d$ and $A\subseteq\mathbb{R}^d$ be a set of $m$ points with $\p\in\conv(A)$ so that there exists $\alpha:A\to[0,1]$ such that $\sum_{\q\in A}\alpha(\q)=1$ and $\sum_{\q\in A}\alpha(\q)\cdot\q=\p$.
Then for every $\Y\in\mathbb{R}^{d\times\ell}$ and $\v\in\mathbb{R}^{\ell}$, $\|\p^\top\Y-\v\|_2\le\max_{\q\in A}\|\q^\top\Y-\v\|_2.$
\end{lemma}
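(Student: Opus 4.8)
The plan is to recognize the claim as a direct consequence of the convexity of the map $\x\mapsto\|\x^\top\Y-\v\|_2$, which I would establish by an elementary triangle-inequality computation rather than by invoking a general convexity theorem. First I would exploit the hypothesis that $\alpha$ is a genuine convex combination, namely $\sum_{\q\in A}\alpha(\q)=1$, to rewrite the offset $\v$ as $\v=\bigl(\sum_{\q\in A}\alpha(\q)\bigr)\v$. Combined with $\p=\sum_{\q\in A}\alpha(\q)\,\q$ and the linearity of the map $\x\mapsto\x^\top\Y$, this lets me express the inner quantity as a single convex combination of the per-point residuals:
\[
\p^\top\Y-\v=\sum_{\q\in A}\alpha(\q)\bigl(\q^\top\Y-\v\bigr).
\]

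Next I would apply the triangle inequality for the Euclidean norm to this sum, using homogeneity to pull the nonnegative weights $\alpha(\q)$ outside each norm, so as to obtain $\|\p^\top\Y-\v\|_2\le\sum_{\q\in A}\alpha(\q)\,\|\q^\top\Y-\v\|_2$. Finally, bounding each residual norm by $\max_{\q\in A}\|\q^\top\Y-\v\|_2$ and using $\sum_{\q\in A}\alpha(\q)=1$ a second time collapses the weighted average into the maximum, which yields the desired inequality $\|\p^\top\Y-\v\|_2\le\max_{\q\in A}\|\q^\top\Y-\v\|_2$.

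I do not anticipate a genuine obstacle here: the only point requiring care is the bookkeeping that makes the residual a true convex combination, which is precisely why the two uses of $\sum_{\q\in A}\alpha(\q)=1$ are essential, namely once to absorb $\v$ into the sum and once to collapse the weighted average into a maximum; dropping either would break the argument. The finiteness of $A$ and the particular value of $\ell$ play no role beyond guaranteeing that the expressions are well defined, so the statement holds verbatim for every $m$ and $\ell$. In the intended application this lemma is exactly what certifies that a point $\p$ represented (via Carath\'{e}odory's theorem) as a convex combination of coreset points $K_{\s}$ never exceeds the connection cost of those representatives, which is the structural fact driving the correctness of Algorithm~\ref{alg:single:projective}.
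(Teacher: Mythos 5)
Your proposal is correct and follows essentially the same route as the paper's proof: absorb $\v$ into the sum using $\sum_{\q\in A}\alpha(\q)=1$, apply the triangle inequality with homogeneity to obtain the weighted average of residual norms, and collapse to the maximum. No gaps.
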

\begin{proof}
Since we can write $\p$ as the convex combination of points $\q\in A$ with weight $\alpha(\q)$, we have
\[\|\p^\top\Y-\v\|_2=\|\left(\sum_{\q\in A}\alpha(\q)\q^\top\Y\right)-\v\|_2.\]
Moreover, we have $\sum_{\q\in A}\alpha(\q)=1$, so we can decompose $\v$ into
\[\|\p^\top\Y-\v\|_2=\|\sum_{\q\in A}\alpha(\q)\left(\q^\top\Y-\v\right)\|_2.\]
By triangle inequality,
\[\|\p^\top\Y-\v\|_2\le\sum_{\q\in A}\alpha(\q)\|\q^\top\Y-\v\|_2\le\max_{\q\in A}\|\q^\top\Y-\v\|_2.\]
\end{proof}

We use Lemma~\ref{lem:conv:ineq} to show that Algorithm~\ref{alg:single:projective} gives a coreset for the $(j,1)$-projective clustering problem as summarized below. In addition, we show that our $\ell_\infty$-coreset is also applicable towards the $\left(j,z\right)$-clustering where $j$ denotes the dimensionality of the subspace, and $z$ denotes the power of the distance function. For instance, $z \in [1,2)$ is used for obtaining robust clustering, which is useful against outliers.
\begin{theorem}
\label{thm:single:projective}
Let $j\in[d-1]$, $z\ge 1$, and let $(P,\calH_j,\dist)$ be a query space, where $P$ lies in a $j$-dimensional flat.
Let $C\subseteq P$ be the output of Algorithm~\ref{alg:single:projective}.
Then $|C|=\O{j^2}$ and for every $H(\X,\v)\in\calH_j$, we have
$\max_{\p\in P}\dist(\p,H(\X,\v))^z\le 2^{z+1}j^{1.5z}\max_{\q\in C}\dist(\q,H(\X,\v))^z.$
\end{theorem}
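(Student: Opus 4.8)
The plan is to reduce the statement to a quantitative, \emph{affine} strengthening of Lemma~\ref{lem:conv:ineq}. First observe that $\dist(\p,H(\X,\v))=\|(\p-\v)^\top\Y\|_2$ is the Euclidean norm of an \emph{affine} image of $\p$. Hence, if $\p=\sum_i\gamma_i\p_i$ is an affine combination of points $\p_i\in C$, i.e.\ $\sum_i\gamma_i=1$ with arbitrary signs, then pulling the affine map inside the norm and applying the triangle inequality yields $\dist(\p,H(\X,\v))\le\left(\sum_i|\gamma_i|\right)\max_i\dist(\p_i,H(\X,\v))$. So it suffices to write every $\p\in P$ as an affine combination of points of $C$ whose coefficients have $\ell_1$-norm at most $2j^{3/2}$; maximizing over $\p$ proves the $z=1$ case, and raising to the $z$-th power handles general $z\ge1$.

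Since $\conv(P)$ is degenerate in $\mathbb{R}^d$, I would run the construction in the intrinsic $j$-dimensional coordinates, i.e.\ on $Q=\{\W^\top\p\mid\p\in P\}\subseteq\mathbb{R}^j$, and transfer back at the end. The map $q\mapsto\W q+\b$, with $\b=\u-\W\W^\top\u$, is affine and sends each $q=\W^\top\p$ back to its original $\p$; because affine maps preserve affine combinations together with their coefficients, any affine identity among points of $Q$ pushes forward verbatim to the corresponding points of $P$. Thus the entire argument may be carried out on $Q$ and all coefficient bounds are preserved.

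The geometric heart is to certify that each $q\in\conv(Q)$ lies in the convex hull of suitably dilated copies of the points in $S$. Applying Theorem~\ref{thm:loewner:ellipsoid} in dimension $j$ gives $\tfrac1j(E(\G,\c)-\c)+\c\subseteq\conv(Q)\subseteq E(\G,\c)$, so the axis endpoints $\s\in S$ of the inner scaled ellipsoid, as well as $\c$, lie in $\conv(Q)$ and, by Carath\'eodory's theorem, are convex combinations of the $\le j+1$ points $K_\s$ (resp.\ $K_\c$) defining $C$; this also gives $|C|\le(2j+1)(j+1)=\O{j^2}$. The key estimate is that dilating the scaled axis endpoints about $\c$ by the factor $j^{3/2}$ produces exactly the points $\c\pm\sqrt{j}\,a_i v_i$, where $a_i,v_i$ are the semi-axes and axis directions of $E(\G,\c)$, and the convex hull of these is a cross-polytope \emph{containing} $E(\G,\c)$: writing $x=\c+\sum_i t_iv_i$, the constraint $(x-\c)^\top\G(x-\c)\le1$ gives $\sum_i t_i^2/a_i^2\le1$, whence by Cauchy--Schwarz $\sum_i|t_i|/a_i\le\sqrt{j}$, which is precisely membership in that cross-polytope. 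Therefore $q\in\conv(Q)\subseteq E(\G,\c)\subseteq\conv\{\c+j^{3/2}(\s-\c):\s\in S\}$, so $q=\sum_{\s\in S}\beta_\s\big(\c+j^{3/2}(\s-\c)\big)$ with $\beta_\s\ge0$ and $\sum_\s\beta_\s=1$.

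Finally I would expand this representation through the Carath\'eodory convex combinations of each $\s$ and of $\c$, collecting the coefficient of every point of $K$. Rewriting $q=j^{3/2}\sum_\s\beta_\s\s+(1-j^{3/2})\c$ and substituting, the signed coefficients sum to $1$ (a genuine affine combination), while their $\ell_1$-norm is at most $j^{3/2}\sum_\s\beta_\s+(j^{3/2}-1)=2j^{3/2}-1\le 2j^{3/2}$, since the positive contributions total $j^{3/2}$ and the contribution through $\c$ (with negative coefficient $1-j^{3/2}$) totals $j^{3/2}-1$ in absolute value. Feeding $\sum_i|\gamma_i|\le2j^{3/2}$ into the affine version of Lemma~\ref{lem:conv:ineq} yields $\dist(\p,H(\X,\v))\le 2j^{3/2}\max_{\q\in C}\dist(\q,H(\X,\v))$, and raising to the $z$-th power gives the stated $2^{z+1}j^{1.5z}$ bound (the clean argument delivers $2^zj^{1.5z}$, and the extra factor of two leaves room for using an efficiently computed $\alpha$-rounding in place of the exact John--L\"owner ellipsoid). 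I expect the main obstacle to be fixing the correct dilation factor and verifying the cross-polytope containment $E(\G,\c)\subseteq\conv\{\c\pm\sqrt{j}\,a_iv_i\}$, while tracking that the blow-up in the $\ell_1$-norm of the coefficients is only $j^{3/2}$ and not larger.
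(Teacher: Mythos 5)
Your proof is correct, and while it rests on the same geometric skeleton as the paper's argument --- the L\"owner ellipsoid of the projected set $Q$, Carath\'eodory representations of the scaled axis endpoints $S\cup\{\c\}$, and the crucial dilation factor $j^{3/2}$ --- the analytic bookkeeping is genuinely different and, in my view, cleaner. The paper writes each $\q\in Q$ as a point on a segment $\gamma\s+(1-\gamma)(j^{1.5}(\s-\c)+\c)$ for some $\s\in\conv(S)$, then invokes convexity of $\|\cdot\|_2^z$ (Jensen), splits the offset $\a$ as $j^{1.5}\a+(1-j^{1.5})\a$, and sums $j^{1.5z}+(j^{1.5}-1)^z\le2j^{1.5z}$ to reach $2^{z+1}j^{1.5z}$; only afterward does it pass from $S$ to $K_\s$ via Lemma~\ref{lem:conv:ineq}. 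You instead collapse everything into a single \emph{signed affine} combination of the points of $K$ whose coefficients sum to $1$ and have $\ell_1$-norm at most $2j^{3/2}-1$, and apply one affine strengthening of Lemma~\ref{lem:conv:ineq}; raising to the $z$-th power at the very end gives $2^zj^{1.5z}$, a factor of $2^z$ sharper than the stated bound (which your result of course still implies). Two further points in your favor: you give an explicit Cauchy--Schwarz proof of the containment $E(\G,\c)\subseteq\conv\{\c+j^{3/2}(\s-\c):\s\in S\}$ (the cross-polytope step), which the paper asserts via the chain in~\eqref{eqn:conv:contain} without proof; and you consistently take $S$ to be the vertices of the \emph{scaled} ellipsoid $\frac1j(E(\G,\c)-\c)+\c$, as the algorithm specifies, which is needed for those points to lie in $\conv(Q)$ so that Carath\'eodory applies (the paper's proof at one point misstates $S$ as the vertices of $E(\G,\c)$ itself). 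No gaps.
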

\begin{proof}
To show the first part of the claim, note that since the ellipsoid $E(\G,\c)$ has at most $2j$ vertices and each vertex point of the ellipsoid can be represented a convex combination fo at most $j+1$ points from $Q$ by Carath\'{e}odory's theorem, then the number of points in $C$ is at most $2(j+1)^2$, so that $|C|=\O{j^2}$.

To show the second part of the claim, we first set $H(\W,\u)$ to be the $j$-flat containing $P$ and $\Y\in\calH_{d-j}$ so that $\Y^\top\X=0^{(d-j)\times j}$ and $\X^\top\Y=0^{j\times(d-j)}$.
Notice that each $\p\in P$ satisfies
\[\dist(p,H(\X,\v))^z=\|(\p-\v)^\top\Y\|_2^z=\|(\p-\u+\u-\v)^\top\Y\|_2^z.\]
Since $\p$ lies in the affine flat $H(\W,\u)$, then we have
\begin{align}
\label{eqn:bound}
\dist(p,H(\X,\v))^z=\|\left(\W\W^\top(\p-\u)+\u-\v\right)^\top\Y\|_2^z.
\end{align}
We now rely on properties of Carath\'{e}odory's Theorem and the John-L\"{o}wner ellipsoid to bound (\ref{eqn:bound}). 
First note that
\[\|\left(\W\W^\top(\p-\u)+\u-\v\right)^\top\Y\|_2^z = \|\left(\W\W^\top\p-\W\W^\top\u+\u-\v\right)^\top\Y\|_2^z.\]

Recall that for each $\p\in P$, there exists $\q\in Q$ such that $\q=\W^\top\p$ and
\[\|\left(\W\W^\top(\p-\u)+\u-\v\right)^\top\Y\|_2^z=\|\left(\W\q-\W\W^\top\u+\u-\v\right)^\top\Y\|_2^z.\]
Since $S$ is the set of vertices of $E(\G,\c)$, we have by the definition of the John-L\"{o}wner ellipsoid that
\[\frac{1}{j}(E(\G,\c)-\c)+\c\subseteq\conv(Q)\subseteq E(\G,\c).\]
Thus $S\subseteq\conv(S)\subseteq\conv(Q)$ and by Carath\'{e}odory's theorem, for each $\s\in S$, there exists a set $K_\s$ of at most $j+1$ points such that $\s\in\conv(K_\s)$.
By Lemma~\ref{lem:conv:ineq},
\[\|\s\W^\top\Y\|_2^z\le\max_{\q\in K_\s}\|q^\top\W^\top\Y\|_2^z.\]
We also have
\[\frac{1}{\sqrt{j}}\cdot\frac{E(\G,\c)-\c}{j}+\c\subseteq\conv(S)\subseteq\frac{E(\G,\c)-\c}{j}+\c.\]
Therefore,
\begin{align}
\label{eqn:conv:contain}
\conv(S)\subseteq\conv(Q)\subseteq E(\G,\c)\subseteq j^{1.5}(\conv(S)-\c)+\c.
\end{align}
Thus for every $\q\in Q$, there exists $\s\in\conv(S)$ and $\gamma\in[0,1]$ such that
\[\q=\gamma\s+(1-\gamma)(j^{1.5}(\s-\c)+\c).\]
For $\a=\u^\top\W\W^\top\Y-\u^\top\Y-\v^\top\Y$, we then have
\[\|\q^\top\W^\top\Y+\a\|_2^z=\|(\gamma\s+(1-\gamma)(j^{1.5}(\s-\c)+\c)\W^\top\Y+\a\|_2^z.\]
Since $z\ge 1$, then $\|\cdot\|_2^z$ is a convex function.
Thus by Jensen's inequality,
\[\|\q^\top\W^\top\Y+\a\|_2^z \le \gamma\|\s\W^\top\Y+\a\|_2^z+(1-\gamma)\|j^{1.5}\s^\top\W^\top\Y+(1-j^{1.5})\c^\top\W^\top\Y+\a\|_2^z.\]

Since $\a=j^{1.5}\a+(1-j^{1.5})\a$, then
\[\|j^{1.5}\s^\top\W^\top\Y+(1-j^{1.5})\c^\top\W^\top\Y+\a\|_2^z\le2^zj^{1.5z}\|\s^\top\W^\top\Y+\a\|_2^z+2^z(j^{1.5}-1)^z\|\c^\top\W^\top\Y+\a\|_2^z.\]

Since $\c\in\conv(S)$ by \eqref{eqn:conv:contain}, then
\[\|\c^\top\W^\top\Y+\a\|_2^z\le\max_{\s\in\conv(S)}\|\s^\top\W^\top\Y+\a\|_2^z.\]
Since $j^{1.5z}+(j^{1.5}-1)^z\le2j^{1.5z}$, then we have that for every $\q\in Q$,
\[\|\q^\top\W^\top\Y+\a\|_2^z\le2^{z+1}j^{1.5z}\max_{\s\in S}\|\s^\top\W^\top\Y+\a\|_2^z.\]
Thus we have for every $\s\in S$,
\begin{equation*}
\begin{split}
\|\s^\top\W^\top\Y+\a\|_2^z&\le\max_{\q\in K}\|\q^\top\W^\top\Y+\a\|_2^z\\
&\le\max_{\p\in C}\|\p^\top\W\W^\top\Y+\a\|_2^z
\end{split}
\end{equation*}
Because $\a=\u^\top\W\W^\top\Y-\u^\top\Y-\v^\top\Y$, we have
\begin{equation*}
\begin{split}
\dist(\p,H(\X,\v))^z&\le 2^{z+1}j^{1.5z}\max_{\p\in C}\|\p^\top\W\W^\top\Y+\a\|_2^z\\
&=2^{z+1}j^{1.5z}\max_{\p\in C}\|(\W\W^\top(\p-\u))^\top\Y+\u^\top\Y-\v^\top\Y\|_2^z.
\end{split}
\end{equation*}

Since $(\p-\u)\in P$ and $P$ lies within $H(\W,\u)$, then
\begin{align*}
\dist(\p,H(\X,\v))^z&\le2^{z+1}j^{1.5z}\max_{\p\in C}\|\p^\top\Y-\u^\top\Y+\u^\top\Y-\v^\top\Y\|_2^z\\
&=\max_{\p\in C}\|\p^\top\Y-\v^\top\Y\|_2^z\\
&=2^{z+1}j^{1.5z}\max_{\p\in C}\dist(\p,H(\X,\v))^z.
\end{align*}
\end{proof}

\subsection{$L_\infty$ Coreset for $(j,k)$-Projective Clustering}
Our coreset construction is recursive. Generally speaking, we construct a coreset $D_k$ for $(j,k)$-projective clustering from a coreset $D_{k-1}$ for $(j,k-1)$-projective clustering. For the base case, we show how to construct a coreset $D_1$ for $(j,1)$-projective clustering in Theorem~\ref{thm:single:projective}. 
Now for $k\ge 2$, given a coreset $D_{k-1}\subset P$ for $(j,k-1)$-projective clustering, the construction of $D_k$ has $j+1$ levels and the $i$-th level will specify $i+1$ points $\v_0,\ldots,\v_i$ and a corresponding point set $P[\v_0,\ldots,\v_i]\subset P$. 
We first add $D_{k-1}$ into $D_k$ and separately initialize Level 0 with $\v_0$ being each point of $D_{k-1}$ and define $P[\v_0]=P$. Crucially, each of the $j+1$ levels only adds to the coreset a number of points that is polynomial in $j\le d-1$ at each level due to the base case using our new coreset for $(j,1)$-projective clustering based on Carath\'{e}odory's theorem. 
Hence, the total number of points is polynomial in $d$ but exponential in $j$. 
By contrast, existing coreset constructions of \cite{EdwardsV05} use partitions that must be analyzed over $d$ levels due to their lack of an efficient coreset for their base case; thus their size is exponential in $d$. 

\textbf{Level 0:}
Given any choice of $\v_0$ from $D_{k-1}$, we define $P[\v_0]:=P\subset[\Delta]^d$, we have $\dist(\p,\v_0)\in[1,\Delta\sqrt{d}]$ for every $\p\in P[\v_0]$.
We can partition $P[\v_0]$ into $\ell=\O{\log(d\Delta)}$ sets $K_{0,0},K_{0,1},\ldots,K_{0,\ell}$ such that $K_{0,0}=\{\v_0\}$ and $K_{0,i}=\{\p\in P[\v_0]\,:\,2^{i-1}\le\dist(\p,\v_0)\le2^i\}$ for $i\ge 1$.
Intuitively, this can be seen as partitioning the points of $P$ into sets with exponentially increasing distance from $\v_0$.
For each $K_{0,i}$, we construct an $L_\infty$-coreset $D_{0,i}$ of $K_{0,i}$ for the $(j,k-1)$-projective clustering problem and add $D_{0,i}$ into $D_k$.
We then separately select $\v_1$ to be any point in $D_{0,i}$ across all $i\in[\ell]$ and set $P[v_0,v_1]=\cup_{x=0}^i K_{0,x}$.

\textbf{Level $t$, for $t\in[1,j]$:}
Given $\v_0,\ldots,\v_t$ and $P[\v_0,\ldots,\v_t]$, let $A_t$ denote the affine subspace spanned by $\v_0,\ldots,\v_t$.
We recall the following structural properties about the convex hull of affine subspaces.
\begin{lemma}[\citep{EdwardsV05}, Lemma 45 in~\citep{FeldmanSS20}]
\label{lem:disc:affine}
Let $\Delta\ge 2$, $k$ be a positive integer, and $j\le d-1$ be a positive integer.
Let $\calQ_{j,k}$ be the family of all sets of $k$ affine subspaces of $\mathbb{R}^d$ with dimension $j$.
Let $\A\in\{-\Delta,\ldots,\Delta\}^{n\times d}$.
Then for every $H\in\calH_j$, we have either $\dist(H,\A)=0$ or $\dist(H,\A)\ge\frac{1}{(d\Delta)^{cj}}$, for some universal constant $c>0$.
\end{lemma}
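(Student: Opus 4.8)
The plan is to reduce to the case where $H$ is the affine hull of $j+1$ affinely independent points of $\A$ and then read off the separation from a single volume computation. This reduction is what makes the quantifier ``for every $H\in\calH_j$'' usable: the flats that actually arise in the recursion are affine spans $\Span(\v_0,\dots,\v_t)$ of chosen input points, which are integer points, so it suffices to treat $H=\mathrm{aff}(\q_0,\dots,\q_j)$ for affinely independent $\q_0,\dots,\q_j\in\{-\Delta,\dots,\Delta\}^d$ (extending a lower-dimensional flat to dimension $j$ through additional integer points when needed). For such an $H$, if $\dist(H,\A)>0$ then some row $\p$ of $\A$ lies off $H$, and since $\dist(H,\A)$ is the largest row distance it is at least $\dist(\p,H)$; hence it is enough to lower bound $\dist(\p,H)$ for a single integer point $\p\notin H$.

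The second step is the distance-as-volume-ratio identity. Writing $V_j$ for the $j$-dimensional volume of the parallelepiped spanned by $\q_1-\q_0,\dots,\q_j-\q_0$ and $V_{j+1}$ for the $(j{+}1)$-dimensional volume of the parallelepiped spanned by these together with $\p-\q_0$, computing the height of $\p$ over the base $j$-simplex on $\q_0,\dots,\q_j$ gives
\[
\dist(\p,H)=\frac{V_{j+1}}{V_j}.
\]
I would then bound the two volumes separately. For the numerator, let $M$ be the $d\times(j{+}1)$ integer matrix whose columns are $\q_1-\q_0,\dots,\q_j-\q_0,\p-\q_0$; then $V_{j+1}=\sqrt{\det(M^\top M)}$, and $M^\top M$ is an integer matrix. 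When $\p\notin H$ these $j+1$ columns are linearly independent, so $\det(M^\top M)$ is a strictly positive integer, forcing $V_{j+1}\ge 1$. For the denominator, let $N$ be the $d\times j$ matrix with columns $\q_1-\q_0,\dots,\q_j-\q_0$; each difference has entries in $\{-2\Delta,\dots,2\Delta\}$, so $\|\q_i-\q_0\|_2^2\le 4d\Delta^2$, and Hadamard's inequality applied to $\det(N^\top N)$ gives $V_j=\sqrt{\det(N^\top N)}\le\prod_{i=1}^j\|\q_i-\q_0\|_2\le(2\sqrt{d}\,\Delta)^j$.

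Combining the two bounds yields $\dist(\p,H)\ge(2\sqrt{d}\,\Delta)^{-j}\ge(d\Delta)^{-cj}$, where any $c$ with $2\sqrt{d}\,\Delta\le(d\Delta)^c$ works (for instance $c=2$, since $2\le d^{3/2}\Delta$ whenever $\Delta\ge 2$), which is exactly the claimed gap. The main obstacle is not the arithmetic but the two structural points that make it go through: first, justifying the reduction to integer-spanned flats so that the universal statement over $\calH_j$ is meaningful and matches how the lemma is invoked in the recursion; and second, the integrality argument for the numerator, which hinges on the fact that the Gram matrix of integer vectors has an integer determinant that is positive precisely when the vectors are independent, so a nonzero $(j{+}1)$-volume of integer vectors cannot drop below $1$. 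Everything else, namely the volume-ratio identity and Hadamard's inequality, is standard.
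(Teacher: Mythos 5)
The paper does not actually prove this lemma --- it imports it by citation from \citep{EdwardsV05} and Lemma~45 of \citep{FeldmanSS20} --- so there is no in-paper argument to compare against; what you have written is a correct self-contained proof, and it follows the same determinant/volume route as the cited sources. Your two structural observations are exactly the right ones. First, the statement as printed (``for every $H\in\calH_j$'') is literally false for arbitrary flats, since one can perturb a flat to lie at any positive distance from the grid; the restriction to flats that are affine hulls of integer grid points is necessary, and it matches precisely how the lemma is invoked in the paper (the flats $A_t$ are spanned by points $\v_0,\dots,\v_t\in P$). Second, the chain $\dist(\p,H)=V_{t+1}/V_t$, with $V_{t+1}=\sqrt{\det(M^\top M)}\ge 1$ because the Gram determinant of linearly independent integer vectors is a positive integer, and $V_t\le(2\sqrt{d}\,\Delta)^t$ by Hadamard, is exactly the standard argument and yields $(d\Delta)^{-cj}$ with $c=2$. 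One small simplification: your extension of a lower-dimensional flat to dimension $j$ through additional integer points is unnecessary (and would require care to avoid passing the extension through $\p$); since $A_t$ is spanned by $t+1\le j+1$ points, you can run the identical ratio argument with $t$ in place of $j$ and conclude $\dist(\p,A_t)\ge(2\sqrt{d}\,\Delta)^{-t}\ge(d\Delta)^{-cj}$ directly because $t\le j$. Also note the paper uses the lemma in per-point form ($\dist(\p,A_t)$ for each $\p$), which is exactly what your core bound establishes, so the ambiguity in what $\dist(H,\A)$ aggregates over the rows of $\A$ is immaterial.
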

By Lemma~\ref{lem:disc:affine}, we have that for every $\p\in P[\v_0,\ldots,\v_t]$, that $\dist(\p,A_t)$ is either $0$ or in the range $[1/(d\Delta)^{cj},2\Delta\sqrt{d}]$.
Thus we can once again partition $P[\v_0,\ldots,\v_t]$ into $\O{j\log(d\Delta)}$ subsets $K_{t,0},\ldots,K_{t,\ell}$ such that $K_{t,0}=P[\v_0,\ldots,\v_t]\cap A_t$ and for each integer $i\in[\ell]$, $K_{t,i}:=\{\p\in P[\v_0,\ldots,\v_t]\,:\,2^{i-1}c_j/\Delta^j\le\dist(\p,A_t)<2^i c_j/\Delta^j\}.$
For each $K_{t,i}$, we construct an $L_\infty$-coreset $D_{t,i}$ of $K_{t,i}$ for $(j,k-1)$-projective clustering and add $D_{t,i}$ to $D_k$.
We then separately select $\v_{t+1}$ to be any point in $D_{t,i}$ across all $i\in[\ell]$ and set $P[v_0,\ldots,\v_{t+1}]=\cup_{x=0}^i K_{t,x}$. We remark that we terminate at level $j+1$. Finally, in what follows, we give a bound on the size of our $L_\infty$ coreset.

\begin{restatable}[Coreset size]{lemma}{lemcoresetsize}
\label{lem:coresetsize}
Let $f(k)=|D_k|$ denote the size of the coreset $D_k$ formed at level $k$ for $(j,k)$-projective clustering.
Then $f(k)=\left(8j^3\log(d\Delta)\right)^{\O{jk}}$.
\end{restatable}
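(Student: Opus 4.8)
The plan is to prove the bound by induction on $k$, converting the recursive construction into a recurrence for $f(k)=|D_k|$ and then unrolling it. The base case $k=1$ is immediate from Theorem~\ref{thm:single:projective}, which gives $f(1)=\O{j^2}$; I would simply observe that $\O{j^2}\le(8j^3\log(d\Delta))^{\O{j}}$, so the claimed form already holds at $k=1$.

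For the inductive step I would fix the recursive call that builds $D_k$ from a coreset $D_{k-1}$ and bound the number of points this call contributes beyond the single copy of $D_{k-1}$ it inserts. The accounting proceeds level by level over the $j+1$ levels. At level $t$ the current set $P[\v_0,\dots,\v_t]$ is partitioned by distance to the affine hull $A_t=\Span(\v_0,\dots,\v_t)$, and the key quantitative input is Lemma~\ref{lem:disc:affine}, which forces every nonzero such distance to lie in $[(d\Delta)^{-cj},\,2\Delta\sqrt d]$. Passing to dyadic scales, this range splits into $\O{j\log(d\Delta)}$ buckets (and only $\O{\log(d\Delta)}$ at level $0$, where distances lie in $[1,\Delta\sqrt d]$), so each level uses at most $\ell=\O{j\log(d\Delta)}$ partition classes $K_{t,i}$. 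For each class we add one $(j,k-1)$-coreset $D_{t,i}$, whose size is at most $f(k-1)$ by the inductive hypothesis.

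The heart of the argument is a clean branching count: I would show that, across all $j+1$ levels, the number of distinct coresets $D_{t,i}$ that get added is at most $\prod_{t=0}^{j}\ell=(\O{j\log(d\Delta)})^{j+1}$, i.e. one factor of $\ell$ per level coming from the choice of partition class that determines the next set $P[\v_0,\dots,\v_{t+1}]$. Multiplying by the per-coreset bound $f(k-1)$ and adding the single copy of $D_{k-1}$ yields
\[
f(k)\ \le\ f(k-1)+\big(\O{j\log(d\Delta)}\big)^{j+1}\,f(k-1)\ \le\ (8j^3\log(d\Delta))^{\O{j}}\,f(k-1).
\]
Unrolling this multiplicative recurrence down to the base case gives $f(k)\le(8j^3\log(d\Delta))^{\O{j}\cdot(k-1)}f(1)=(8j^3\log(d\Delta))^{\O{jk}}$, which is the claim.

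The step I expect to be the main obstacle is precisely this branching count. The subtlety is that each level chooses the next point $\v_{t+1}$ from the freshly built coresets, so a naive count would multiply the number of branches by $f(k-1)$ at every level and inflate the recurrence to $f(k)\approx f(k-1)^{\Theta(j)}$, which is doubly exponential in $k$ and far weaker than the stated bound. To avoid this I would argue that, for the purpose of bounding $|D_k|$, it is only the partition class (not the particular representative $\v_{t+1}$ drawn from it) that governs which subset $P[\v_0,\dots,\v_{t+1}]$, and hence which coreset, is produced, so the effective branching factor per level is $\ell=\O{j\log(d\Delta)}$ rather than $\ell\cdot f(k-1)$. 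Making this independence rigorous, and in particular verifying that the level-$0$ initialization ranging $\v_0$ over the points of $D_{k-1}$ does not smuggle in an extra factor of $f(k-1)$, is the crux; the dyadic partition count via Lemma~\ref{lem:disc:affine} and the final recurrence-solving are then routine.
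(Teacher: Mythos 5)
Your overall strategy---induction on $k$, base case $f(1)=\O{j^2}$ from Theorem~\ref{thm:single:projective}, and a per-level accounting of the $\ell=\O{j\log(d\Delta)}$ dyadic classes supplied by Lemma~\ref{lem:disc:affine}---is the same as the paper's. The genuine gap sits exactly at the step you yourself flagged as the crux, and your proposed fix does not hold. You claim that only the partition class of $\v_{t+1}$, and not the particular representative drawn from it, governs what gets built at level $t+1$. It is true that the \emph{set} $P[\v_0,\ldots,\v_{t+1}]=\cup_{x=0}^{i}K_{t,x}$ depends only on the class index $i$; but the level-$(t+1)$ partition is taken with respect to distances to $A_{t+1}$, the affine hull of the actual points $\v_0,\ldots,\v_{t+1}$. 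Two different representatives from the same class $D_{t,i}$ generally span different affine subspaces $A_{t+1}$, hence induce different partitions $K_{t+1,\cdot}$ and different coresets $D_{t+1,\cdot}$. Moreover the correctness argument (Lemma~\ref{lem:coreset:approx}) needs the level-$(t+1)$ structure to exist for \emph{every} point of $S_k\cap D_{t,b}$, since the construction cannot know in advance which representative the adversarial cylinders will hit. So the effective branching per level really is $\ell\cdot f(k-1)$, not $\ell$, and your recurrence $f(k)\le(8j^3\log(d\Delta))^{\O{j}}f(k-1)$ is unjustified.

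For comparison, the paper's proof embraces the count you call naive: it writes $f(k)\le\left(\O{j\log(d\Delta)}\cdot f(k-1)\right)^{j+1}$ and then asserts the stated bound ``by induction.'' Your instinct that this recurrence is the sticking point is well placed: unrolling it gives $\log f(k)\ge(j+1)^{k-1}\log f(1)$, i.e.\ an exponent of order $(j+1)^{k-1}$ rather than $\O{jk}$, so the multiplicative branching cannot simply be waved away. Closing the gap would require either a real argument that the coresets built for different representatives of the same class can be shared (which the construction as described does not provide), or accepting the weaker, exponential-in-$k$ exponent. As written, your proposal substitutes an independence assertion for that missing argument, and the assertion is false for this construction.
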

\begin{proof}
By Theorem~\ref{thm:single:projective}, we have that $f(1)\le2(j+1)^2\le8j^2$.
Our construction has $j+1$ levels and each level partitions the data set into $\O{j\log(d\Delta)}$ sets.
For each of the sets, we construct an $L_\infty$-coreset for $(k-1,j)$-projective clustering and each of the points in the union of the coresets to be used in the point set $P[\v_0,\ldots,\v_{k+1}]$ for the next level.
Thus we have
\[f(k)\le(\O{j\log(d\Delta)}\cdot f(k-1))^{j+1},\]
so that by induction, $f(k)\le(8j^3\log(d\Delta))^{\O{jk}}$.
\end{proof}

\noindent
To prove that our construction yields an $L_\infty$ constant-factor approximation coreset for the integer $(j,k)$-projective clustering problem, we use a structural property about the convex hull of affine subspaces. 
Informally, the property says that if $\v_0,\ldots,\v_d\in\mathbb{R}^d$ are $d+1$ affinely independent vectors that induce a sequence of affine subspaces $\A_0,\ldots,\A_d$, then under certain assumptions, the convex hull formed by $\v_0,\ldots,\v_d$ contains a translation of a scaled hyperrectangle formed by a sequence $\u_0,\ldots,\u_d$ of vectors formed by the orthogonal projection away from $\A_0,\ldots,\A_d$. 

\begin{lemma}[Lemma 1 in~\citep{EdwardsV05}]
\label{lem:affsub:convrect}
Let $\v_0,\ldots,\v_d\in\mathbb{R}^d$ be $d+1$ affinely independent vectors and for each $0\le i\le d$, let $A_i$ be the affine subspace spanned by $\v_0,\ldots,\v_i$.
Let $\w_i$ be the projection of $\v_i$ onto $A_i$ and let $\u_i=\v_i-\w_i$.
Suppose we have $\dist(\v_j,A_i)\le2\|\u_i\|_2$ for every $0\le i\le d$ and $j\ge i$.
Then there exists an absolute constant $c_d$ that only depends on $d$, so that the simplex $\conv(\v-0,\ldots,\v_d)$ contains a translation of the hyperrectangle
$\{c_d(\alpha_1\u_1+\ldots+\alpha_d\u_d\,:\,\alpha_i\in[0,1]\}.$
\end{lemma}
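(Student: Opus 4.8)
The plan is to reduce the containment to a question about barycentric coordinates under a convenient change of basis, so that the whole statement becomes one bound on a triangular matrix. First I would record the geometry of the vectors $\u_i$. Since $\u_i$ is the component of $\v_i$ orthogonal to the direction space of the affine hull of the earlier vertices, and that direction space is exactly $\Span(\u_1,\ldots,\u_{i-1})$, a short induction shows that $\u_1,\ldots,\u_d$ are mutually orthogonal and that $\Span(\u_1,\ldots,\u_i)$ is the direction space of $A_i$. Working in the orthonormal frame $\hat\u_r=\u_r/\|\u_r\|$ centered at $\v_0$, each difference $\v_i-\v_0$ has zero coordinate along $\hat\u_r$ for every $r>i$ (because $\v_i-\v_0\in\Span(\u_1,\ldots,\u_i)$) and coordinate exactly $\|\u_i\|$ along $\hat\u_i$. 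Consequently the matrix $\M$ whose columns are $\v_1-\v_0,\ldots,\v_d-\v_0$ is upper triangular with diagonal $\D=\mathrm{diag}(\|\u_1\|,\ldots,\|\u_d\|)$, the simplex is precisely $\{\v_0+\M\lambda:\lambda\ge 0,\ \mathbf{1}^\top\lambda\le 1\}$, and, since $\u_i=\|\u_i\|\hat\u_i=\D e_i$, the target box is $\{\v_0+c_d\D\alpha:\alpha\in[0,1]^d\}$ up to the unknown translation.

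The crucial step is to show that, after rescaling by $\D$, the matrix $\M$ is well conditioned, and this is exactly where the distance hypotheses enter. Write $\M=\D+\U$ with $\U$ strictly upper triangular; the $(i,j)$ entry of $\U$ for $i<j$ is the $\hat\u_i$-coordinate of $\v_j-\v_0$, whose absolute value is at most $\dist(\v_j,A_{i-1})$ because $\hat\u_i$ is orthogonal to the direction space of $A_{i-1}$ and $\v_0\in A_{i-1}$. The hypotheses of the form $\dist(\v_j,A_\cdot)\le 2\|\u_i\|$ are precisely what bound this quantity by $2\|\u_i\|$, so every entry of the strictly-upper-triangular matrix $\D^{-1}\U$ is at most $2$ in absolute value. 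Because $\D^{-1}\U$ is nilpotent, $\M^{-1}\D=(\I+\D^{-1}\U)^{-1}=\sum_{k=0}^{d-1}(-\D^{-1}\U)^k$ is a unit upper triangular matrix all of whose entries are bounded by some quantity $B_d$ depending only on $d$.

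To finish, observe that a box point $\v_0+\b+c_d\D\alpha$ lies in the simplex iff its coordinate vector $\lambda=\mu_0+c_d(\M^{-1}\D)\alpha$ satisfies $\lambda\ge 0$ and $\mathbf 1^\top\lambda\le 1$, where $\mu_0=\M^{-1}\b$ encodes the translation. Since $|((\M^{-1}\D)\alpha)_i|\le d\,B_d$ for all $\alpha\in[0,1]^d$, I would simply take $\mu_0=c_d\,d\,B_d\,\mathbf 1$ and choose $c_d$ of order $1/(d\,B_d)$, a quantity depending only on $d$; then $\lambda\ge 0$ holds because the offset $c_d\,d\,B_d$ dominates $c_d|((\M^{-1}\D)\alpha)_i|$ coordinatewise, and $\mathbf 1^\top\lambda\le 2c_d\,d^2B_d\le 1$. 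This exhibits a translate of the box $\{c_d\sum_i\alpha_i\u_i:\alpha_i\in[0,1]\}$ inside $\conv(\v_0,\ldots,\v_d)$ with $c_d$ depending only on $d$, as claimed.

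I expect the main obstacle to be the second step: the entire content of the lemma lives in the bound on the off-diagonal coordinates. Without the $2\|\u_i\|$ hypothesis the entries of $\D^{-1}\U$ could be arbitrarily large relative to $\|\u_i\|$, the rescaled matrix would be ill conditioned, and no box with side lengths proportional to the $\|\u_i\|$ would need to fit; so the key is to pin down exactly which distance each hypothesis controls and to verify that it bounds the coordinate of a later vertex along each earlier axis $\hat\u_i$ in units of $\|\u_i\|$. A secondary point requiring care is matching the paper's index convention for $A_i$ and $\u_i$ (the projection defining $\u_i$ must be onto the hull of the strictly earlier vertices, so that $\u_i\neq 0$); affine independence guarantees all the $\|\u_i\|$ are positive and the triangular matrices above are invertible. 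An equivalent but messier route is induction on $d$: inscribe the box in the facet $\conv(\v_0,\ldots,\v_{d-1})$ and extrude it toward the apex $\v_d$, where the same coordinate bound is needed to control how much the shrinking cross-sections drift as one moves off the base; I would present the linear-algebraic version since it isolates the single inequality that does the work.
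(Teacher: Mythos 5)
The paper does not actually prove this lemma: it is imported verbatim as Lemma~1 of \citep{EdwardsV05}, so there is no internal proof to compare against. Judged on its own, your argument is correct and self-contained, and it is a clean way to establish the result. The Gram--Schmidt observation (that $\u_1,\ldots,\u_d$ are mutually orthogonal with $\Span(\u_1,\ldots,\u_i)$ equal to the direction space of the hull of $\v_0,\ldots,\v_i$) is right, and the key inequality chain holds: in the $\hat\u$-frame the $(i,j)$ off-diagonal entry of $\M$ satisfies $|\U_{ij}|\le\dist(\v_j,A_{i-1})\le 2\|\u_i\|_2$, so $\D^{-1}\U$ is strictly upper triangular with entries bounded by $2$, whence $\M^{-1}\D=(\I+\D^{-1}\U)^{-1}$ has all entries bounded by a constant $B_d$ depending only on $d$; translating the box by $c_d\,d\,B_d\,\M\mathbf{1}$ and taking $c_d=1/(2d^2B_d)$ then puts every box point's barycentric coordinate vector in the simplex. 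You are also right to flag the index convention: as literally stated, $\v_i\in A_i$ forces $\u_i=0$, so the lemma only makes sense with $\w_i$ the projection onto the hull of the strictly earlier vertices (equivalently $\|\u_i\|_2=\dist(\v_i,A_{i-1})$), and your reading of the hypothesis $\dist(\v_j,A_{i-1})\le 2\|\u_i\|_2$ for $j\ge i$ is the one under which the bound on $\D^{-1}\U$ follows. The only cosmetic slip is the final constant: you need $c_d\le 1/(2d^2B_d)$, i.e.\ of order $1/(d^2B_d)$ rather than $1/(dB_d)$, which of course still depends only on $d$.
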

Using this structural property, we achieve an $L_\infty$ constant-factor approximation coreset for the integer $(j,k)$-projective clustering problem with size $(8j^3\log(d\Delta))^{\O{jk}}$:
\begin{restatable}{lemma}{lemcoresetapprox}
\label{lem:coreset:approx}
There exists a universal constant $\xi>0$ such that $D_k$ is a $\xi$-coreset for the $(j,k)$-projective clustering problem.
\end{restatable}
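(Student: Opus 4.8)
The plan is to induct on $k$ and to prove the stronger \emph{multiplicative} statement that for every family $\calF$ of $k$ flats in $\calH_j$ one has $\max_{\p\in P}\dist(\p,\calF)\le\xi\max_{\q\in D_k}\dist(\q,\calF)$. This is what makes the induction go through, and it is also stronger than what is needed: it bounds the cost on all of $P$ by the \emph{same} $k$ flats that realize the cost on $D_k$ (enlarged by $\xi$), so it immediately implies the cylinder-covering definition of a $\xi$-coreset. The base case $k=1$ is exactly Theorem~\ref{thm:single:projective}, which supplies the initial constant factor for $D_1$.

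For the inductive step I would fix $\calF=\{H_1,\ldots,H_k\}\subseteq\calH_j$, set $r=\max_{\q\in D_k}\dist(\q,\calF)$, and assume the multiplicative bound for every $(j,k-1)$-coreset produced by the construction. Since $D_k$ is assembled by branching over every admissible pivot sequence, it suffices to analyze the single branch that \emph{traces} one flat, say $H_1$: at Level~$0$ take $\v_0\in D_{k-1}$ to be a point within $r$ of $H_1$, and at each Level~$t$ select $\v_{t+1}$ from the band coreset $D_{t,i_t}$ to be a point within $r$ of $H_1$ that maximizes $\dist(\cdot,A_t)$, where $A_t=\Span\{\v_0,\ldots,\v_t\}$. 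Lemma~\ref{lem:disc:affine} guarantees the distances to $A_t$ lie on a discrete geometric scale, so the bands $K_{t,i}$ cleanly separate points close to $A_t$ from points far from $A_t$, and the farthest-point rule forces the factor-$2$ gap condition required to invoke Lemma~\ref{lem:affsub:convrect}.

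I would then split $P$ into the final kept set $P[\v_0,\ldots,\v_j]$ and the points peeled off at the individual levels. For the peeled points the farthest-$H_1$-point rule is the crucial device: a point strictly farther from $A_t$ than $\v_{t+1}$ cannot lie within $r$ of $H_1$, so every peeled point is covered within $r$ by $\{H_2,\ldots,H_k\}$ alone. Since each peeled band $K_{t,i}$ carries a $(j,k-1)$-coreset $D_{t,i}\subseteq D_k$, the inductive hypothesis applied with the \emph{fixed} query $\{H_2,\ldots,H_k\}$ upgrades this to $\max_{\p\in K_{t,i}}\dist(\p,\{H_2,\ldots,H_k\})\le\xi r$, so the same $k-1$ flats cover all peeled points simultaneously. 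For the kept set, the pivots are within $r$ of $H_1$ and their orthogonal increments span the $j$ directions of $H_1$; Lemma~\ref{lem:affsub:convrect} then shows $\conv(\v_0,\ldots,\v_j)$ contains a full-width hyperrectangle, which pins $A_j$ to within $\O{r}$ of $H_1$ over the relevant region and hence places every kept point within $\O{r}$ of $H_1$. Combining the single flat $H_1$ with the $k-1$ flats $\{H_2,\ldots,H_k\}$ gives $\max_{\p\in P}\dist(\p,\calF)\le\xi r$, closing the induction.

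The hard part will be twofold. Geometrically, one must convert the hyperrectangle containment of Lemma~\ref{lem:affsub:convrect} into the \emph{robust} approximation $\dist(A_j,H_1)=\O{r}$, which requires checking that the farthest-point pivots genuinely realize the factor-$2$ hypotheses of that lemma across all $j+1$ levels and that the increments $\|\u_i\|_2$ are commensurate with the band scales. More importantly, the argument must keep $\xi$ from compounding over the $k$ recursive levels: the reason the factor remains a universal constant (rather than ballooning to $\xi^{k}$) is precisely that the multiplicative formulation lets me reuse the identical $k-1$ flats $\{H_2,\ldots,H_k\}$ for the far part at \emph{every} level, so the inductive factor enters as a maximum rather than a product. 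Verifying this non-compounding, together with the base-case constant from Theorem~\ref{thm:single:projective}, is the technical heart of the proof.
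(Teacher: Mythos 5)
Your proposal follows essentially the same route as the paper's proof: induction on $k$ with the base case supplied by Theorem~\ref{thm:single:projective}, selection of the pivot $\v_{t+1}$ from the highest band whose coreset meets the radius-$r$ neighborhood of the traced flat, coverage of the peeled bands by the remaining $k-1$ flats via the $(j,k-1)$ inductive hypothesis, and coverage of the kept points near $A_t$ via Lemma~\ref{lem:affsub:convrect} --- the paper merely phrases all of this in the cylinder-covering language rather than your multiplicative one, and your observation that the constant enters as a maximum rather than a product matches the paper's implicit structure. One caution: your assertion that a point strictly farther from $A_t$ than $\v_{t+1}$ cannot lie within $r$ of $H_1$ is valid only for \emph{coreset} points of the peeled bands (which is all the induction needs before invoking the $(j,k-1)$ hypothesis on $D_{t,i}$), not for arbitrary points of $P$.
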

\begin{proof}
Suppose $D_k$ is covered by the $k$ cylinders $S_1,\ldots,S_k$.
Then we would like to show that $P$ is covered by a constant-factor $C$-expansion of $S_1,\ldots,S_k$. Here a $x$-expansion of a cylinder $S$ is the set $\left\lbrace x p \middle| p \in S \right\rbrace$.
We first induct on $k$ and then $j$, noting that the base case $k=1$ is already handled by Theorem~\ref{thm:single:projective}.
We then fix $k\ge 2$ and induct on $j$, first considering stage $0$, where we have some $\v_0$ and we define $K_{0,i}=\{\p\in P[\v_0]\,:\,2^{i-1}\le\dist(\p,\v_0)\le 2^i\}$ for $i\in[\ell]$, where $\ell=\O{\log(d\Delta)}$.
We then set $D_{0,i}$ to be the corresponding coreset for $K_{0,i}$ for the $(k-1,j)$-projective clustering problem.
Let $a$ denote the largest positive integer such that $S_k\cap D_{0,a}\neq\emptyset$, so that by the definition of $a$, we have that $\cup_{x=a+1}^{\ell}D_{0,x}$ is covered by $S_1,\ldots,S_{k-1}$.
Since $D_{0,x}$ is a coreset for the $(k-1,j)$-projective clustering problem, then $\cup_{x=a+1}^\ell K_{0,x}$ is covered by a $C$-expansion of $S_1,\ldots,S_{k-1}$.
For any point $\v_1$ in $S_k\cap D_{0,a}$, we enter stage $1$ with $\v_0,\v_1$ and so it remains to prove that a $C$-expansion of $S_1,\ldots,S_k$ covers $P[\v_0,\v_1]=\cup_{x=0}^aK_{0,x}$.

For the inductive step, suppose we have fixed $\v_0,\ldots,\v_t$ and for each $i\in[0,t]$, let $A_i$ denote the affine subspace spanned by $\v_0,\ldots,\v_i$, that is $A_i = \left\lbrace\sum\limits_{l = 0}^i \alpha_l v_l \middle| \forall l \in [i] \, \alpha_i \in \mathbb{R}, \sum\limits_{l=0}^i \, \alpha_l = 1 \right\rbrace$.
Let $\w_i$ denote the projection of $\v_i$ on $A_i$ and set $\u_i=\v_i-\w_i$.
Then for every $\p\in P[\v_0,\ldots,\v_i]\cap A_i$, we have
\[\dist(\p,A_i)\le2\dist(\v_i,A_i).\]
Thus for $\p\in P[\v_0,\ldots,\v_t]\cap A_t$, we have that $\p$ is contained in the hyperrectangle
\[\calM:=\v_0+\{\alpha_1\u_1+\ldots+\alpha_t\u_t\,:\,\alpha_i\in[-2,2]\}.\]
By Lemma~\ref{lem:affsub:convrect}, there exists a constant $c_t$ such that $\conv(\v_0,\ldots,\v_t)$ contains a translation of the hyperrectangle
\[\calM_1:=\{c_t(\alpha_1\u_1+\ldots+\alpha_t\u_t)\,:\,\alpha_i\in[0,1]\}.\]
Since $S_k$ covers $\v_0,\ldots,\v_t$, then $\calM_1\subset S_k$.
Moreover, we have that for an absolute constant $\xi$, $\calM\subset\xi\cdot\calM_1$.
Thus, a $\xi$-expansion of $S_k$ covers $P[\v_0,\ldots,\v_t]\cap A_t$.

Let $b$ denote the largest positive integer such that $S_k\cap D_{t,b}\neq\emptyset$, so that by the definition of $b$, we have that $\cup_{x=b+1}^{\ell}D_{t,x}$ is covered by $S_1,\ldots,S_{k-1}$.
Since $D_{t,x}$ is a coreset for the $(k-1,j)$-projective clustering problem, then $\cup_{x=b+1}^\ell K_{t,x}$ is covered by a $\xi$-expansion of $S_1,\ldots,S_{k-1}$.
For any point $\v_{t+1}$ in $S_k\cap D_{t,b}$, we enter stage $t+1$ with $\v_0,\ldots,\v_{t+1}$ and so then by induction, it holds that a $\xi$-expansion of $S_1,\ldots,S_k$ covers $P[\v_0,\ldots,\v_{t+1}]=\cup_{x=0}^b K_{t,x}$.
\end{proof}

Theorem~\ref{thm:main:infty} then follows from Lemma~\ref{lem:coreset:approx} and Lemma~\ref{lem:coresetsize} and the observation that $j\le d-1$. 
Thus our coresets have size polynomial in $d$, resolving the natural open question from \cite{EdwardsV05}. 


\begin{algorithm}[!htb]
\caption{Coreset for $(j,k)$-Projective Clustering}
\label{alg:full:projective}
\DontPrintSemicolon
\KwIn{$P\subseteq\mathbb{R}^d$ of $n$ points, an integer $j\in[d-1]$, an integer $k\ge 1$, an accuracy parameter $\eps\in(0,1)$ and a failure probability $\delta\in(0,1)$.}
\KwOut{A weighted set $(C,u)$}
$P_1 := P$, $i := 1$, $C := \emptyset$ \;
\While{$\left|P_i\right| \ge 1$}{
$S_i := $ an $L_\infty$-coreset for $(j,k)$-projective clustering\;
\For{every $\p\in S_i$}{
$s(\p) := \frac{1}{i}\cdot\left|S_i\right|$ \tcp{$|S_i|=\O{j^{1.5}(j\log(d\Delta))^{\O{jk}}}$}
}
$P_{i+1} :=  P_i\setminus S_i$, $i := i+1$\;
}
$t := \sum_{\p\in P}s(\p)$ \tcp{$t=\O{j^{1.5}(j\log(d\Delta))^{\O{jk}}\log n}$}
$m := \frac{ct}{\eps^2}\left(djk\log\frac{t}{\delta}\right)$\;
\For{$m$ iterations}{
Sample a point $\p \in P$ with probability $\frac{s(\p)}{t}$\;
$C := C\cup\{\p\}$, $u(\p):= \frac{t}{m\cdot s(\p)}$\;
}
\Return{$(C,u)$}\;
\end{algorithm}

\section{APPLICATIONS}
In this section, we show that our framework gives an $L_\infty$ coreset for subspace clustering, as well as a large class of $M$-estimators. 
To the best of our knowledge, our constructions are the first coresets with size polynomial in $d$ for these $M$-estimators. 
Namely, our algorithm achieves approximate regression for the Cauchy, Welsch, Huber, Geman-McClure, Tukey, $L_1-L_2$, Fair loss functions, as well as general loss functions that are concave or power-bounded; see Table~\ref{table:Mestimators}.

\textbf{Beyond traditional projective clustering.} First, we present that our $L_\infty$-coreset algorithm is applicable for a family of non-decreasing log-log Lipschitz function.

\begin{restatable}[$L_\infty$ coreset for log-log Lipschitz loss functions]{theorem}{thmmainapps}
\label{thm:main:apps}
Let $j\in[d-1]$, $z\ge 1$, and let $(P,\calH_j,\dist)$ be a query space, where $P$ lies in a $j$-dimensional flat.
Let $f:[0,\infty)\to[0,\infty)$ such that both (1) $f$ is a monotonically non-decreasing function, i.e., for every $x,y\in[0,\infty)$ with $x\le y$, it holds that $f(x)\le f(y)$ and (2) $f$ is log-log Lipschitz, i.e., there exists $\rho\ge 1$ for every $b\ge 1$ such that $f(bx)\le b^\rho f(x)$.
Let $C$ be the output of a call to $L_\infty-\coreset(P)$.
Then for every $H\in\calH_j$, $\max_{\p\in P}f(\dist(p,H(\X,\v))^z)\le (2^{z+1}j^{1.5z})^\rho\max_{\p\in C}f(\dist(p,H(\X,\v))^z).$
\end{restatable}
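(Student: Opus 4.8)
The plan is to reduce the statement directly to Theorem~\ref{thm:single:projective} by exploiting the monotonicity of $f$ to commute it with the maximum operator, and then absorbing the multiplicative distortion constant through the log-log Lipschitz hypothesis. The key observation is that $C$ is exactly the output of Algorithm~\ref{alg:single:projective} under the same hypotheses ($P$ lies in a $j$-dimensional flat and $z\ge 1$), so Theorem~\ref{thm:single:projective} already supplies the distance guarantee $\max_{\p\in P}\dist(\p,H(\X,\v))^z\le 2^{z+1}j^{1.5z}\max_{\q\in C}\dist(\q,H(\X,\v))^z$ for every $H(\X,\v)\in\calH_j$; everything else is pushing $f$ through this inequality.

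First I would fix an arbitrary $H=H(\X,\v)\in\calH_j$. Since $f$ is non-decreasing, the maximum of the values $f(\dist(\p,H)^z)$ over $\p\in P$ is attained at the same point that maximizes $\dist(\p,H)^z$, so $\max$ and $f$ commute, giving $\max_{\p\in P}f(\dist(\p,H)^z)=f\left(\max_{\p\in P}\dist(\p,H)^z\right)$. Combining this with Theorem~\ref{thm:single:projective} and using monotonicity of $f$ once more yields $f\left(\max_{\p\in P}\dist(\p,H)^z\right)\le f\left(2^{z+1}j^{1.5z}\cdot\max_{\q\in C}\dist(\q,H)^z\right)$.

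The crux is the next step: I would set $b=2^{z+1}j^{1.5z}$ and $x=\max_{\q\in C}\dist(\q,H)^z$ and apply the log-log Lipschitz bound $f(bx)\le b^\rho f(x)$. The only thing that needs checking is the hypothesis $b\ge 1$, which holds because $z\ge 1$ and $j\ge 1$ give $2^{z+1}\ge 4$ and $j^{1.5z}\ge 1$, so $b\ge 4$. This produces the factor $(2^{z+1}j^{1.5z})^\rho$ in front of $f\left(\max_{\q\in C}\dist(\q,H)^z\right)$. Applying monotonicity of $f$ a final time to rewrite $f\left(\max_{\q\in C}\dist(\q,H)^z\right)=\max_{\q\in C}f(\dist(\q,H)^z)$ gives the claimed inequality, and since $H$ was arbitrary the theorem follows. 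No step is genuinely difficult; the entire argument is a monotone post-composition of Theorem~\ref{thm:single:projective}, and the only potential pitfall is purely bookkeeping, namely confirming $b\ge 1$ so that the log-log Lipschitz inequality is legitimately applicable and keeping the commutation of $\max$ with $f$ consistent throughout.
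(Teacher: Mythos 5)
Your proposal is correct and follows essentially the same route as the paper's own proof: invoke Theorem~\ref{thm:single:projective}, commute $f$ with $\max$ via monotonicity, and absorb the factor $2^{z+1}j^{1.5z}$ through the log-log Lipschitz property. The only addition is your explicit check that $b=2^{z+1}j^{1.5z}\ge 1$, which the paper leaves implicit but is a worthwhile sanity check.
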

\begin{proof}
Let $H(\X,\v)\in\calH_j$.
Then by Theorem~\ref{thm:single:projective}, we have that
\[\max_{\p\in P}\dist(p,H(\X,\v))^z\le 2^{z+1}j^{1.5z}\max_{\q\in C}\dist(q,H(\X,\v))^z.\]
Since $f$ is a monotonically non-decreasing function, then
\begin{align*}
\max_{\p\in P}\,f(\dist(p,H(\X,\v))^z)&=f\left(\max_{\p\in P}\dist(p,H(\X,\v))^z\right)\\
&\le f\left(2^{z+1}j^{1.5z}\max_{\q\in C}\dist(q,H(\X,\v))^z\right).
\end{align*}
Since $f$ is log-log Lipschitz, then
\begin{align*}
f\left(2^{z+1}j^{1.5z}\max_{\q\in C}\dist(q,H(\X,\v))^z\right)&\le(2^{z+1}j^{1.5z})^\rho f\left(\max_{\q\in C}\dist(q,H(\X,\v))^z\right)\\
&\le (2^{z+1}j^{1.5z})^\rho\max_{\q\in C} f\left(\dist(q,H(\X,\v))^z\right).
\end{align*}
Hence, we have
\begin{align*}
\max_{\p\in P}&f(\dist(p,H(\X,\v))^z) \le(2^{z+1}j^{1.5z})^\rho\max_{\p\in C}f(\dist(p,H(\X,\v))^z)
\end{align*}
as desired.
\end{proof}

Although the above theorem is applicable to large family of functions, it may not yield tight bounds for each of the loss functions in Table~\ref{table:Mestimators}. 
Thus we first prove the following lemma, which guarantees an coreset for power-bounded loss functions $\Psi_{Pow}(x)$.

\begin{restatable}[$L_\infty$ coreset for regression with power-bounded loss function]{lemma}{lempower}
\label{lem:power}
Let $P\subseteq\mathbb{R}^d$ be a set of $n$ points, $b:P\to\mathbb{R}$, $\lambda\in\mathbb{R}$, and let $z>0$ be a fixed constant. 
Let $\Psi_{Pow}$ denote any non-decreasing loss function with $\Psi_{Pow}(0)=0$ and $\Psi_{Pow}(y)/\Psi_{Pow}(x)\le(y/x)^z$ for all $0\le x\le y$.  
Let $P'=\{\p\circ b(\p)\,|\,\p\in P\}$, where $\circ$ denotes vertical concatenation.
Let $C'$ be the output of a call to $L_\infty-\coreset(P',d)$ and let $C\subseteq P$ so that $C'=\{\q\circ b(\q)\,|\,\q\in C\}$.
Then for every $\w\in\mathbb{R}^d$, $\max_{\p\in P}\Psi_{Pow} \left(|\p^\top\w-b(\p)|\right)\le 4^z(d+1)^{1.5z}\cdot\max_{\q\in C}\Psi_{Pow}\left(|\q^\top\w-b(\q)|\right).$
\end{restatable}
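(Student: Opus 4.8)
The plan is to reduce regression under $\Psi_{Pow}$ to an instance of $(j,1)$-projective clustering in one extra dimension, so that the distance coreset of Theorem~\ref{thm:single:projective} does all of the geometric work and the power-bounded hypothesis $\Psi_{Pow}(y)/\Psi_{Pow}(x)\le(y/x)^z$ is used only to reattach the loss at the very end.

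First I would make the reduction explicit. For $\w\in\mathbb{R}^d$ set $\w'=\w\circ(-1)\in\mathbb{R}^{d+1}$, so that for each augmented point $\p'=\p\circ b(\p)\in P'$ we have $\p'^\top\w'=\p^\top\w-b(\p)$, and hence the residual equals $|\p'^\top\w'|$. Using the unit normal $\w'/\|\w'\|_2$, one has $|\p'^\top\w'|=\|\w'\|_2\cdot\dist(\p',H_\w)$, where $H_\w$ is the $d$-dimensional linear subspace of $\mathbb{R}^{d+1}$ with normal $\w'$, i.e. the graph of $\x\mapsto\x^\top\w$. The crucial point is that $\|\w'\|_2$ depends only on $\w$ and not on the point, so it cancels in any ratio of maxima; it therefore suffices to establish the claim for the plain distances $\dist(\cdot,H_\w)$ and then multiply $\|\w'\|_2$ back in.

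Next I would invoke Theorem~\ref{thm:single:projective} in the augmented space. The set $P'$ lies in a flat of dimension at most $d+1$, and each query $H_\w$ is a $d$-flat of $\mathbb{R}^{d+1}$; running Algorithm~\ref{alg:single:projective} on $P'$ produces $C'$ whose John--L\"{o}wner ellipsoid lives in this ambient flat, so the factor $j^{1.5}$ of Theorem~\ref{thm:single:projective} is charged to the containing dimension $d+1$. Concretely, this is exactly the $\ell=1$, $\v=0$ specialization of the inner inequality proved inside Theorem~\ref{thm:single:projective} (take $\Y=\w'$ and $\a=0$), so applying that theorem with its distance exponent set to $1$ gives $\max_{\p'\in P'}\dist(\p',H_\w)\le 4(d+1)^{1.5}\max_{\q'\in C'}\dist(\q',H_\w)$, and multiplying through by $\|\w'\|_2$ yields $\max_{\p\in P}|\p^\top\w-b(\p)|\le 4(d+1)^{1.5}\max_{\q\in C}|\q^\top\w-b(\q)|$. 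Finally I would transfer this through $\Psi_{Pow}$ as in the proof of Theorem~\ref{thm:main:apps}: writing $M_P=\max_{\p\in P}|\p^\top\w-b(\p)|$ and $M_C=\max_{\q\in C}|\q^\top\w-b(\q)|$, monotonicity pulls $\Psi_{Pow}$ through each maximum, so the two sides are $\Psi_{Pow}(M_P)$ and $\Psi_{Pow}(M_C)$; since $M_P\le 4(d+1)^{1.5}M_C$ with $4(d+1)^{1.5}\ge 1$, monotonicity together with the power-bound $\Psi_{Pow}(cx)\le c^z\Psi_{Pow}(x)$ (the case $y=cx$) gives $\Psi_{Pow}(M_P)\le 4^z(d+1)^{1.5z}\Psi_{Pow}(M_C)$, and the degenerate case $M_C=0$ forces $M_P=0$ so that both sides vanish by $\Psi_{Pow}(0)=0$.

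I expect the only genuine obstacle to be the bookkeeping in the middle step: one must check that $P'$ may span a full $(d+1)$-dimensional flat while the queries $H_\w$ are $d$-flats, so Theorem~\ref{thm:single:projective} is invoked in the regime where the query dimension is one below the dimension of the flat containing the data. Verifying that its proof charges only the containing dimension (through the ellipsoid), and that the arbitrary complement $\Y$ appearing there may legitimately be taken to be the single normal $\w'$, is the crux; once that is in place the loss-transfer step is purely mechanical.
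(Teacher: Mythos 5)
Your proposal is correct and follows essentially the same route as the paper: augment each point with $b(\p)$ and the query with $-1$, apply the $L_\infty$ distance coreset guarantee (Theorem~\ref{thm:single:projective}, via Theorem~\ref{thm:main:apps}) with exponent $z=1$ in $\mathbb{R}^{d+1}$ to get $\max_{\p\in P}|\p^\top\w-b(\p)|\le 4(d+1)^{1.5}\max_{\q\in C}|\q^\top\w-b(\q)|$, and then push the bound through $\Psi_{Pow}$ using monotonicity and the power-bound hypothesis. Your extra bookkeeping about the $\|\w'\|_2$ normalization and the degenerate case $M_C=0$ is sound and only makes explicit what the paper leaves implicit.
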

\begin{proof}
Because the claim is trivially true for $\w=0^d$, then it suffices to consider nonzero $\w\in\mathbb{R}^d$. 
Let $Y\in\calH_{d-1}$ such that $\w^\top\Y=0^{d-1}$ and $\Y^\top\w=0^d$.
For each $\p\in P$, let $\p'=\p\circ b(\p)=\begin{bmatrix}\p\\ b(\p)\end{bmatrix}$ denote the vertical concatenation of $\p$ with $b(\p)$.
We also define the vertical concatenation $\w'=\w\circ(-1)=\begin{bmatrix}\w\\ -1\end{bmatrix}$.
By setting $C$ to be the output of $\coreset$ on $P'=\{\p'\,|\,\p\in P\}$, then by Theorem~\ref{thm:main:apps},
\begin{align*}
\max_{\p\in P}&\dist(\p',H(\w',0^{d+1}))^z\\
&\le2^{z+1}(d+1)^{1.5z}\max_{\q\in C}\dist(\q',H(\w',0^{d+1}))^z
\end{align*}
Thus for $z=1$, we have for every $\p\in P$,
\[|(\p')^\top\w'|\le4(d+1)^{1.5}\max_{\q\in C}|(\q')^\top\w'|.\]
Since $\Psi_{Pow}$ is monotonically non-decreasing, then $\Psi_{Pow}(|\p^\top\x-b(\p))$ increases as $|\p^\top\x-b(\p)|$ increases. 
Moreover, we have $\Psi_{Pow}(y)/\Psi_{Pow}(x)\le(y/x)^z$ for all $0\le x\le y$. 
Therefore,
\begin{align*}
\max_{\p\in P}\Psi_{Pow}\left(|\p^\top\w-b(\p)|\right)
\le\max_{\q\in C}\Psi_{Pow}\left(4(d+1)^{1.5}|\q^\top\w-b(\q)|\right)
\le4^z(d+1)^{1.5z}\max_{\q\in C}\Psi_{Pow}\left(|\q^\top\w-b(\q)|\right).
\end{align*}
\end{proof}

Since power-bounded loss functions satisfy the conditions of Theorem~\ref{thm:main:apps}, then we can immediately apply Theorem~\ref{thm:main:apps} to obtain a base case for $z=1$. 
Lemma~\ref{lem:power} then follows by the definition of power-bounded loss functions for general $z$. 

It turns out that many of the loss functions of interest in Table~\ref{table:Mestimators} are power-bounded loss functions with specific parameters, so we can apply Theorem~\ref{thm:main:apps} in the same way as the proof of Lemma~\ref{lem:power} to obtain the guarantees for Cauchy regression, Huber regression, and Gem-McClure regression. 
However, in certain cases, we can prove structural properties bounding the growth of these loss functions to obtain guarantees that are sharper than those provided by Theorem~\ref{thm:main:apps}. 
We prove such structural properties at Section~\ref{supplement:app} of the supplementary material to handle Welsch regression, regression with concave loss functions, Tukey regression, $L_1-L_2$ regression, and Fair regression.  

\textbf{$L_\infty$-coreset to $L_2$-coreset for integer $(j,k)$-projective clustering.}
To construct an $\eps$-coreset, we use our $L_\infty$ coreset along with the framework of sensitivity sampling, in which points are sampled according to their sensitivity, a quantity that roughly captures how important or unique each point is. We give the coreset construction in Algorithm~\ref{alg:full:projective} using a standard reduction from an $L_2$ coreset to an $L_{\infty}$ coreset based on sensitivity sampling as summarized below.
\begin{restatable}{theorem}{thmltwomain}
\label{thm:ltwo:main}
With constant probability, Algorithm~\ref{alg:full:projective} outputs an $L_2$ $(1+\epsilon)$-coreset for $(j,k)$-projective clustering of $P$. 
\end{restatable}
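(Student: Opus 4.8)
The plan is to instantiate the standard Feldman--Langberg sensitivity sampling framework, using the $L_\infty$ coreset of Theorem~\ref{thm:main:infty} as the device that controls the sensitivities of the input points. Recall that the sensitivity of $\p\in P$ for $(j,k)$-projective clustering is $\sigma(\p)=\sup_{\calF}\frac{\dist(\p,\calF)^2}{\sum_{\q\in P}\dist(\q,\calF)^2}$, where $\calF$ ranges over $\calQ_{j,k}$. The generic sampling theorem guarantees that drawing $m=\O{\frac{t}{\eps^2}\left(\dim\log t+\log\frac1\delta\right)}$ points i.i.d.\ with probability $s(\p)/t$ proportional to any upper bounds $s(\p)\ge\sigma(\p)$, and reweighting by $u(\p)=\frac{t}{m\,s(\p)}$ exactly as in Algorithm~\ref{alg:full:projective}, produces with probability $1-\delta$ a weighted set obeying $\left|\sum_{\p\in P}\dist(\p,\calF)^2-\sum_{\p\in C}u(\p)\dist(\p,\calF)^2\right|\le\eps\sum_{\p\in P}\dist(\p,\calF)^2$ for every $\calF$ simultaneously, where $t=\sum_\p s(\p)$ and $\dim$ is the pseudo-dimension of the query space. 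Taking $\delta$ to be a constant then yields the theorem, so the work reduces to (i) verifying the sensitivity upper bounds, (ii) bounding the total sensitivity $t$, and (iii) bounding $\dim$.

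First I would show that the values $s(\p)$ assigned by the peeling loop dominate the true sensitivities, which is precisely where the $L_\infty$ coreset enters. Fix a point $\p$ removed at layer $i$, so $\p\in P_\ell$ for every $\ell\le i$, and fix any query $\calF$. Since $S_\ell$ is an $L_\infty$ $C$-coreset of $P_\ell$, the max-distance form of the coreset property gives $\dist(\p,\calF)\le\max_{\q\in P_\ell}\dist(\q,\calF)\le C\max_{\q\in S_\ell}\dist(\q,\calF)$, so there is a witness $\q_\ell\in S_\ell$ with $\dist(\q_\ell,\calF)\ge\frac1C\dist(\p,\calF)$. Because the layers $S_1,\ldots,S_i$ are disjoint, the witnesses $\q_1,\ldots,\q_i$ are distinct, whence $\sum_{\q\in P}\dist(\q,\calF)^2\ge\sum_{\ell=1}^i\dist(\q_\ell,\calF)^2\ge\frac{i}{C^2}\dist(\p,\calF)^2$. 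Taking the supremum over $\calF$ gives $\sigma(\p)\le C^2/i$, the per-layer bound that $s(\p)$ is set to dominate. Summing this $\O{1/i}$ bound over the at most $n$ layers, and using $|S_i|\le g=(8j^3\log(d\Delta))^{\O{jk}}$ from Theorem~\ref{thm:main:infty}, the harmonic series yields $t=\O{g\log n}$, matching the bound annotated in the algorithm.

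Next I would bound the pseudo-dimension of the family $\{\p\mapsto\dist(\p,\calF)^2:\calF\in\calQ_{j,k}\}$. Each of the $k$ flats is described by $\O{dj}$ real parameters and the cost is the pointwise minimum of $k$ squared subspace-distances, each a bounded-degree polynomial in the query parameters; bounding the number of sign patterns such systems realize via a Warren-type argument, as is standard for shape-fitting query spaces, gives $\dim=\O{djk}$ up to logarithmic factors. Substituting $t=\O{g\log n}$ and this pseudo-dimension into the sampling bound reproduces the choice of $m$ in Algorithm~\ref{alg:full:projective}, and the two-sided guarantee above implies, after a harmless rescaling of $\eps$, the one-sided inequality $\sum_{\p\in P}\dist(\p,\calF)^2\le(1+\eps)\sum_{\p\in C}u(\p)\dist(\p,\calF)^2$ required by the definition of an $L_2$ $\eps$-coreset.

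The main obstacle I anticipate is not the sensitivity peeling, which follows cleanly from the $L_\infty$ property, but rather making the two external ingredients fully rigorous for this particular query space: pinning down the correct pseudo-dimension bound for a $k$-fold minimum of subspace-distance functions, and correctly handling the degenerate cases where $\dist(\p,\calF)=0$ or where some residual set $P_i$ fails to span $\mathbb{R}^d$ so that the base $L_\infty$ construction must be invoked on $\Span(P_i)$. I would dispatch the degeneracies by working within the affine span of each $P_i$ at every call to the base case, and otherwise cite the established sensitivity sampling and pseudo-dimension machinery verbatim.
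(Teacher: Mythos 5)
Your proposal follows essentially the same route as the paper: the paper's proof is a three-line citation of the Feldman--Langberg/sensitivity-sampling framework of \cite{FeldmanSS20}, the layered peeling bound of Lemma 3.1 of Varadarajan--Xiao for the sensitivities $s(\p)$, and Corollary 34 of \cite{FeldmanSS20} for the $\tilde{\O{djk}}$ shattering dimension, and your argument is precisely the content of those three citations spelled out (the witness-per-layer derivation of $\sigma(\p)\le C^2/i$ is exactly the Varadarajan--Xiao argument). The proposal is correct and, if anything, more self-contained than the paper's own proof.
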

\begin{proof}
The coreset size follows the bound of \cite{FeldmanSS20} once the sensitivity and the shattering dimension upper bound are given to us. 
We actually follow the way of Lemma 3.1 of \cite{Varadarajan2012ANA} to give the sensitivity upper bound $s(p)$. 
The shattering dimension upper bound $\tilde{O}(djk)$ follows Corollary 34 of \cite{FeldmanSS20}
\end{proof}

\textbf{Time complexity of our methods.} The running time of Algorithm~\ref{alg:full:projective}, we need to handle two cases -- \begin{enumerate*}
\item $k = 1$, and 
\item $k > 1$.
\end{enumerate*}
Observe that the time needed for constructing our $L_2$-coreset for $(k,j)$-projective clustering where $k = 1$ and any $j \geq 2$ is bounded by $O\left( n \left( n + j^4\log{n} \right)\right)$ time. Specifically speaking, the time depends heavily on the time that Algorithm~\ref{alg:single:projective}. Algorithm~\ref{alg:single:projective} depends heavily on the computation of the L\"{o}wner ellipsoid and on applying Carath\'{e}odory's theorem. The time needed to compute the L\"{o}wner ellipsoid of a given set of point $Q \subseteq \mathbb{R}^j$ such that $|Q| = n$ is bounded by $O\left( nj^3\log{n}\right)$~\citep{todd2007khachiyan}. 
As for constructing the Caratheodory set, recently~\cite{maalouf2019fast} provided an algorithm for computing such set in time $O\left(nj + j^4\log{n}\right)$. Combining these two methods with the observation that Algorithm~\ref{alg:full:projective} has $O\left( \frac{n}{j^2}\right)$ calls to Algorithm~\ref{alg:single:projective}, results in the upper bound above.

As for $k \geq 2$, following our analyzed steps needed to construct an $L_\infty$-coreset for the $(k,j)$-projective clustering problem and its variants, the running time is bounded from above by $O\left( nj^4\left(\log{\Delta}\right)^{j^2k}\right)$. Hence, Algorithm~\ref{alg:full:projective} requires $O\left( n^2j^4\left(\log{\Delta}\right)^{j^2k}\right)$ to construct an $L_2$-coreset for the $(k,j)$-projective clustering problem.

We note that our algorithm can be boosted theoretically speaking via the use of the merge-and-reduce tree~\cite{feldman2020core}, resulting in an algorithm  that are near-linear in $n$ rather than quadratic in $n$.

We further note that, our assumption on $P$ being contained in some $j$-dimensional affine subspace can be dropped as follows.
\begin{remark} 
So far, $P$ was assumed to lie on $j$-dimensional subspaces, however, one can remove this assumption by using Theorem $7$ of~\cite{varadarajan2012sensitivity}.
\end{remark}

\textbf{Subspace clustering.}
We first recall that subspace clustering is a variant of projective clustering where $k=1$ and $j\in[d-1]$.

\textbf{$M$-estimator regression.}
We present various robust $(1,d-1)$-projective clustering problems for which a strong $\eps$-coreset can be generated using our algorithms. 
We are given a set $P$ of $n$ points in $\mathbb{R}^d$ and a function $b:P\to\mathbb{R}$, and our goal is to optimize the minimization problem $\min_{\w\in\mathbb{R}^d}\sum_{\p\in P}\Psi\left(|\p^\top\w-b(\p)|\right)$, where $\Psi$ is any loss function. 
In particular, the choice of $\Psi$ encompasses many robust regression loss functions that have been designed to reduce the effect of outliers across various optimization problems. 
We show that Algorithm~\ref{alg:single:projective} achieves an $L_\infty$-coreset with accuracy $1-\frac{1}{\poly(d)}$ for a variety of loss functions; See Section~B in the supplementary material.

\section{EXPERIMENTS}
\label{sec:results}
\begin{table*}[htb!]
\caption{\textbf{Summary of our results: } Our coreset construction was applied on various application of projective clustering, of which were robust regression as well as robust subspace clustering}
\centering
\begin{tabular}{|c|c|c|c|c|c|}
\hline
Problem type & Loss function & $k$ & $j$ &  Dataset & Figure \\
\hline
Regression & Huber & $1$ & $d-1$ & \ref{dataset:1} & \ref{fig:reg_synth_huber} \\
\hline
Regression & Cauchy & $1$ & $d-1$ & \ref{dataset:1} & \ref{fig:reg_synth_cauchy} \\ \hline
$(2,2)$-projective clustering & $L_2^2$ & $2$ & $2$ & \ref{dataset:2} & \ref{fig:proj_hour_2_2_l2}\\
\hline
Robust $(2,2)$-projective clustering & Cauchy & $2$ & $2$ & \ref{dataset:2} & \ref{fig:proj_hour_2_2_Cauchy}\\ \hline
Robust $(2,2)$-projective clustering & Tukey & $2$ & $2$ & \ref{dataset:3} & \ref{fig:proj_CASP_2_2_Tukey}\\
\hline
Robust $(2,2)$-projective clustering & Welsch & $2$ & $2$ & \ref{dataset:3} & \ref{fig:proj_CASP_2_2_Welsch}\\
\hline
\end{tabular}
\label{tab:summary_results}
\end{table*}

\begin{figure*}[tbh!]
\centering
\begin{subfigure}[t]{0.49\textwidth}
\centering
\includegraphics[width=0.49\textwidth]{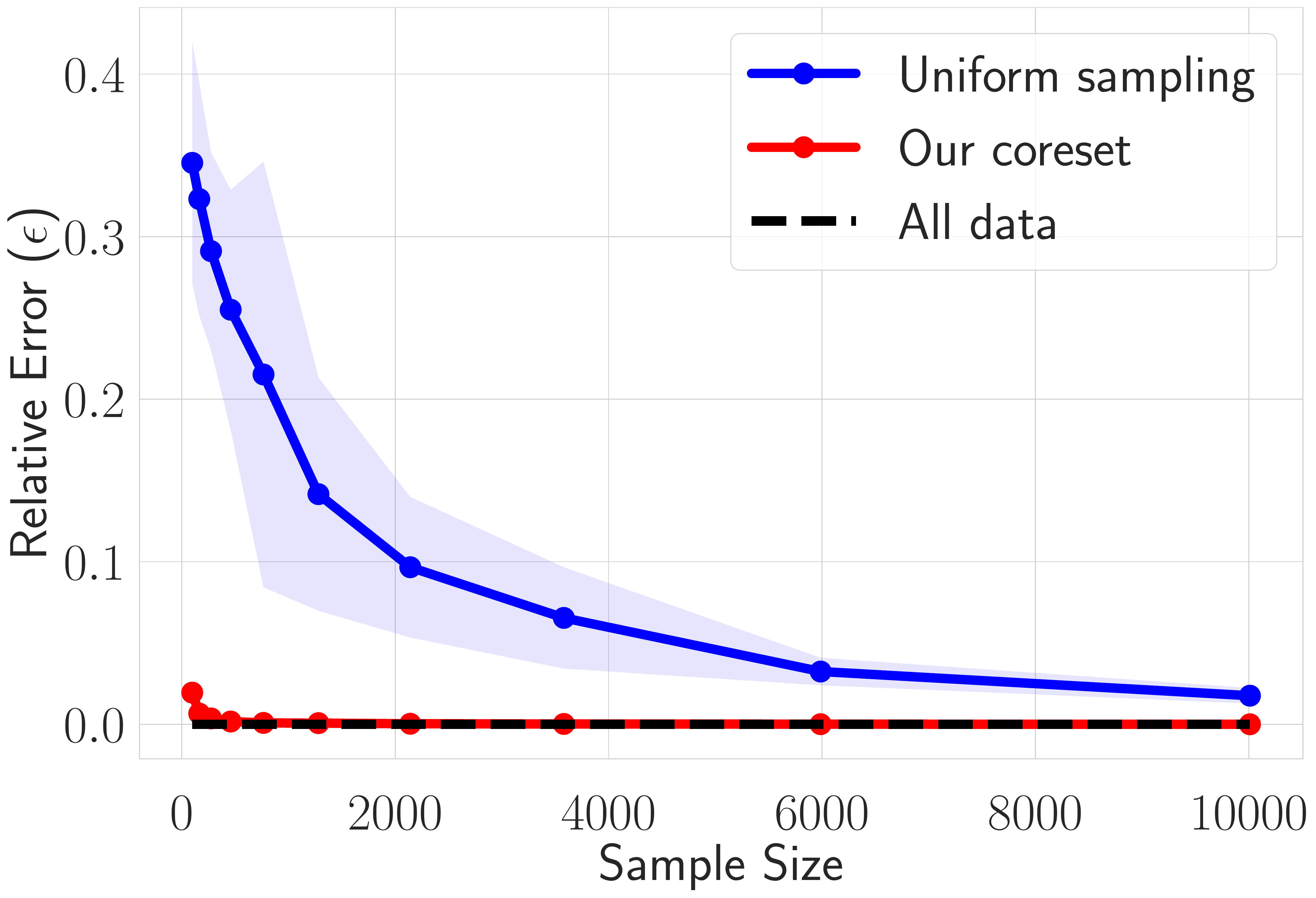}
\includegraphics[width=.49\textwidth]{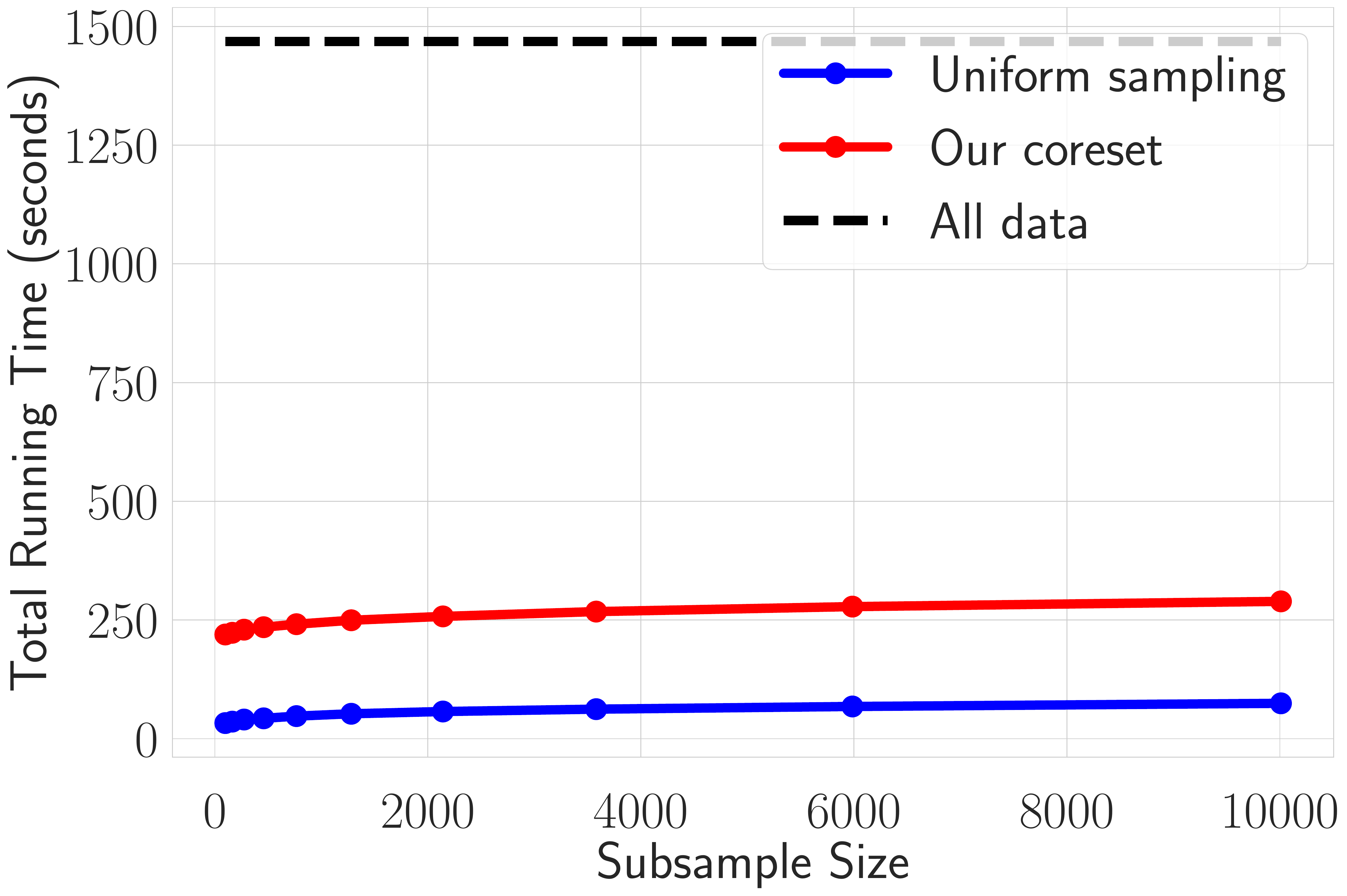}
\caption{}
\label{fig:reg_synth_huber}
\end{subfigure}
\begin{subfigure}[t]{0.49\textwidth}
\centering
\includegraphics[width=.49\textwidth]{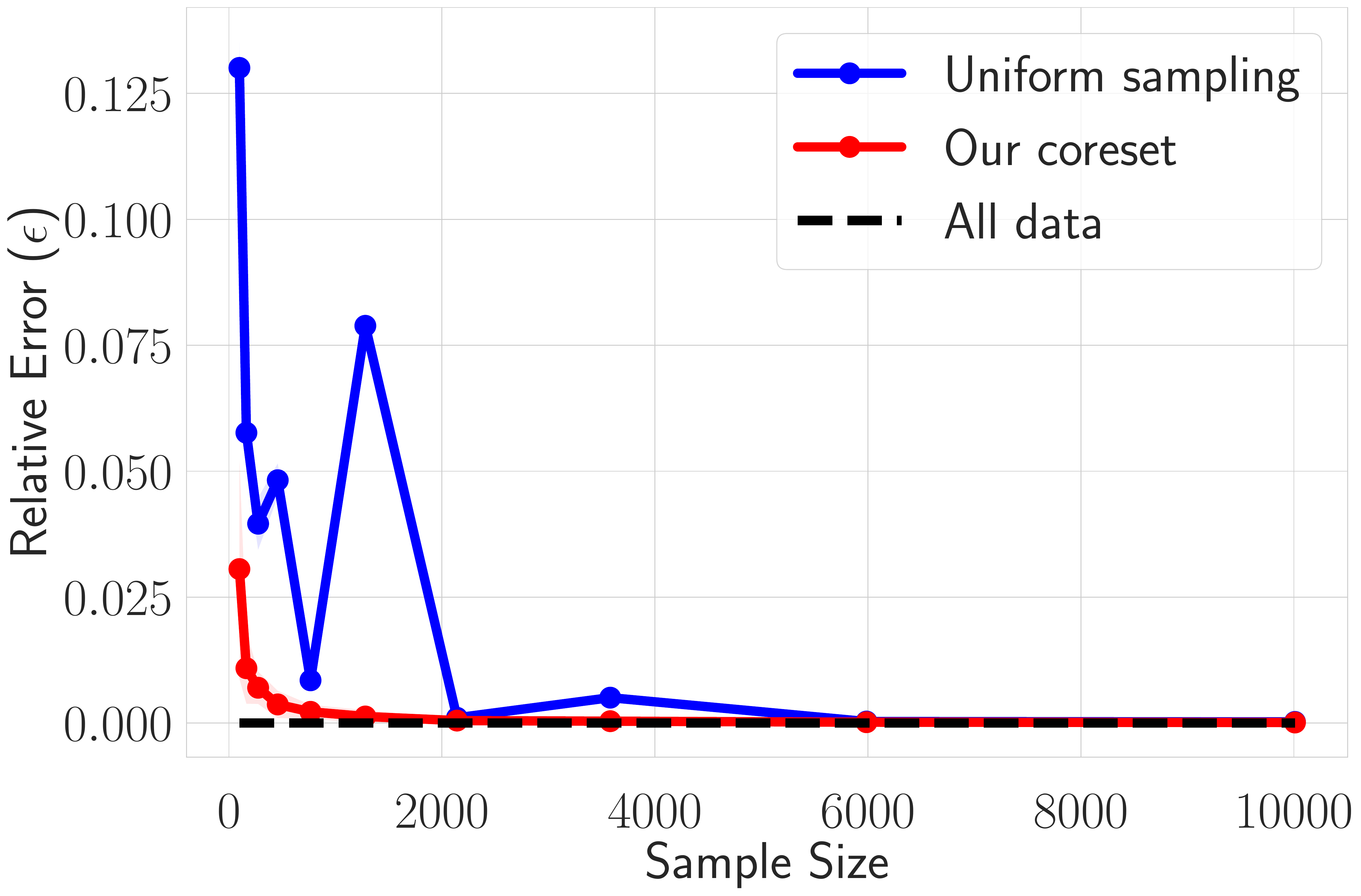}
\includegraphics[width=.49\textwidth]{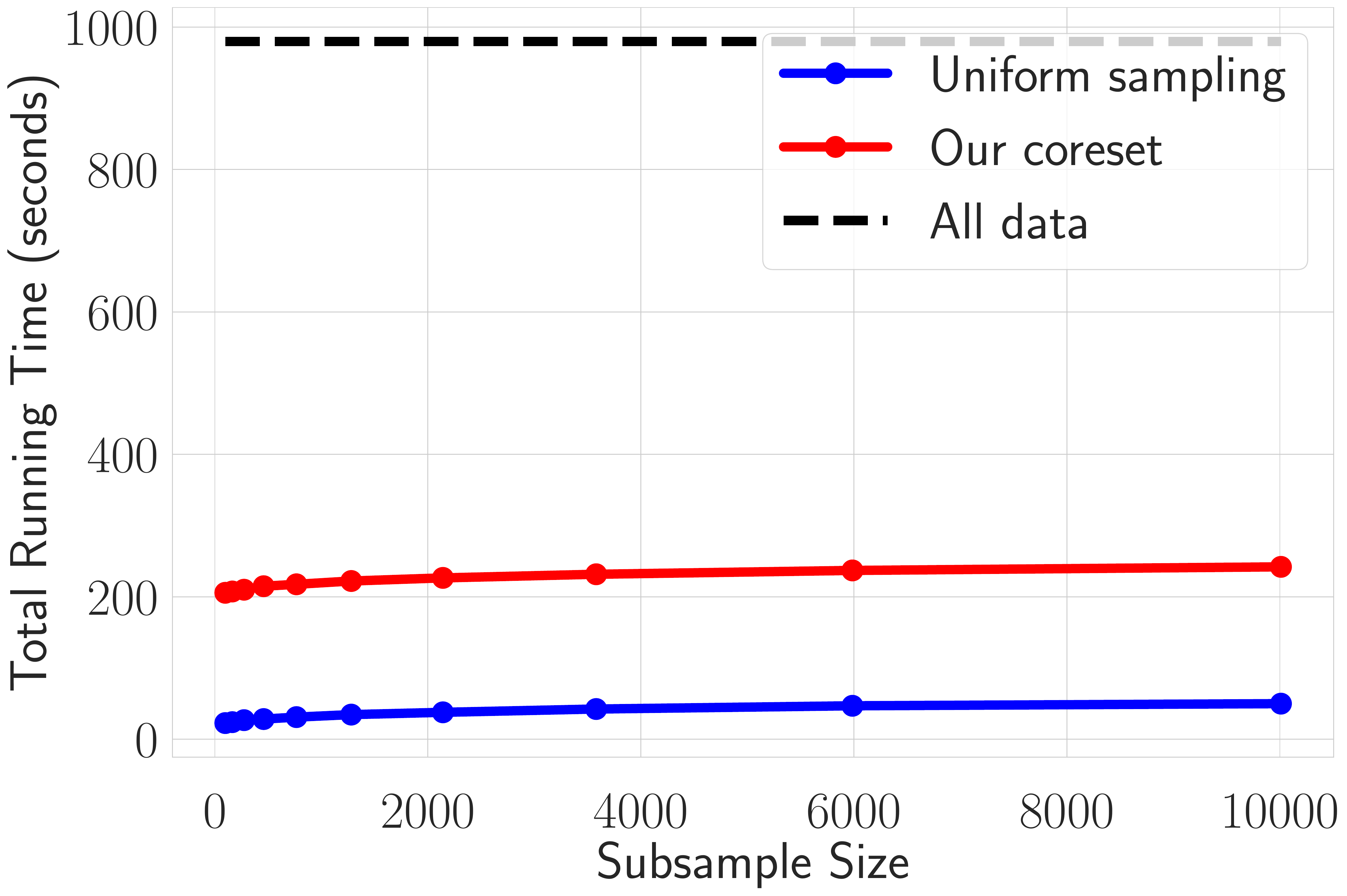}
\caption{}
\label{fig:reg_synth_cauchy}
\end{subfigure}
\begin{subfigure}[t]{.49\textwidth}
\centering
\includegraphics[width=.49\textwidth]{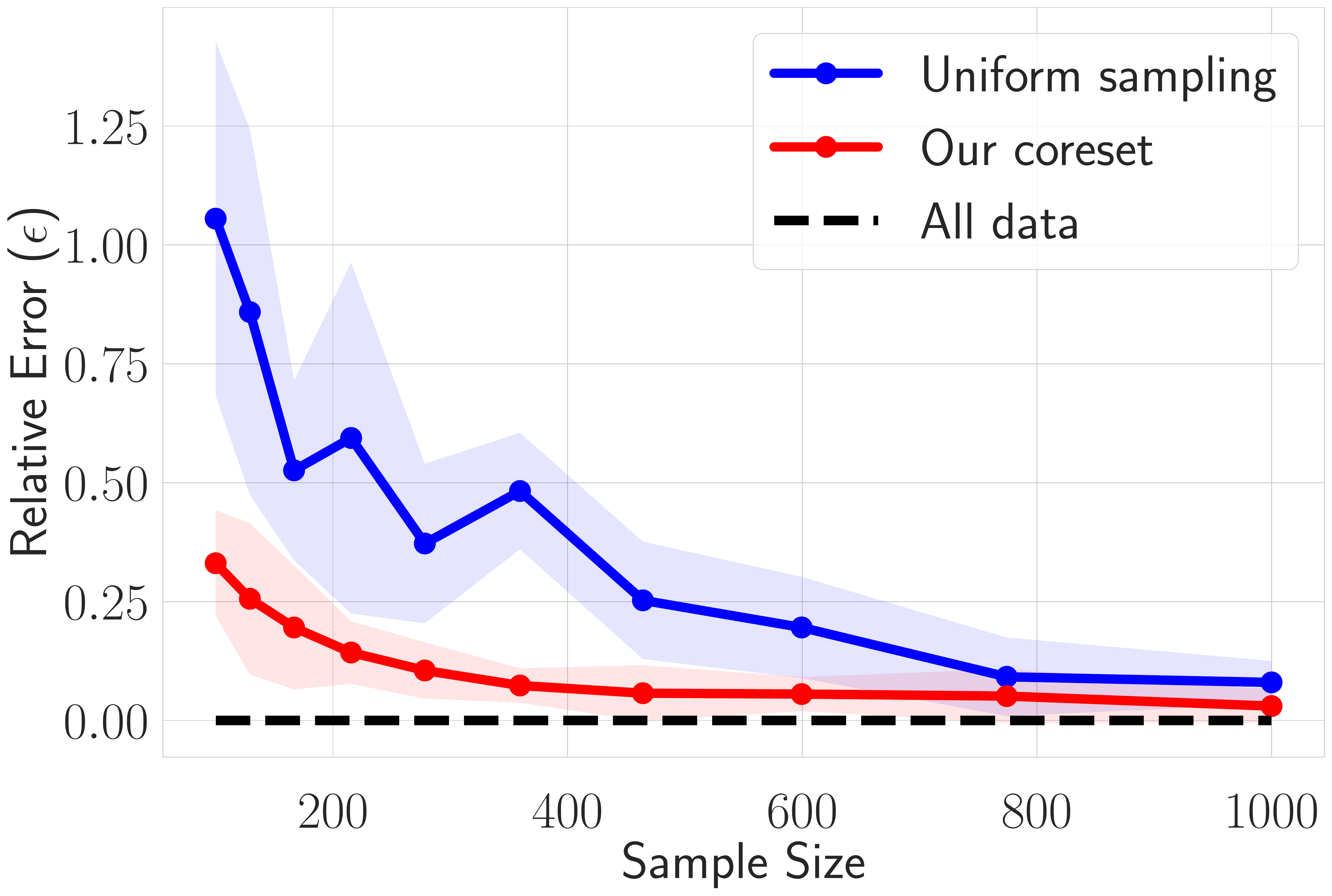}
\includegraphics[width=.49\textwidth]{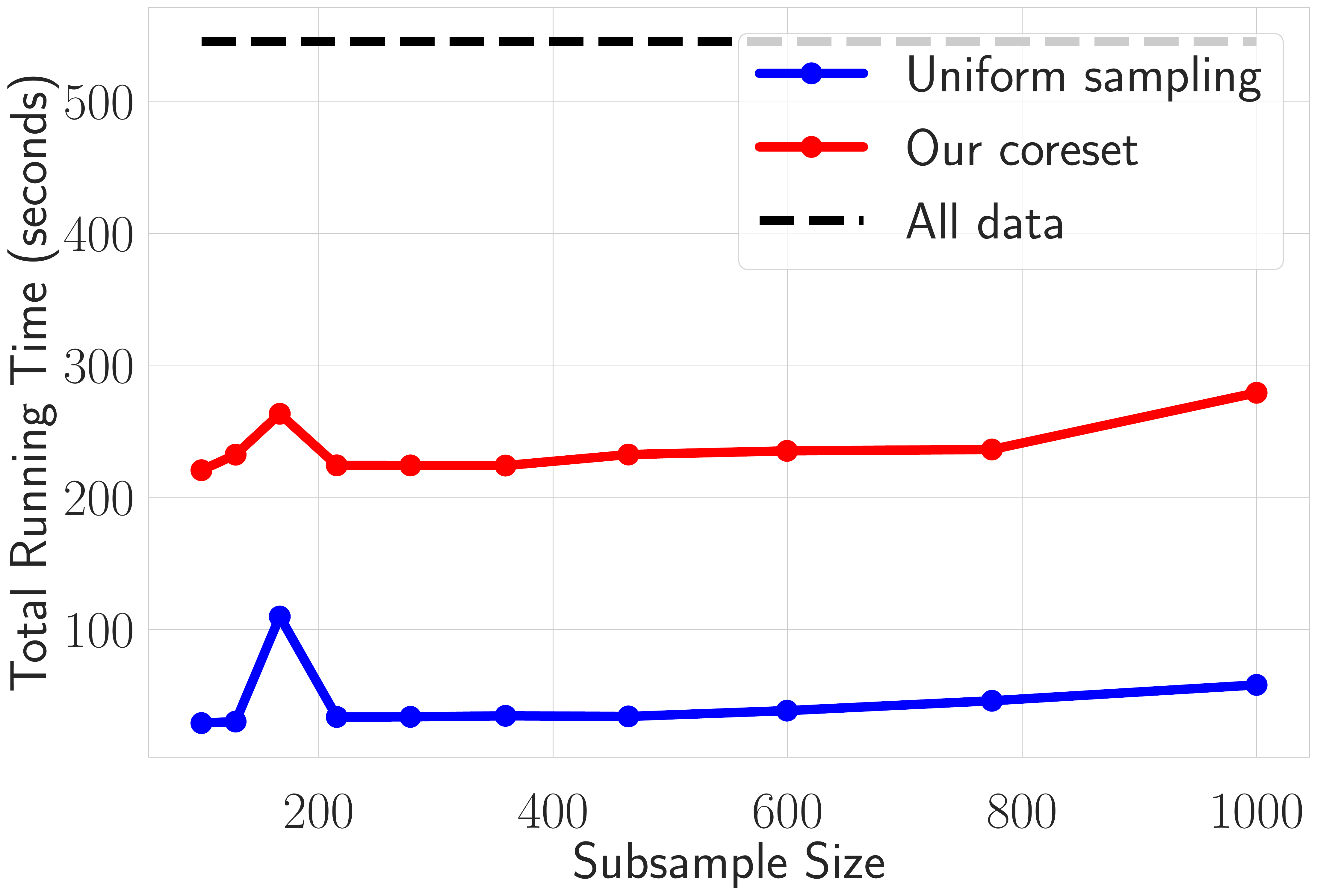}
\caption{}
\label{fig:proj_hour_2_2_l2}
\end{subfigure}
\begin{subfigure}[t]{.49\textwidth}
\centering
\includegraphics[width=.49\textwidth]{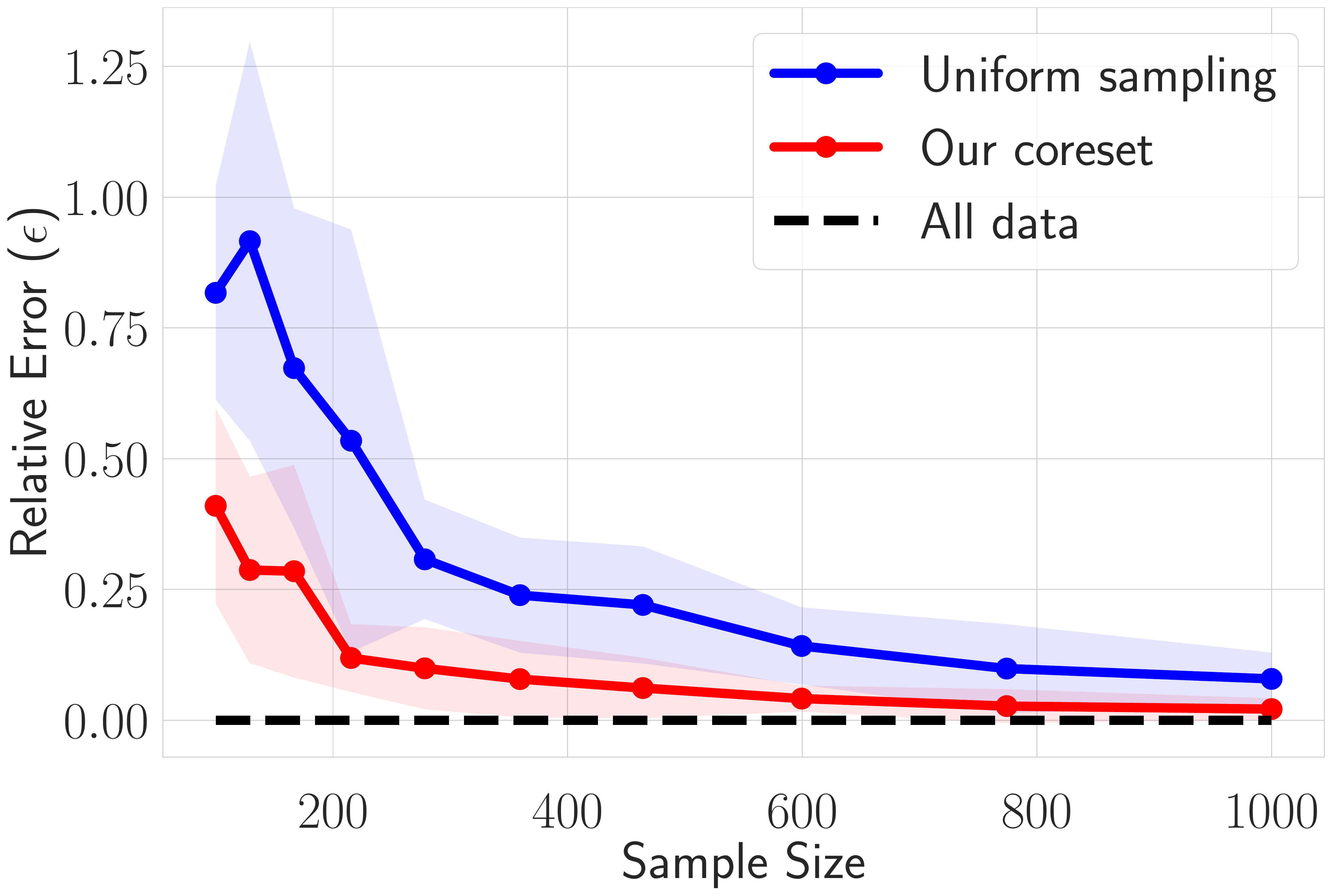}
\includegraphics[width=.49\textwidth]{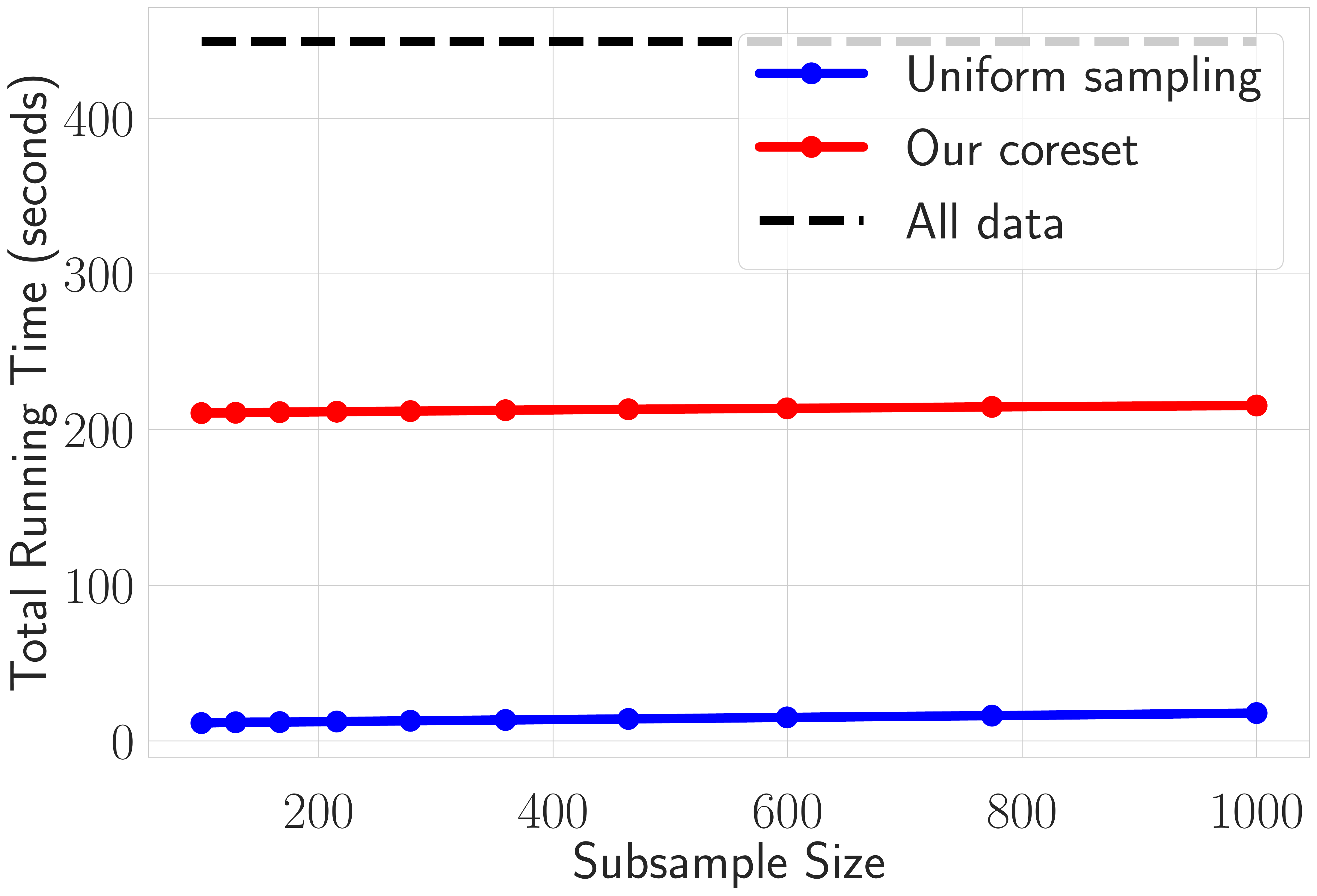}
\caption{}
\label{fig:proj_hour_2_2_Cauchy}
\end{subfigure}
\begin{subfigure}[t]{.49\textwidth}
\centering
\includegraphics[width=.49\textwidth]{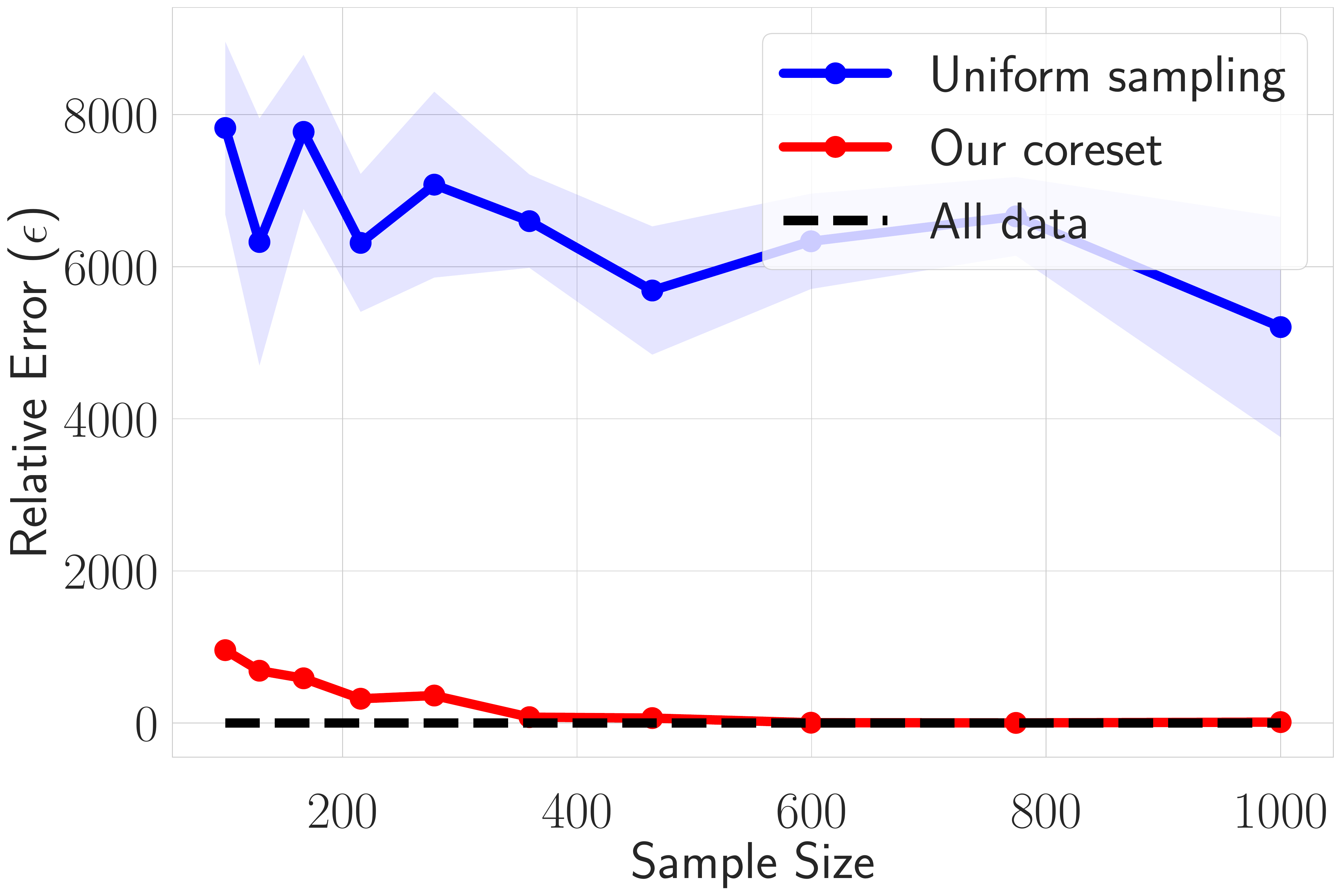}
\includegraphics[width=.49\textwidth]{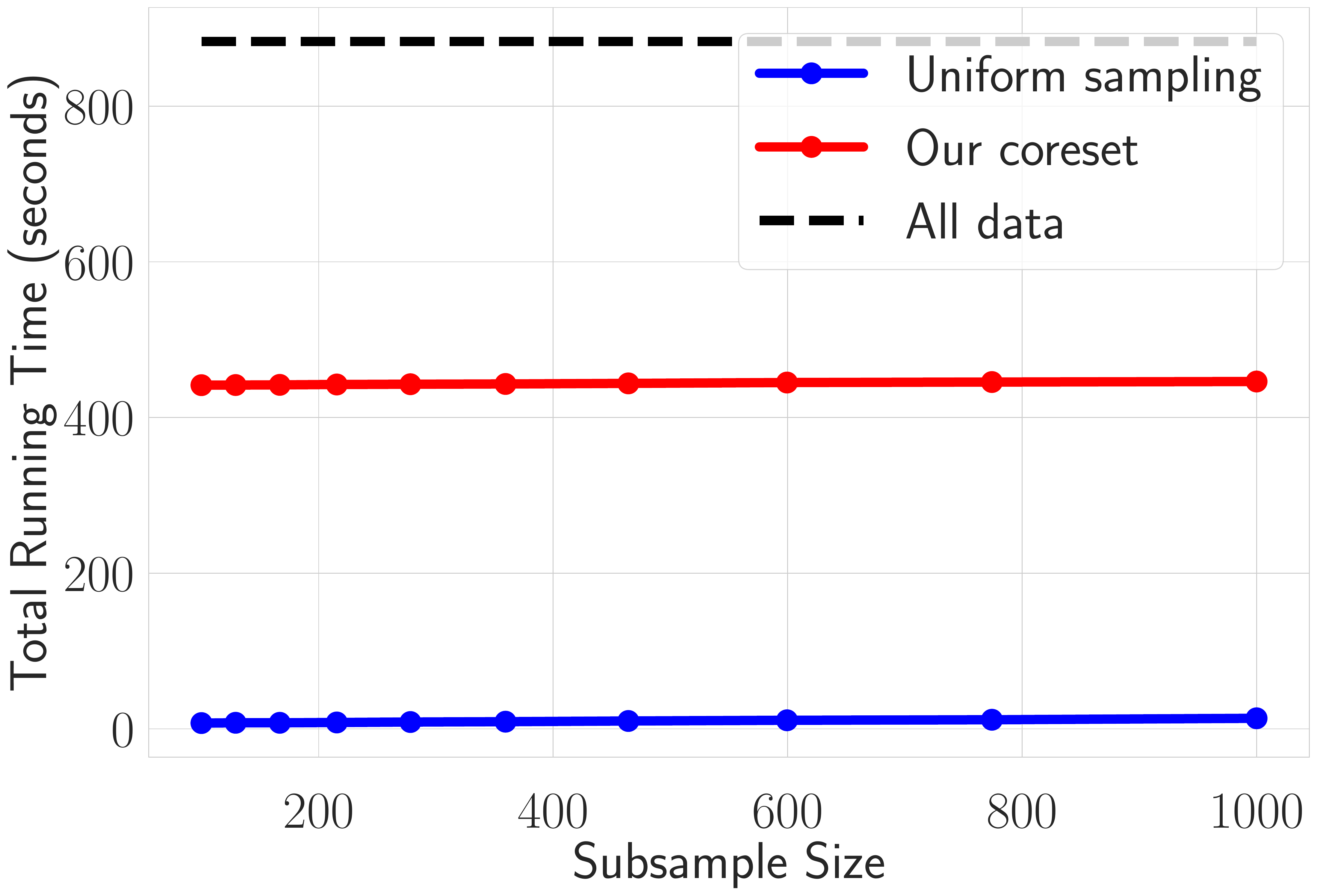}
\caption{}
\label{fig:proj_CASP_2_2_Tukey}
\end{subfigure}
\begin{subfigure}[t]{.49\textwidth}
\centering
\includegraphics[width=.49\textwidth]{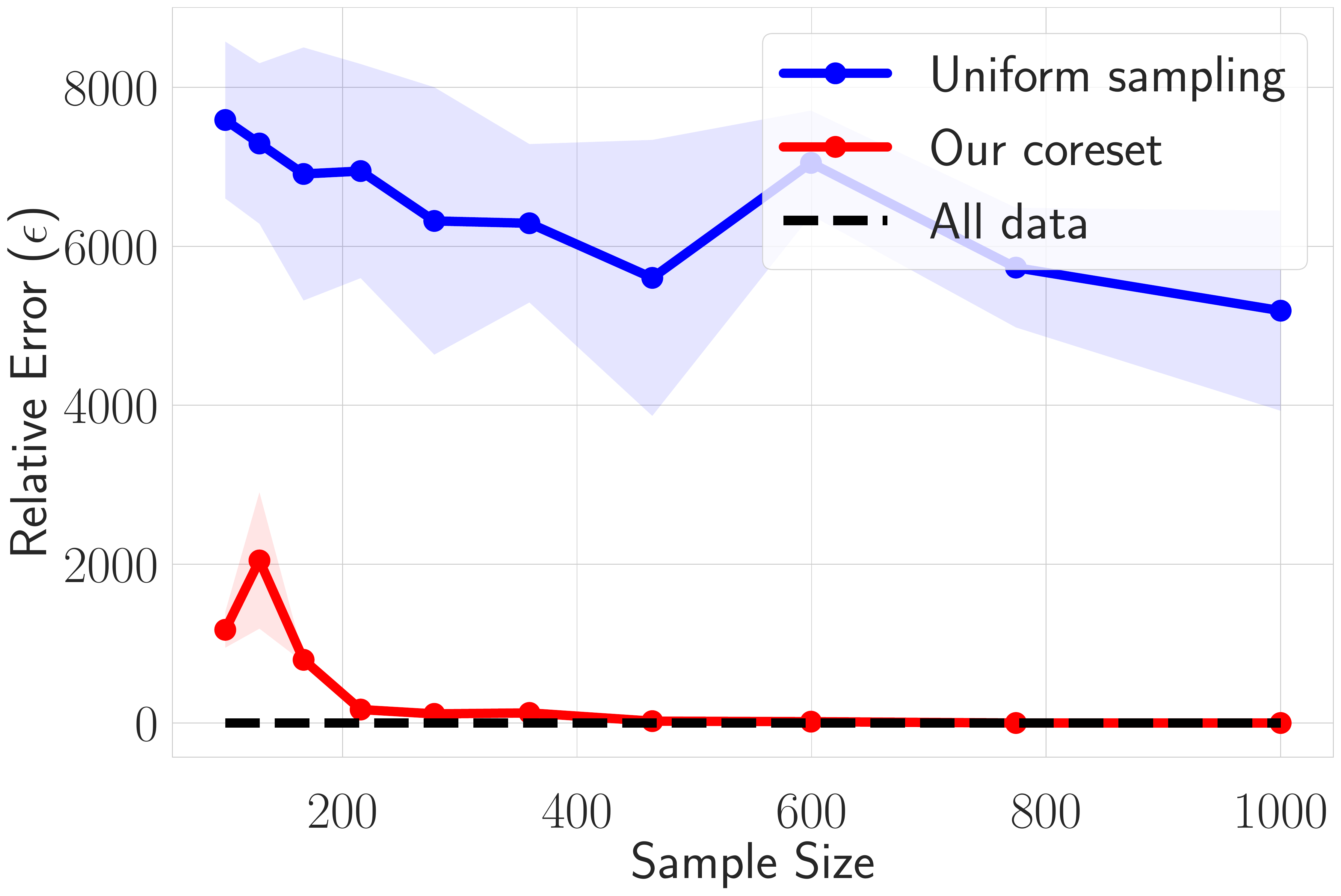}
\includegraphics[width=.49\textwidth]{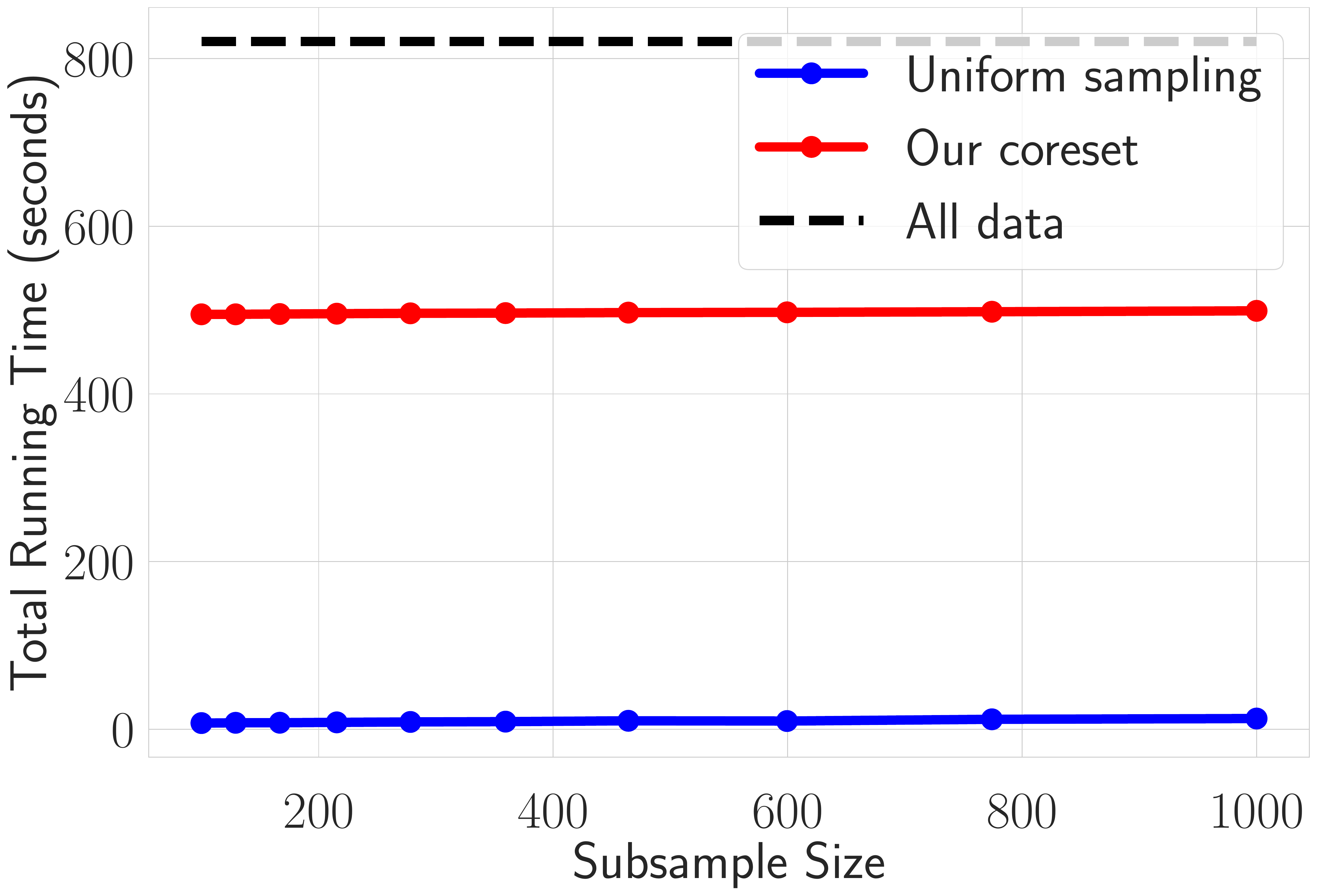}
\caption{}
\label{fig:proj_CASP_2_2_Welsch}
\end{subfigure}
\caption{Our experimental results: evaluating the efficacy of our coreset against uniform sampling.}
\label{fig:results}
\end{figure*}
In this section, we evaluate our coreset against uniform sampling on synthetic and real-world datasets, with respect to the projective clustering problem and its variants. 

\textbf{Software/Hardware.} Our algorithms were implemented~\cite{opencode} in Python 3.6~\citep{10.5555/1593511} using \say{Numpy}~\citep{oliphant2006guide}, \say{Scipy}~\citep{2020SciPy-NMeth}. Tests were performed on $2.59$GHz i$7$-$6500$U ($2$ cores total) machine with $16$GB RAM.

\textbf{Datasets.} The following datasets used for our experiments were mostly from UCI machine learning repository~\citep{Dua:2019}:
\begin{enumerate*}[label=(\roman*)]
    \item \label{dataset:1}\textbf{Synthetic} -- $20,000$ points in the two dimensional Euclidean space where $19,990$ points lie on the $x$-axis while the remaining $10$ points are generated away from the $x$-axis.
    \item \label{dataset:2}\textbf{Bike Sharing Dataset Data Set}~\citep{Dua:2019} -- consists of $17389$ samples, and $17$ features of which only $15$ were used for the sake of our comparisons. 
    \item \label{dataset:3}\textbf{Physicochemical Properties of Protein Tertiary Structure Data Set}~\citep{Dua:2019} -- $45,730$ samples, each consisting of $10$ features.
\end{enumerate*}

\textbf{Evaluation against uniform sampling.} Throughout the experiments, we have chosen $10$ sample sizes, starting from $100$ till $1,000$ for projective clustering problems and from $1,000$ till $10,000$ for regression problems; see Figure~\ref{fig:results}. At each sample size, we generate two coresets, where the first is using uniform sampling and the latter is using Algorithm~\ref{alg:full:projective}. When handling projective clustering problems, for each coreset $(S,v)$, we have computed a suboptimal solution $\tilde{H} \in \calH_j$ using an EM-like algorithm (Expectation Maximization) where the number of steps for convergence was $6$ while the number of different initializations was set to $1,000$. E.g., in Figure~\ref{fig:proj_hour_2_2_l2}, the goal was to find an suboptimal solution $\tilde{H}$ for the problem $\min_{H \in \calH_j} \sum_{p \in S} v(p) \mathrm{dist}\left( p, H\left( X,v \right)\right)^2$.
As for regression related problems, we have computed the suboptimal solution using Scipy's~\citep{2020SciPy-NMeth} own optimization sub-library which can handle such problem instances, where similarly to the projective clustering settings, we have ran the solver for $100$ iterations (at max) while having at max $15,000$ different initializations for the solver. The approximation error $\varepsilon$ is set to be the ratio $\sum_{p \in P} f\left(\mathrm{dist} \left(p, \tilde{H}\left( X, v\right)\right) \right)$ to $\left(\min_{H\in\calH_j} \sum_{p \in P} f\left(\mathrm{dist}\left( p, H(X,v)\right) \right) \right) - 1$. Finally, the results were averaged across $22$ trials, while the shaded regions correspond to the standard deviation.

\textbf{Choice of baseline.}
We remark that uniform sampling was selected as the baseline for our algorithm because the only existing coreset construction with theoretical guarantees for the integer $(j,k)$-projective clustering problem is that of \cite{EdwardsV05}. 
However, their construction is known to be impractical due to the large coreset size. 
In fact, even the base case requires a number of points that is exponential in $d$; thus we could not implement the coreset construction of \cite{EdwardsV05}. 
In practice uniform sampling is used due to the observation that real-world data is often not ``worst-case'' data. 
Thus it is a natural choice to compare the performance of our algorithm to that of uniform sampling across a number of real-world datasets, even though it is clear that we can generate synthetic data for which uniform sampling can perform arbitrarily badly due to its lack of provable guarantees, while our coreset constructions still maintains its theoretical guarantees.  

\textbf{Discussion.}
First note that our coresets are generally more accurate than uniform sampling across the experiments, sometimes outperforming uniform sampling by a factor of $\approx 10000$, e.g., $(2,2)$-projective clustering with the Tukey loss function in Figure~\ref{fig:proj_CASP_2_2_Tukey}. 
Moreover, there exist data distributions in which uniform sampling provably performs \emph{arbitrarily} worse than our coreset construction. 
For example, consider choosing $k=2$ centers across $n$ points when $n-1$ points are located at the origin and a single point is located at the position $N$ on the $x$-axis. 
Then the optimal clustering has cost zero by choosing a center at the origin and a center at $N$, but uniform sampling will not find the point at $N$ without $\Omega(N)$ samples and thus incur cost $N$. Since our coreset finds a multiplicative approximation to the optimal solution, it will also achieve a clustering with cost zero, which is arbitrarily better than $N$, sampling only $\polylog(n)$ points. On the other hand, in some of the figures, e.g., Figure~\ref{fig:proj_CASP_2_2_Tukey}, as we increase the sample size, the approximation error that corresponds to our coreset might increase at some sample sizes. This phenomenon is associated with the probabilistic nature of our coreset, as our coreset is a result of a sensitivity sampling technique. This problem can be easily resolved via increasing the number of trials (the number of trials was chosen to be $22$). The same holds for uniform sampling.

Although our coreset is generally better in terms of approximation error than uniform sampling, however the running time of our implementation is slow. We strongly believe that our algorithm can achieve faster results using the merge-and-reduce tree on the expense of an increase in the approximation error. For additional results, see Section~\ref{sec:exp_ext} at the appendix.

\section{CONCLUSIONS AND FUTURE WORK}
In this paper, we have provided an $L_\infty$ and $L_2$ coresets for $(k,j)$-projective clustering problems and its variants, e.g., $M$-estimators. Our approach leveraged an elegant combination between L\"{o}wner ellipsoid and Carath\'{e}odory's theorem. This in term sheds light on the use of constant-approximation coresets (our $L_\infty$ coreset) as a stepping stone towards $L_2$ coresets with $\varepsilon$ approximation. We believe that there is room for future work with respect to constructing $L_\infty$-coresets with smaller sizes for constant factor approximation. Finally, the lower bound on the size of constant factor coresets for the $(j,k)$-projective clustering problem is still unknown. We hope our work presents an important step in resolving the complexity of this problem.

\section{ACKNOWLEDGEMENTS}
This research was partially supported by the Israel National Cyber Directorate via the BIU Center for Applied Research in Cyber Security, and supported in part by NSF CAREER grant 1652257, NSF grant 1934979, ONR Award N00014-18-1-2364 and the Lifelong Learning Machines program from DARPA/MTO. In addition, Samson Zhou would like to thank National Institute of Health grant 5401 HG 10798-2 and a Simons Investigator Award of David P. Woodruff.

\bibliographystyle{apalike}
\bibliography{main}

\begin{thebibliography}{}

\bibitem[Ackermann and Bl{\"{o}}mer, 2009]{AckermannB09}
Ackermann, M.~R. and Bl{\"{o}}mer, J. (2009).
\newblock Coresets and approximate clustering for bregman divergences.
\newblock In {\em Proceedings of the Twentieth Annual {ACM-SIAM} Symposium on
  Discrete Algorithms, {SODA}}, pages 1088--1097.

\bibitem[Agarwal et~al., 2006]{AgarwalHY06}
Agarwal, P.~K., Har{-}Peled, S., and Yu, H. (2006).
\newblock Robust shape fitting via peeling and grating coresets.
\newblock In {\em Proceedings of the Seventeenth Annual {ACM-SIAM} Symposium on
  Discrete Algorithms, {SODA}}, pages 182--191.

\bibitem[Aggarwal et~al., 1999]{AggarwalPWYP99}
Aggarwal, C.~C., Procopiuc, C.~M., Wolf, J.~L., Yu, P.~S., and Park, J.~S.
  (1999).
\newblock Fast algorithms for projected clustering.
\newblock In {\em {SIGMOD}, Proceedings {ACM} {SIGMOD} International Conference
  on Management of Data}, pages 61--72. {ACM} Press.

\bibitem[Aggarwal and Yu, 2000]{AggarwalY00}
Aggarwal, C.~C. and Yu, P.~S. (2000).
\newblock Finding generalized projected clusters in high dimensional spaces.
\newblock In {\em Proceedings of the {ACM} {SIGMOD} International Conference on
  Management of Data}, pages 70--81.

\bibitem[Aloise et~al., 2009]{AloiseDHP09}
Aloise, D., Deshpande, A., Hansen, P., and Popat, P. (2009).
\newblock Np-hardness of euclidean sum-of-squares clustering.
\newblock {\em Mach. Learn.}, 75(2):245--248.

\bibitem[Assadi et~al., 2019]{AssadiBBMS19}
Assadi, S., Bateni, M., Bernstein, A., Mirrokni, V.~S., and Stein, C. (2019).
\newblock Coresets meet {EDCS:} algorithms for matching and vertex cover on
  massive graphs.
\newblock In {\em Proceedings of the Thirtieth Annual {ACM-SIAM} Symposium on
  Discrete Algorithms, {SODA}}, pages 1616--1635.

\bibitem[Badoiu et~al., 2002]{BadoiuHI02}
Badoiu, M., Har{-}Peled, S., and Indyk, P. (2002).
\newblock Approximate clustering via core-sets.
\newblock In {\em Proceedings on 34th Annual {ACM} Symposium on Theory of
  Computing}, pages 250--257. {ACM}.

\bibitem[Braverman et~al., 2020]{BravermanDMMUWZ20}
Braverman, V., Drineas, P., Musco, C., Musco, C., Upadhyay, J., Woodruff,
  D.~P., and Zhou, S. (2020).
\newblock Near optimal linear algebra in the online and sliding window models.
\newblock In {\em 61st {IEEE} Annual Symposium on Foundations of Computer
  Science, {FOCS}}, pages 517--528.

\bibitem[Braverman et~al., 2019]{BravermanLUZ19}
Braverman, V., Lang, H., Ullah, E., and Zhou, S. (2019).
\newblock Improved algorithms for time decay streams.
\newblock In {\em Approximation, Randomization, and Combinatorial Optimization.
  Algorithms and Techniques, {APPROX/RANDOM}}, pages 27:1--27:17.

\bibitem[Carath{\'e}odory, 1907]{Caratheodory07}
Carath{\'e}odory, C. (1907).
\newblock {\"U}ber den variabilit{\"a}tsbereich der koeffizienten von
  potenzreihen, die gegebene werte nicht annehmen.
\newblock {\em Mathematische Annalen}, 64(1):95--115.

\bibitem[Chakrabarti and Mehrotra, 2000]{ChakrabartiM00}
Chakrabarti, K. and Mehrotra, S. (2000).
\newblock Local dimensionality reduction: {A} new approach to indexing high
  dimensional spaces.
\newblock In {\em {VLDB}, Proceedings of 26th International Conference on Very
  Large Data Bases}, pages 89--100.

\bibitem[Chen, 2009]{Chen09}
Chen, K. (2009).
\newblock On coresets for k-median and k-means clustering in metric and
  euclidean spaces and their applications.
\newblock {\em {SIAM} J. Comput.}, 39(3):923--947.

\bibitem[Clarkson et~al., 2019]{clarkson2019dimensionality}
Clarkson, K., Wang, R., and Woodruff, D. (2019).
\newblock Dimensionality reduction for tukey regression.
\newblock In {\em International Conference on Machine Learning}, pages
  1262--1271. PMLR.

\bibitem[Clarkson, 2008]{Clarkson08}
Clarkson, K.~L. (2008).
\newblock Coresets, sparse greedy approximation, and the frank-wolfe algorithm.
\newblock In {\em Proceedings of the Nineteenth Annual {ACM-SIAM} Symposium on
  Discrete Algorithms, {SODA}}, pages 922--931.

\bibitem[Clarkson and Woodruff, 2015]{ClarksonW15}
Clarkson, K.~L. and Woodruff, D.~P. (2015).
\newblock Input sparsity and hardness for robust subspace approximation.
\newblock In {\em {IEEE} 56th Annual Symposium on Foundations of Computer
  Science, {FOCS}}, pages 310--329.

\bibitem[Dasgupta et~al., 2008]{DasguptaDHKM08}
Dasgupta, A., Drineas, P., Harb, B., Kumar, R., and Mahoney, M.~W. (2008).
\newblock Sampling algorithms and coresets for $\ell_p$ regression.
\newblock In {\em Proceedings of the Nineteenth Annual {ACM-SIAM} Symposium on
  Discrete Algorithms, {SODA}}, pages 932--941.

\bibitem[Deshpande et~al., 2006]{DeshpandeRVW06}
Deshpande, A., Rademacher, L., Vempala, S.~S., and Wang, G. (2006).
\newblock Matrix approximation and projective clustering via volume sampling.
\newblock {\em Theory of Computing}, 2(12):225--247.

\bibitem[Deshpande and Varadarajan, 2007]{DeshpandeV07}
Deshpande, A. and Varadarajan, K.~R. (2007).
\newblock Sampling-based dimension reduction for subspace approximation.
\newblock In {\em Proceedings of the 39th Annual {ACM} Symposium on Theory of
  Computing}, pages 641--650.

\bibitem[Dua and Graff, 2017]{Dua:2019}
Dua, D. and Graff, C. (2017).
\newblock {UCI} machine learning repository.

\bibitem[Edwards and Varadarajan, 2005]{EdwardsV05}
Edwards, M. and Varadarajan, K.~R. (2005).
\newblock No coreset, no cry: {II}.
\newblock In {\em {FSTTCS}: Foundations of Software Technology and Theoretical
  Computer Science, 25th International Conference, Proceedings}, pages
  107--115.

\bibitem[Feldman, 2020]{feldman2020core}
Feldman, D. (2020).
\newblock Core-sets: Updated survey.
\newblock In {\em Sampling Techniques for Supervised or Unsupervised Tasks},
  pages 23--44. Springer.

\bibitem[Feldman et~al., 2006]{FeldmanFS06}
Feldman, D., Fiat, A., and Sharir, M. (2006).
\newblock Coresets for weighted facilities and their applications.
\newblock In {\em 47th Annual {IEEE} Symposium on Foundations of Computer
  Science, {FOCS} Proceedings}, pages 315--324.

\bibitem[Feldman and Langberg, 2011]{FeldmanL11}
Feldman, D. and Langberg, M. (2011).
\newblock A unified framework for approximating and clustering data.
\newblock In {\em Proceedings of the 43rd {ACM} Symposium on Theory of
  Computing, {STOC}}, pages 569--578.

\bibitem[Feldman et~al., 2010]{FeldmanMSW10}
Feldman, D., Monemizadeh, M., Sohler, C., and Woodruff, D.~P. (2010).
\newblock Coresets and sketches for high dimensional subspace approximation
  problems.
\newblock In {\em Proceedings of the Twenty-First Annual {ACM-SIAM} Symposium
  on Discrete Algorithms, {SODA}}, pages 630--649. {SIAM}.

\bibitem[Feldman et~al., 2020]{FeldmanSS20}
Feldman, D., Schmidt, M., and Sohler, C. (2020).
\newblock Turning big data into tiny data: Constant-size coresets for k-means,
  pca, and projective clustering.
\newblock {\em {SIAM} J. Comput.}, 49(3):601--657.

\bibitem[Feldman and Schulman, 2012]{FeldmanS12}
Feldman, D. and Schulman, L.~J. (2012).
\newblock Data reduction for weighted and outlier-resistant clustering.
\newblock In {\em Proceedings of the Twenty-Third Annual {ACM-SIAM} Symposium
  on Discrete Algorithms, {SODA} 2012, Kyoto, Japan, January 17-19, 2012},
  pages 1343--1354.

\bibitem[Frahling and Sohler, 2005]{FrahlingS05}
Frahling, G. and Sohler, C. (2005).
\newblock Coresets in dynamic geometric data streams.
\newblock In {\em Proceedings of the 37th Annual {ACM} Symposium on Theory of
  Computing}, pages 209--217.

\bibitem[Frahling and Sohler, 2008]{FrahlingS08}
Frahling, G. and Sohler, C. (2008).
\newblock A fast k-means implementation using coresets.
\newblock {\em Int. J. Comput. Geometry Appl.}, 18(6):605--625.

\bibitem[Har{-}Peled, 2004]{Har-Peled04}
Har{-}Peled, S. (2004).
\newblock No, coreset, no cry.
\newblock In {\em {FSTTCS}: Foundations of Software Technology and Theoretical
  Computer Science, 24th International Conference, Proceedings}, pages
  324--335.

\bibitem[Har{-}Peled and Mazumdar, 2004]{Har-PeledM04}
Har{-}Peled, S. and Mazumdar, S. (2004).
\newblock On coresets for k-means and k-median clustering.
\newblock In {\em Proceedings of the 36th Annual {ACM} Symposium on Theory of
  Computing}, pages 291--300. {ACM}.

\bibitem[Har{-}Peled and Varadarajan, 2002]{Har-PeledV02}
Har{-}Peled, S. and Varadarajan, K.~R. (2002).
\newblock Projective clustering in high dimensions using core-sets.
\newblock In {\em Proceedings of the 18th Annual Symposium on Computational
  Geometry}, pages 312--318.

\bibitem[Huang et~al., 2018]{HuangJLW18}
Huang, L., Jiang, S.~H., Li, J., and Wu, X. (2018).
\newblock Epsilon-coresets for clustering (with outliers) in doubling metrics.
\newblock In {\em 59th {IEEE} Annual Symposium on Foundations of Computer
  Science, {FOCS}}, pages 814--825.

\bibitem[Huang and Vishnoi, 2020]{HuangV20}
Huang, L. and Vishnoi, N.~K. (2020).
\newblock Coresets for clustering in euclidean spaces: importance sampling is
  nearly optimal.
\newblock In {\em Proccedings of the 52nd Annual {ACM} {SIGACT} Symposium on
  Theory of Computing, {STOC}}, pages 1416--1429.

\bibitem[John, 2014]{John14}
John, F. (2014).
\newblock Extremum problems with inequalities as subsidiary conditions.
\newblock In {\em Traces and emergence of nonlinear programming}, pages
  197--215. Springer.

\bibitem[Jubran et~al., 2020]{jubran2020sets}
Jubran, I., Tukan, M., Maalouf, A., and Feldman, D. (2020).
\newblock Sets clustering.
\newblock In {\em International Conference on Machine Learning}, pages
  4994--5005. PMLR.

\bibitem[Kerber and Raghvendra, 2015]{KerberR15}
Kerber, M. and Raghvendra, S. (2015).
\newblock Approximation and streaming algorithms for projective clustering via
  random projections.
\newblock In {\em Proceedings of the 27th Canadian Conference on Computational
  Geometry, {CCCG}}.

\bibitem[Lee et~al., 2017]{LeeSW17}
Lee, E., Schmidt, M., and Wright, J. (2017).
\newblock Improved and simplified inapproximability for k-means.
\newblock {\em Inf. Process. Lett.}, 120:40--43.

\bibitem[Maalouf et~al., 2019]{maalouf2019fast}
Maalouf, A., Jubran, I., and Feldman, D. (2019).
\newblock Fast and accurate least-mean-squares solvers.
\newblock {\em Advances in Neural Information Processing Systems}, 32.

\bibitem[Maalouf et~al., 2021]{maalouf2021coresets}
Maalouf, A., Jubran, I., Tukan, M., and Feldman, D. (2021).
\newblock Coresets for the average case error for finite query sets.
\newblock {\em Sensors}, 21(19):6689.

\bibitem[Maalouf et~al., 2020]{maalouf2020tight}
Maalouf, A., Statman, A., and Feldman, D. (2020).
\newblock Tight sensitivity bounds for smaller coresets.
\newblock In {\em Proceedings of the 26th ACM SIGKDD international conference
  on knowledge discovery \& data mining}, pages 2051--2061.

\bibitem[Munteanu et~al., 2018]{MunteanuSSW18}
Munteanu, A., Schwiegelshohn, C., Sohler, C., and Woodruff, D.~P. (2018).
\newblock On coresets for logistic regression.
\newblock In {\em Advances in Neural Information Processing Systems 31: Annual
  Conference on Neural Information Processing Systems, NeurIPS}, pages
  6562--6571.

\bibitem[Mussay et~al., 2020]{MussayOBZF20}
Mussay, B., Osadchy, M., Braverman, V., Zhou, S., and Feldman, D. (2020).
\newblock Data-independent neural pruning via coresets.
\newblock In {\em 8th International Conference on Learning Representations,
  {ICLR}}.

\bibitem[Oliphant, 2006]{oliphant2006guide}
Oliphant, T.~E. (2006).
\newblock {\em A guide to NumPy}, volume~1.
\newblock Trelgol Publishing USA.

\bibitem[Phillips and Tai, 2018]{PhillipsT18}
Phillips, J.~M. and Tai, W.~M. (2018).
\newblock Improved coresets for kernel density estimates.
\newblock In {\em Proceedings of the Twenty-Ninth Annual {ACM-SIAM} Symposium
  on Discrete Algorithms, {SODA}}, pages 2718--2727.

\bibitem[Procopiuc, 2010]{Procopiuc10}
Procopiuc, C.~M. (2010).
\newblock Projective clustering.
\newblock In {\em Encyclopedia of Machine Learning}, pages 806--811. Springer.

\bibitem[Procopiuc et~al., 2002]{ProcopiucJAM02}
Procopiuc, C.~M., Jones, M., Agarwal, P.~K., and Murali, T.~M. (2002).
\newblock A monte carlo algorithm for fast projective clustering.
\newblock In {\em Proceedings of the {ACM} {SIGMOD} International Conference on
  Management of Data}, pages 418--427.

\bibitem[Sohler and Woodruff, 2018]{SohlerW18}
Sohler, C. and Woodruff, D.~P. (2018).
\newblock Strong coresets for k-median and subspace approximation: Goodbye
  dimension.
\newblock In {\em 59th {IEEE} Annual Symposium on Foundations of Computer
  Science, {FOCS}}, pages 802--813.

\bibitem[Steinitz, 1913]{Steinitz13}
Steinitz, E. (1913).
\newblock Bedingt konvergente reihen und konvexe systeme.
\newblock {\em Journal f{\"u}r die reine und angewandte Mathematik (Crelles
  Journal)}, 1913(143):128--176.

\bibitem[Todd and Y{\i}ld{\i}r{\i}m, 2007]{todd2007khachiyan}
Todd, M.~J. and Y{\i}ld{\i}r{\i}m, E.~A. (2007).
\newblock On khachiyan's algorithm for the computation of minimum-volume
  enclosing ellipsoids.
\newblock {\em Discrete Applied Mathematics}, 155(13):1731--1744.

\bibitem[Tukan et~al., 2021a]{tukan2021coresets}
Tukan, M., Baykal, C., Feldman, D., and Rus, D. (2021a).
\newblock On coresets for support vector machines.
\newblock {\em Theoretical Computer Science}.

\bibitem[Tukan et~al., 2020]{tukan2020coresets}
Tukan, M., Maalouf, A., and Feldman, D. (2020).
\newblock Coresets for near-convex functions.
\newblock {\em Advances in Neural Information Processing Systems}, 33.

\bibitem[Tukan et~al., 2021b]{tukan2021no}
Tukan, M., Maalouf, A., Weksler, M., and Feldman, D. (2021b).
\newblock No fine-tuning, no cry: Robust svd for compressing deep networks.
\newblock {\em Sensors}, 21(16):5599.

\bibitem[Tukan et~al., 2022]{opencode}
Tukan, M., Wu, X., Zhou, S., Braverman, V., and Feldman, D. (2022).
\newblock Open source code for all the algorithms presented in this paper.
\newblock \href{https://github.com/muradtuk/ProjectiveClusteringCoresets}{Link
  for open-source code.}

\bibitem[Van~Rossum and Drake, 2009]{10.5555/1593511}
Van~Rossum, G. and Drake, F.~L. (2009).
\newblock {\em Python 3 Reference Manual}.
\newblock CreateSpace, Scotts Valley, CA.

\bibitem[Varadarajan and Xiao, 2012a]{varadarajan2012sensitivity}
Varadarajan, K. and Xiao, X. (2012a).
\newblock On the sensitivity of shape fitting problems.
\newblock {\em arXiv preprint arXiv:1209.4893}.

\bibitem[Varadarajan and Xiao, 2012b]{Varadarajan2012ANA}
Varadarajan, K.~R. and Xiao, X. (2012b).
\newblock A near-linear algorithm for projective clustering integer points.
\newblock In {\em SODA}.

\bibitem[Varadarajan and Xiao, 2012c]{VaradarajanX12}
Varadarajan, K.~R. and Xiao, X. (2012c).
\newblock On the sensitivity of shape fitting problems.
\newblock In {\em {IARCS} Annual Conference on Foundations of Software
  Technology and Theoretical Computer Science, {FSTTCS}}, pages 486--497.

\bibitem[{Virtanen} et~al., 2020]{2020SciPy-NMeth}
{Virtanen}, P., {Gommers}, R., {Oliphant}, T.~E., {Haberland}, M., {Reddy}, T.,
  {Cournapeau}, D., {Burovski}, E., {Peterson}, P., {Weckesser}, W., {Bright},
  J., {van der Walt}, S.~J., {Brett}, M., {Wilson}, J., {Jarrod Millman}, K.,
  {Mayorov}, N., {Nelson}, A. R.~J., {Jones}, E., {Kern}, R., {Larson}, E.,
  {Carey}, C., {Polat}, {\.I}., {Feng}, Y., {Moore}, E.~W., {Vand erPlas}, J.,
  {Laxalde}, D., {Perktold}, J., {Cimrman}, R., {Henriksen}, I., {Quintero},
  E.~A., {Harris}, C.~R., {Archibald}, A.~M., {Ribeiro}, A.~H., {Pedregosa},
  F., {van Mulbregt}, P., and {Contributors} (2020).
\newblock {SciPy 1.0: Fundamental Algorithms for Scientific Computing in
  Python}.
\newblock {\em Nature Methods}.

\end{thebibliography}

\clearpage
\appendix

\thispagestyle{empty}

\section{APPLICATIONS}
\label{supplement:app}
In what follows, we will show that $\ell_\infty$-coreset can serve a family of functions including (but not bounded to) $M$-estimators.

\subsection{$L_\infty$ Coreset for Cauchy Regression}
\begin{lemma}
\label{lem:cauchy}
Let $P\subseteq\mathbb{R}^d$ be a set of $n$ points, $b:P\to\mathbb{R}$, $\lambda\in\mathbb{R}$, and let $\Psi_{Cau}$ denote the Cauchy loss function.
Let $P'=\{\p\circ b(\p)\,|\,\p\in P\}$, where $\circ$ denotes vertical concatenation.
Let $C'$ be the output of a call to $L_\infty-\coreset(P',d)$ and let $C\subseteq P$ so that $C'=\{\q\circ b(\q)\,|\,\q\in C\}$.
Then for every $\w\in\mathbb{R}^d$,
$\max_{\p\in P}\Psi_{Cau} \left(|\p^\top\w-b(\p)|\right)\le 8(d+1)^3\cdot\max_{\q\in C}\Psi_{Cau}\left(|\q^\top\w-b(\q)|\right).$
\end{lemma}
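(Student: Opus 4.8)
The plan is to recognize Cauchy regression as an instance of projective clustering in the lifted space $\mathbb{R}^{d+1}$, exactly as in the proof of Lemma~\ref{lem:power}, but to extract the sharper constant $8(d+1)^3$ by feeding the base guarantee the exponent $z=2$ rather than $z=1$. Concretely, for a query $\w\in\mathbb{R}^d$ I would form the lifted direction $\w'=\w\circ(-1)\in\mathbb{R}^{d+1}$ and the lifted points $\p'=\p\circ b(\p)$, so that $(\p')^\top\w'=\p^\top\w-b(\p)$ and the residual equals, up to the common factor $\|\w'\|_2$, the distance from $\p'$ to the hyperplane $H(\w',0^{d+1})$. Note the case $\w=0^d$ needs no separate treatment, since it still produces the valid nonzero direction $\w'=0^d\circ(-1)$.

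First I would apply the base coreset guarantee, Theorem~\ref{thm:single:projective} (equivalently Theorem~\ref{thm:main:apps} with $f$ the identity, so $\rho=1$), to the lifted instance with exponent $z=2$. Since the dimension factor $2^{z+1}(d+1)^{1.5z}$ equals $8(d+1)^3$ at $z=2$, this yields
\[ \max_{\p\in P}\dist(\p',H(\w',0^{d+1}))^2\le 8(d+1)^3\max_{\q\in C}\dist(\q',H(\w',0^{d+1}))^2. \]
Cancelling the common factor $\|\w'\|_2^2$ converts this into a bound on squared residuals,
\[ \max_{\p\in P}\left|\p^\top\w-b(\p)\right|^2\le 8(d+1)^3\max_{\q\in C}\left|\q^\top\w-b(\q)\right|^2. \]
Writing $s:=\max_{\p\in P}|\p^\top\w-b(\p)|$ and $t:=\max_{\q\in C}|\q^\top\w-b(\q)|$, this is $s^2\le 8(d+1)^3\,t^2$, and moreover $s\ge t$ because $C\subseteq P$.

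The second step, which is really the only non-mechanical one, is to push this squared bound through the Cauchy loss without paying the extra factor $2$ that the generic power-bounded estimate would cost. The key structural fact I would establish is that $x\mapsto\Psi_{Cau}(x)/x^2$ is monotonically non-increasing on $(0,\infty)$: substituting $u=(x/\lambda)^2$ gives $\Psi_{Cau}(x)/x^2=\tfrac{1}{2}\log(1+u)/u$, and $\log(1+u)/u$ is decreasing in $u>0$, which I would verify by elementary calculus. Combining this with $s\ge t$ and the monotonicity of $\Psi_{Cau}$ gives
\[ \max_{\p\in P}\Psi_{Cau}\!\left(|\p^\top\w-b(\p)|\right)=\Psi_{Cau}(s)=\frac{\Psi_{Cau}(s)}{s^2}\,s^2\le\frac{\Psi_{Cau}(t)}{t^2}\,s^2\le 8(d+1)^3\,\Psi_{Cau}(t), \]
which is exactly the claimed inequality; the degenerate case $t=0$ forces $s=0$, so both sides vanish.

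The main obstacle is precisely this last transfer. Routing through the generic power-bounded Lemma~\ref{lem:power} with $z=2$ would only give $4^2(d+1)^3=16(d+1)^3$. The improvement to $8(d+1)^3$ rests on two observations: (i) invoking the base guarantee at $z=2$ directly, so the dimension factor $2^{z+1}(d+1)^{1.5z}$ already lands on $8(d+1)^3$; and (ii) exploiting the stronger \emph{monotonicity of the ratio} $\Psi_{Cau}(x)/x^2$ rather than merely the weaker inequality $\Psi_{Cau}(cx)\le c^2\Psi_{Cau}(x)$. Everything else reuses the lifting argument already employed for the power-bounded loss.
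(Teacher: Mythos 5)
Your proposal is correct and follows essentially the same route as the paper: lift to $\mathbb{R}^{d+1}$ via $\p'=\p\circ b(\p)$ and $\w'=\w\circ(-1)$, invoke the base $L_\infty$ guarantee at $z=2$ to obtain $s^2\le 8(d+1)^3 t^2$ on the residuals, and then transfer through the Cauchy loss using its subhomogeneity. The only cosmetic difference is in the last step, where the paper applies Bernoulli's inequality in the form $\log(1+cu)\le c\log(1+u)$ while you derive the same bound from the monotone decrease of $\log(1+u)/u$; these are equivalent, and your explicit handling of the $\|\w'\|_2$ normalization and of the degenerate cases is, if anything, slightly more careful than the paper's.
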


\begin{proof}
We first observe that the claim is trivially true for $\w=0^d$.
Thus it suffices to consider nonzero $\w\in\mathbb{R}^d$.
Let $Y\in\calH_{d-1}$ such that $\w^\top\Y=0^{d-1}$ and $\Y^\top\w=0^d$.
For each $\p\in P$, let $\p'=\p\circ b(\p)=\begin{bmatrix}\p\\ b(\p)\end{bmatrix}$ denote the vertical concatenation of $\p$ with $b(\p)$.
We also define the vertical concatenation $\w'=\w\circ(-1)=\begin{bmatrix}\w\\ -1\end{bmatrix}$.
By running $\coreset$ on $P'=\{\p'\,|\,\p\in P\}$ to obtain a coreset $C$, then we have by Theorem~\ref{thm:main:apps},
\begin{align*}
\max_{\p\in P}&\dist(\p',H(\w',0^{d+1}))^z\le2^{z+1}(d+1)^{1.5z}\max_{\q\in C}\dist(\q',H(\w',0^{d+1}))^z.
\end{align*}
Thus for $z=2$, we have for every $\p\in P$,
\[|(\p')^\top\w'|^2\le8(d+1)^3\max_{\q\in C}|(\q')^\top\w'|^2.\]
The Cauchy loss function is monotonically increasing, so that $\Psi_{Cau}(|\p^\top\x-b(\p)|)$ increases as $|\p^\top\x-b(\p)|$ increases.
Thus for every $\p\in P$,
\begin{align*}
\Psi_{Cau}(|\p^\top\w-b(\p)|)&=\Psi_{Cau}\left(\left|(\p')^\top\w'\right|\right)\\
&=\frac{\lambda^2}{2}\log\left(1+\left(\frac{\left|(\p')^\top\w'\right|}{\lambda}\right)^2\right)\\
&\le\max_{\q\in C}\frac{\lambda^2}{2}\log\left(1+8(d+1)^3\left(\frac{\left|(\q')^\top\w'\right|}{\lambda}\right)^2\right),
\end{align*}
where the inequality follows from the $L_\infty$-coreset property above and the monotonicity of the Cauchy loss function.
Thus by Bernoulli's inequality, we have
\begin{align*}
\Psi_{Cau}(|\p^\top\w-b(\p)|)&\le\max_{\q\in C}8(d+1)^3\cdot\frac{\lambda^2}{2}\log\left(1+\left(\frac{\left|(\q')^\top\w'\right|}{\lambda}\right)^2\right)\\
&=8(d+1)^3\max_{\q\in C}\Psi_{Cau}\left(\left|(\q')^\top\w'\right|\right)\\
&=8(d+1)^3\max_{\q\in C}\Psi_{Cau}\left(\q^\top\w-b(\q)\right).
\end{align*}
\end{proof}

\subsection{$L_\infty$ Coreset for Welsch Regression}
First, we will present the following as a stepping stone towards bounding the approximation error that our $L_\infty$-coreset achieves in the context of Welsch regression problem.
\begin{lemma}
\label{lem:welsch:struct}
Let $a\ge 1$ be a positive real number.
Then for every $x\in\mathbb{R}$,
\[1-e^{-a^2x^2}\le a^2(1-e^{-x^2}).\]
\end{lemma}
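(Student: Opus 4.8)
The plan is to collapse the two-variable inequality to a one-dimensional statement and then verify it by a monotonicity argument. First I would observe that both sides depend on $x$ only through $x^2$, so it suffices to set $t=x^2\ge 0$ and prove $1-e^{-a^2 t}\le a^2(1-e^{-t})$ for all $t\ge 0$ and all $a\ge 1$; the case of general real $x$ then follows immediately by substituting back $t=x^2$.

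Next I would introduce the gap function $g(t):=a^2(1-e^{-t})-(1-e^{-a^2 t})$ and show it is nonnegative on $[0,\infty)$. Since $g(0)=0$, it suffices to show that $g$ is nondecreasing. Differentiating gives $g'(t)=a^2 e^{-t}-a^2 e^{-a^2 t}=a^2\left(e^{-t}-e^{-a^2 t}\right)$. This is exactly where the hypothesis $a\ge 1$ enters: because $a^2\ge 1$ we have $a^2 t\ge t$ for every $t\ge 0$, hence $e^{-a^2 t}\le e^{-t}$ and therefore $g'(t)\ge 0$. Combining this with $g(0)=0$ yields $g(t)\ge 0$ for all $t\ge 0$, which is precisely the claimed inequality.

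A slightly slicker alternative avoids calculus by appealing directly to concavity. The map $\phi(s)=1-e^{-s}$ satisfies $\phi(0)=0$ and is concave, since $\phi''(s)=-e^{-s}<0$; consequently the chord slope $s\mapsto\phi(s)/s$ is nonincreasing on $(0,\infty)$. Applying this with the scaling factor $a^2\ge 1$ (so that $a^2 t\ge t$) gives $\phi(a^2 t)/(a^2 t)\le\phi(t)/t$, i.e. $\phi(a^2 t)\le a^2\phi(t)$, which is again the desired bound. I do not expect a serious obstacle in either route; the only point requiring care is keeping the inequality direction straight — namely, that scaling the argument of a concave-through-the-origin function up by $a^2\ge 1$ scales its value by \emph{at most} $a^2$ — and noting that it is the single hypothesis $a\ge 1$ that powers both arguments.
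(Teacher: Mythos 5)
Your first argument is essentially the paper's own proof: the paper considers $h(x)=a^2e^{-x^2}-e^{-a^2x^2}$, argues it is non-increasing in $x^2$ with maximum $a^2-1$ at $x=0$, and rearranges — your gap function $g(t)$ is just $(a^2-1)-h$, so the two monotonicity arguments are the same (and your explicit derivative computation actually justifies the monotonicity more carefully than the paper's one-line assertion). The concavity alternative you sketch is also valid and mirrors the chord-slope lemma the paper uses elsewhere for concave losses, but it is not needed here.
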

\begin{proof}
Since $e^{-x^2}$ decreases as $x^2$ increases, then $a^2e^{-x^2}-e^{-a^2x^2}$ is a monotonically non-increasing function that achieves its maximum at $x=0$.
In particular, the value of $a^2e^{-x^2}-e^{-a^2x^2}$ at $x=0$ is $a^2-1$, so that
\[a^2e^{-x^2}-e^{-a^2x^2}\le a^2-1.\]
Thus from rearranging the terms, we have that
\[1-e^{-a^2x^2}\le a^2(1-e^{-x^2}).\]
\end{proof}

\begin{lemma}
\label{lem:welsch}
Let $P\subseteq\mathbb{R}^d$ be a set of $n$ points, $b:P\to\mathbb{R}$, $\lambda\in\mathbb{R}$, and let $\Psi_{Wel}$ denote the Welsch loss function.
Let $P'=\{\p\circ b(\p)\,|\,\p\in P\}$, where $\circ$ denotes vertical concatenation.
Let $C'$ be the output of a call to $L_\infty-\coreset(P',d)$ and let $C\subseteq P$ so that $C'=\{\q\circ b(\q)\,|\,\q\in C\}$.
Then for every $\w\in\mathbb{R}^d$,
$\max_{\p\in P}\Psi_{Wel} \left(|\p^\top\w-b(\p)|\right)\le 8(d+1)^3\cdot\max_{\q\in C}\Psi_{Wel}\left(|\q^\top\w-b(\q)|\right).$
\end{lemma}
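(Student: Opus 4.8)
The plan is to mirror the structure of the Cauchy argument in Lemma~\ref{lem:cauchy}, replacing the concavity estimate for the logarithm by the dedicated structural bound of Lemma~\ref{lem:welsch:struct}. First I would dispose of the trivial case $\w=0^d$ and then lift the problem into $\mathbb{R}^{d+1}$ exactly as before: set $\p'=\p\circ b(\p)$ and $\w'=\w\circ(-1)$, so that $(\p')^\top\w'=\p^\top\w-b(\p)$, and pick $\Y\in\calH_{d-1}$ orthogonal to $\w$ so that $\dist(\p',H(\w',0^{d+1}))=|(\p')^\top\w'|$. Invoking Theorem~\ref{thm:main:apps} with $z=2$ on $P'$ then yields, for every $\p\in P$,
\[
|(\p')^\top\w'|^2\le 8(d+1)^3\,\max_{\q\in C}|(\q')^\top\w'|^2,
\]
which is the only place the coreset guarantee enters.

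Next I would convert this squared-residual inequality into a statement about the Welsch loss. Writing $a^2=8(d+1)^3$ (so $a\ge 1$ since $d>j\ge 1$), $X=|(\p')^\top\w'|/\lambda$, and $Y=\max_{\q\in C}|(\q')^\top\w'|/\lambda$, the displayed inequality reads $X^2\le a^2 Y^2$. Because $\Psi_{Wel}(x)=\tfrac{\lambda^2}{2}(1-e^{-(x/\lambda)^2})$ is monotonically non-decreasing in $|x|$, this first gives $1-e^{-X^2}\le 1-e^{-a^2Y^2}$. Applying Lemma~\ref{lem:welsch:struct} with this $a$ and argument $Y$ then pulls the factor $a^2$ outside the exponential, $1-e^{-a^2Y^2}\le a^2(1-e^{-Y^2})$, so that
\[
1-e^{-X^2}\le 8(d+1)^3\left(1-e^{-Y^2}\right).
\]
Since $1-e^{-Y^2}=\max_{\q\in C}\left(1-e^{-(|(\q')^\top\w'|/\lambda)^2}\right)$ by the same monotonicity, multiplying through by $\lambda^2/2$ and unwinding the definitions of $X$ and $Y$ recovers exactly $\max_{\p\in P}\Psi_{Wel}(|\p^\top\w-b(\p)|)\le 8(d+1)^3\max_{\q\in C}\Psi_{Wel}(|\q^\top\w-b(\q)|)$.

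The main obstacle, and the reason the generic Theorem~\ref{thm:main:apps} does not already suffice, is that the Welsch loss saturates: it is bounded above by $\lambda^2/2$ and is therefore not power-bounded, so a crude log-log-Lipschitz estimate would give a useless bound as the residual grows. The real content is Lemma~\ref{lem:welsch:struct}, which quantifies precisely how a multiplicative blow-up $X^2\le a^2Y^2$ inside the exponent translates into the same multiplicative factor $a^2$ on the whole loss $1-e^{-(\cdot)^2}$; once that inequality is in hand, the coreset guarantee for $z=2$ plugs in directly. The only point I would check carefully is the applicability hypothesis $a=\sqrt{8(d+1)^3}\ge 1$ of Lemma~\ref{lem:welsch:struct}, which is immediate here, and the correct matching of the exponent $z=2$ in Theorem~\ref{thm:main:apps} to the quadratic argument $(x/\lambda)^2$ appearing in $\Psi_{Wel}$.
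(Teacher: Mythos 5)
Your proposal is correct and follows essentially the same route as the paper: lift to $\mathbb{R}^{d+1}$, invoke Theorem~\ref{thm:main:apps} with $z=2$, use monotonicity of $\Psi_{Wel}$, and then apply Lemma~\ref{lem:welsch:struct} with $a^2=8(d+1)^3$ to extract the multiplicative factor. Your explicit substitution $a^2=8(d+1)^3$ is in fact cleaner than the paper's displayed intermediate step, which misplaces that factor inside the square before the structural lemma is applied.
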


\begin{proof}
We observe that the claim is trivially true for $\w=0^d$, so that it suffices to consider nonzero $\w\in\mathbb{R}^d$.
Let $Y\in\calH_{d-1}$, so that $\w^\top\Y=0^{d-1}$ and $\Y^\top\w=0^d$, and for each $\p\in P$, let $\p'=\p\circ b(\p)=\begin{bmatrix}\p\\ b(\p)\end{bmatrix}$ denote the vertical concatenation of $\p$ with $b(\p)$.
Let $\w'$ denote the vertical concatenation $\w'=\w\circ(-1)=\begin{bmatrix}\w\\ -1\end{bmatrix}$.
By Theorem~\ref{thm:main:apps}, we have that the output $C$ of $\coreset$ on $P'=\{\p'\,|\,\p\in P\}$ satisfies
\begin{align*}
\max_{\p\in P}&\dist(\p',H(\w',0^{d+1}))^z\le2^{z+1}(d+1)^{1.5z}\max_{\q\in C}\dist(\q',H(\w',0^{d+1}))^z.
\end{align*}
Thus for $z=2$, we have for every $\p\in P$,
\[|(\p')^\top\w'|^2\le8(d+1)^3\max_{\q\in C}|(\q')^\top\w'|^2.\]
The Welsch loss function is monotonically increasing, so that $\Psi_{Wel}(|\p^\top\x-b(\p)|)$ increases as $|\p^\top\x-b(\p)|$ increases.
Hence, for every $\p\in P$,
\begin{align*}
\Psi_{Wel}(|\p^\top\w-b(\p)|)&=\Psi_{Wel}\left(\left|(\p')^\top\w'\right|\right)\\
&=\frac{\lambda^2}{2}\left(1-e^{-\left(\frac{\left|(\p')^\top\w'\right|}{\lambda}\right)^2}\right)\\
&\le\max_{\q\in C}\frac{\lambda^2}{2}\left(1-e^{-\left(\frac{8(d+1)^3\left|(\q')^\top\w'\right|}{\lambda}\right)^2}\right),
\end{align*}
where the inequality results from the $L_\infty$-coreset property above and the monotonicity of the Welsch loss function.
By Lemma~\ref{lem:welsch:struct},
\begin{align*}
\Psi_{Wel}(|\p^\top\w-b(\p)|)&\le\max_{\q\in C}8(d+1)^3\cdot\frac{\lambda^2}{2}\left(1-e^{-\left(\frac{\left|(\q')^\top\w'\right|}{\lambda}\right)^2}\right)\\
&=8(d+1)^3\max_{\q\in C}\Psi_{Wel}\left(\left|(\q')^\top\w'\right|\right)\\
&=8(d+1)^3\max_{\q\in C}\Psi_{Wel}\left(|\q^\top\w-b(\q)|\right).
\end{align*}
\end{proof}

\subsection{$L_\infty$ coreset for Huber regression}
\begin{lemma}
\label{lem:huber}
Let $P\subseteq\mathbb{R}^d$ be a set of $n$ points, $b:P\to\mathbb{R}$, $\lambda\in\mathbb{R}$, and let $\Psi_{Hub}$ denote the Huber loss function.
Let $P'=\{\p\circ b(\p)\,|\,\p\in P\}$, where $\circ$ denotes vertical concatenation.
Let $C'$ be the output of a call to $L_\infty-\coreset(P',d)$ and let $C\subseteq P$ so that $C'=\{\q\circ b(\q)\,|\,\q\in C\}$.
Then for every $\w\in\mathbb{R}^d$,
$\max_{\p\in P}\Psi_{Hub} \left(|\p^\top\w-b(\p)|\right)\le 16(d+1)^3\cdot\max_{\q\in C}\Psi_{Hub}\left(|\q^\top\w-b(\q)|\right).$
\end{lemma}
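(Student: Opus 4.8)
The plan is to follow the template already used for Cauchy (Lemma~\ref{lem:cauchy}) and Welsch (Lemma~\ref{lem:welsch}) regression, reducing the statement to the $L_\infty$ guarantee of Theorem~\ref{thm:main:apps} and then controlling how the Huber loss distorts a multiplicative stretch of its argument. First I would dispose of the trivial case $\w=0^d$ and, for nonzero $\w$, form the vertical concatenations $\p'=\p\circ b(\p)$ and $\w'=\w\circ(-1)$, so that $|(\p')^\top\w'|=|\p^\top\w-b(\p)|$. Applying Theorem~\ref{thm:main:apps} with $z=2$ to the lifted point set $P'$ yields $\max_{\p\in P}|(\p')^\top\w'|^2\le 8(d+1)^3\max_{\q\in C}|(\q')^\top\w'|^2$, so that after taking square roots $\max_{\p\in P}|(\p')^\top\w'|\le\sqrt{8(d+1)^3}\,\max_{\q\in C}|(\q')^\top\w'|$.

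The crux, which differs from the Cauchy and Welsch arguments, is a structural inequality tailored to the piecewise definition of $\Psi_{Hub}$: for every $a\ge1$ and every $x\ge0$,
\[\Psi_{Hub}(ax)\le 2a^2\,\Psi_{Hub}(x).\]
I would prove this by splitting into three cases according to whether $x$ and $ax$ lie in the quadratic region $[0,\lambda]$ or the linear region $[\lambda,\infty)$. In the purely quadratic regime the bound holds with factor $a^2$; in the purely linear regime the ratio $\frac{\lambda ax-\lambda^2/2}{\lambda x-\lambda^2/2}$ is maximized at $x=\lambda$ with value $2a-1\le 2a\le 2a^2$; and the only delicate case is the mixed regime $\lambda/a<x\le\lambda$, where $\Psi_{Hub}(ax)=\lambda ax-\lambda^2/2$ while $\Psi_{Hub}(x)=x^2/2$. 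Here I would show $a^2x^2-\lambda ax+\lambda^2/2>0$ by observing that this upward-opening parabola in $x$ attains positive minimum value $\lambda^2/4$ at $x=\lambda/(2a)$, which gives $\Psi_{Hub}(ax)\le a^2x^2=2a^2\Psi_{Hub}(x)$.

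With the structural inequality in hand, I would set $a=\sqrt{8(d+1)^3}\ge1$ and $t=\max_{\q\in C}|(\q')^\top\w'|$. By monotonicity of $\Psi_{Hub}$ the left-hand maximum equals $\Psi_{Hub}(\max_{\p\in P}|(\p')^\top\w'|)\le\Psi_{Hub}(at)$, and the structural inequality then gives $\Psi_{Hub}(at)\le 2a^2\Psi_{Hub}(t)=16(d+1)^3\,\Psi_{Hub}(t)$. Since monotonicity likewise lets me identify $\Psi_{Hub}(t)=\max_{\q\in C}\Psi_{Hub}(|\q^\top\w-b(\q)|)$, this is exactly the claimed bound, and the extra factor of two relative to the $8(d+1)^3$ of the Cauchy and Welsch cases arises precisely from the mixed region of the loss. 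The main obstacle is verifying the structural inequality across the transition point $x=\lambda$, where the loss switches from quadratic to linear growth; once that single inequality is established the remainder is a routine substitution into Theorem~\ref{thm:main:apps}.
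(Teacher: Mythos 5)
Your proposal is correct, and it follows the same high-level reduction as the paper: dispose of $\w=0^d$, lift to $\p'=\p\circ b(\p)$ and $\w'=\w\circ(-1)$, invoke Theorem~\ref{thm:main:apps} with $z=2$ to get $\max_{\p\in P}|(\p')^\top\w'|^2\le 8(d+1)^3\max_{\q\in C}|(\q')^\top\w'|^2$, and then deal with the piecewise structure of the Huber loss by a three-way case analysis on which side of $\lambda$ the arguments fall. Where you differ is in how that case analysis is packaged. The paper carries it out inline on the concrete coreset quantities, splitting on whether $|(\p')^\top\w'|\le\lambda$ and whether $\max_{\q\in C}|(\q')^\top\w'|\le\lambda$, and in the mixed case chains together the bound $\Psi_{Hub}(y)\le\lambda y$ with the lower bound $\max_{\q\in C}|(\q')^\top\w'|\ge\lambda/(\sqrt{8}(d+1)^{1.5})$ to recover a factor $16(d+1)^3$. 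You instead isolate a clean, self-contained distortion inequality $\Psi_{Hub}(ax)\le 2a^2\Psi_{Hub}(x)$ for all $a\ge1$, $x\ge0$, verified by the same three regimes (your discriminant computation in the mixed regime, with minimum value $\lambda^2/4$ at $x=\lambda/(2a)$, is correct), and then apply it once with $a=\sqrt{8}(d+1)^{1.5}$ and $t=\max_{\q\in C}|(\q')^\top\w'|$ using monotonicity. This buys a tidier and more reusable argument: your structural inequality says precisely that Huber is power-bounded with exponent $2$ up to a constant factor $2$, which slots it into the framework of Lemma~\ref{lem:power} and makes transparent exactly where the extra factor of $2$ over the Cauchy/Welsch bound of $8(d+1)^3$ comes from, whereas the paper's inline version obscures this. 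Both proofs land on the same constant $16(d+1)^3$.
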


\begin{proof}
The claim is trivially true for $\w=0^d$; it remains to consider nonzero $\w\in\mathbb{R}^d$.
Let $Y\in\calH_{d-1}$, so that $\w^\top\Y=0^{d-1}$ and $\Y^\top\w=0^d$.
For each $\p\in P$, we use $\p'$ to denote the vertical concatenation of $\p$ with $b(\p)$, $\p':=\p\circ b(\p)=\begin{bmatrix}\p\\ b(\p)\end{bmatrix}$.
Similarly, we use $\w'$ to denote the vertical concatenation $\w'=\w\circ(-1)=\begin{bmatrix}\w\\ -1\end{bmatrix}$.
By Theorem~\ref{thm:main:apps}, we have that the output $C$ of $\coreset$ on $P'=\{\p'\,|\,\p\in P\}$ satisfies
\begin{align*}
\max_{\p\in P}\dist(\p',H(\w',0^{d+1}))^z&\le2^{z+1}(d+1)^{1.5z}\max_{\q\in C}\dist(\q',H(\w',0^{d+1}))^z.
\end{align*}
Thus for $z=2$, we have for every $\p\in P$,
\begin{align}
\label{eqn:huber:base}
|(\p')^\top\w'|^2\le8(d+1)^3\max_{\q\in C}|(\q')^\top\w'|^2.
\end{align}
We now consider casework for whether $|(\p')^\top\w'|\le\lambda$ or $|(\p')^\top\w'|>\lambda$.

If $|(\p')^\top\w'|\le\lambda$, then we immediately have from (\ref{eqn:huber:base}) and the fact that $C\subseteq P$ that
\[\Psi_{Hub}\left(|(\p')^\top\w'|\right)\le 8(d+1)^3\max_{\q\in C}\Psi_{Hub}\left(|(\q')^\top\w'|\right).\]
On the other hand if $|(\p')^\top\w'|>\lambda$, we further consider casework for whether $\max_{\q\in C}|(\q')^\top\w'|\le\lambda$ or $\max_{\q\in C}|(\q')^\top\w'|>\lambda$.
If $\max_{\q\in C}|(\q')^\top\w'|>\lambda$, then we again have from (\ref{eqn:huber:base}) and the fact that $C\subseteq P$ that
\[\Psi_{Hub}\left(|(\p')^\top\w'|\right)\le 8(d+1)^3\max_{\q\in C}\Psi_{Hub}\left(|(\q')^\top\w'|\right).\]
Finally, if $|(\p')^\top\w'|>\lambda$ but $\max_{\q\in C}|(\q')^\top\w'|\le\lambda$, then we observe that from (\ref{eqn:huber:base}) and the assumption that $|(\p')^\top\w'|>\lambda$, we have
\[\frac{\lambda}{\sqrt{8}(d+1)^{1.5}}\le\max_{\q\in C}|(\q')^\top\w'|.\]
Thus if $|(\p')^\top\w'|>\lambda$, then
\begin{align*}
\Psi_{Hub}(|\p^\top\w-b(\p)|)&=\Psi_{Hub}\left(\left|(\p')^\top\w'\right|\right)\\
&=\lambda\left(\left|(\p')^\top\w'\right|-\frac{\lambda}{2}\right)\\
&\le\lambda\left(\left|(\p')^\top\w'\right|\right)\\
&\le\sqrt{8}\lambda(d+1)^{1.5}\left(\max_{\q\in C}\left|(\q')^\top\w'\right|\right),
\end{align*}
where the last inequality results from the $L_\infty$-coreset property in (\ref{eqn:huber:base}) above.
Therefore,
\begin{align*}
\Psi_{Hub}(|\p^\top\w-b(\p)|)&\le\frac{\lambda}{\sqrt{8}(d+1)^{1.5}}\cdot8(d+1)^3\left(\max_{\q\in C}\left|(\q')^\top\w'\right|\right)\\
&\le8(d+1)^3\left(\max_{\q\in C}\left|(\q')^\top\w'\right|^2\right)\\
&\le16(d+1)^3\max_{\q\in C}\Psi_{Hub}(|\q^\top\w-b(\q)|).
\end{align*}
Thus in all cases, we have
\begin{align*}
\max_{\p\in P}&\Psi_{Hub}\left(|\p^\top\w-b(\p)|\right)\le16(d+1)^3\cdot\max_{\q\in C}\Psi_{Hub}\left(|\q^\top\w-b(\q)|\right).
\end{align*}
\end{proof}

\subsection{$L_\infty$ coreset for Geman-McClure regression}
\begin{lemma}
\label{lem:gm}
Let $P\subseteq\mathbb{R}^d$ be a set of $n$ points, $b:P\to\mathbb{R}$, $\lambda\in\mathbb{R}$, and let $\Psi_{GM}$ denote the Geman-McClure loss function.
Let $P'=\{\p\circ b(\p)\,|\,\p\in P\}$, where $\circ$ denotes vertical concatenation.
Let $C'$ be the output of a call to $L_\infty-\coreset(P',d)$ and let $C\subseteq P$ so that $C'=\{\q\circ b(\q)\,|\,\q\in C\}$.
Then for every $\w\in\mathbb{R}^d$,
$\max_{\p\in P}\Psi_{GM} \left(|\p^\top\w-b(\p)|\right)\le 8(d+1)^3\cdot\max_{\q\in C}\Psi_{GM}\left(|\q^\top\w-b(\q)|\right).$
\end{lemma}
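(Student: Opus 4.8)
The plan is to mirror the proof of Lemma~\ref{lem:cauchy} for Cauchy regression: reduce the statement to the $L_\infty$-coreset guarantee of Theorem~\ref{thm:main:apps} with $z=2$, and then convert the resulting squared-distance bound into a bound on the Geman-McClure loss via a scaling inequality tailored to $\Psi_{GM}$. First I would dispose of the trivial case $\w=0^d$, and for nonzero $\w$ introduce the lifted vectors $\p'=\p\circ b(\p)$ and $\w'=\w\circ(-1)$, so that $|\p^\top\w-b(\p)|=|(\p')^\top\w'|=\dist(\p',H(\w',0^{d+1}))$. Running $\coreset$ on $P'=\{\p'\mid\p\in P\}$ and invoking Theorem~\ref{thm:main:apps} with $z=2$ yields the base inequality
\[|(\p')^\top\w'|^2\le 8(d+1)^3\max_{\q\in C}|(\q')^\top\w'|^2.\]

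The crux is the structural step, which is the Geman-McClure analog of the Bernoulli inequality used in the Cauchy proof and of Lemma~\ref{lem:welsch:struct} in the Welsch proof. I would establish that for any $a\ge 1$ and any reals $x,y$ with $x^2\le a\,y^2$, the Geman-McClure loss satisfies $\Psi_{GM}(x)\le a\,\Psi_{GM}(y)$. Since $\Psi_{GM}(t)=\frac{t^2}{2(1+t^2)}$ is monotonically increasing in $|t|$, monotonicity gives
\[\Psi_{GM}(x)\le\Psi_{GM}(\sqrt{a}\,y)=\frac{a\,y^2}{2(1+a\,y^2)};\]
comparing this with $a\,\Psi_{GM}(y)=\frac{a\,y^2}{2(1+y^2)}$ reduces the claim to $1+y^2\le 1+a\,y^2$, which holds because $a\ge 1$.

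Applying this scaling inequality with $a=8(d+1)^3$, $x=|(\p')^\top\w'|$, and $y=\max_{\q\in C}|(\q')^\top\w'|$ (using $C\subseteq P$ and monotonicity to push the maximum outside the loss), and then substituting $|(\p')^\top\w'|=|\p^\top\w-b(\p)|$ and $|(\q')^\top\w'|=|\q^\top\w-b(\q)|$ back in, delivers
\[\max_{\p\in P}\Psi_{GM}(|\p^\top\w-b(\p)|)\le 8(d+1)^3\max_{\q\in C}\Psi_{GM}(|\q^\top\w-b(\q)|),\]
as desired. I do not expect a substantive obstacle here; the only point requiring care is the monotone scaling inequality, which is slightly less transparent than in the unbounded Cauchy case because $\Psi_{GM}$ saturates at $1/2$, but it follows immediately from the elementary computation above.
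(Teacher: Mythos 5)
Your proposal is correct and follows essentially the same route as the paper: lift to $\p'$ and $\w'$, invoke Theorem~\ref{thm:main:apps} with $z=2$ to get $|(\p')^\top\w'|^2\le 8(d+1)^3\max_{\q\in C}|(\q')^\top\w'|^2$, and then use monotonicity together with the observation that $\frac{a\,y^2}{2(1+a\,y^2)}\le\frac{a\,y^2}{2(1+y^2)}$ for $a\ge 1$. Your write-up simply makes explicit, via the intermediate quantity $\Psi_{GM}(\sqrt{a}\,y)$, the scaling step that the paper carries out inline.
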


\begin{proof}
Note that the claim is trivially true for $\w=0^d$, so it therefore suffices to consider nonzero $\w\in\mathbb{R}^d$. 
Let $Y\in\calH_{d-1}$ such that $\w^\top\Y=0^{d-1}$ and $\Y^\top\w=0^d$.
For each $\p\in P$, let $\p'=\p\circ b(\p)=\begin{bmatrix}\p\\ b(\p)\end{bmatrix}$ denote the vertical concatenation of $\p$ with $b(\p)$.
We also define the vertical concatenation $\w'=\w\circ(-1)=\begin{bmatrix}\w\\ -1\end{bmatrix}$.
By setting $C$ to be the output of $\coreset$ on $P'=\{\p'\,|\,\p\in P\}$, then by Theorem~\ref{thm:main:apps},
\begin{align*}
\max_{\p\in P}&\dist(\p',H(\w',0^{d+1}))^z\le2^{z+1}(d+1)^{1.5z}\max_{\q\in C}\dist(\q',H(\w',0^{d+1}))^z.
\end{align*}
Thus for $z=2$, we have for every $\p\in P$,
\[|(\p')^\top\w'|^2\le8(d+1)^3\max_{\q\in C}|(\q')^\top\w'|^2.\]
The Geman-McClure loss function is monotonically increasing, so that $\Psi_{GM}(|\p^\top\x-b(\p)|)$ increases as $|\p^\top\x-b(\p)|$ increases.
Therefore,
\begin{align*}
\max_{\p\in P}\Psi_{GM}\left(|\p^\top\w-b(\p)|\right)&=\max_{\p\in P}\frac{|(\p')^\top\w|^2}{2+2|(\p')^\top\w|^2)}\\
&\le\max_{\q\in C}\frac{8(d+1)^3|(\q')^\top\w|^2}{2+2|(\q')^\top\w|^2}\\
&=8(d+1)^3\max_{\q\in C}\Psi_{GM}\left(|\q^\top\w-b(\q)|\right),
\end{align*}
where the inequality results from the $L_\infty$-coreset property of Theorem~\ref{thm:main:apps} and the fact that $C\subseteq P$. 
\end{proof}

\subsection{$L_\infty$ Coreset for Regression with Concave Loss Function}
We first recall the following property of concave functions:
\begin{lemma}
\label{lem:concave:struct}
Let $f:\mathbb{R}\to\mathbb{R}$ be a concave function with $f(0)=0$. 
Then for any $x\le y$, we have $\frac{f(x)}{x}\ge\frac{f(y)}{y}$. 
\end{lemma}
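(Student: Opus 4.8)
The plan is to show that the map $x \mapsto f(x)/x$ is non-increasing on the positive reals, which is precisely the claimed inequality once we restrict to $0 < x \le y$ (the regime relevant to the regression applications, where the argument of $f$ is an absolute value and hence non-negative). The single tool I would use is that concavity bounds the value of $f$ at any interior point of a segment below by the corresponding convex combination of its endpoint values; here the convenient endpoints are $0$, where $f$ vanishes by hypothesis, and $y$.

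First I would write $x$ as an explicit convex combination of $0$ and $y$. Since $0 < x \le y$, the weight $\lambda := x/y$ lies in $(0,1]$ and satisfies $x = \lambda y + (1-\lambda)\cdot 0$. Applying the defining inequality of concavity to this combination gives $f(x) \ge \lambda f(y) + (1-\lambda) f(0)$.

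Next I would substitute the hypothesis $f(0) = 0$, so the bound collapses to $f(x) \ge \lambda f(y) = (x/y)\, f(y)$. Dividing both sides by $x > 0$ then yields $f(x)/x \ge f(y)/y$, which is the assertion.

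I do not expect a genuine obstacle here; the points requiring attention are bookkeeping rather than substance. Specifically, I would be explicit that we work on the positive reals, so that both ratios $f(x)/x$ and $f(y)/y$ are well defined and so that dividing through by $x$ preserves the direction of the inequality. The hypothesis $f(0)=0$ is exactly what makes the $0$-endpoint term drop out, and it is essential: without it the convex-combination bound would retain a $(1-\lambda)f(0)/x$ term and the ratio $f(x)/x$ need not be monotone.
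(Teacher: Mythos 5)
Your proof is correct and is the standard argument: writing $x=\tfrac{x}{y}\,y+\bigl(1-\tfrac{x}{y}\bigr)\cdot 0$ and invoking concavity together with $f(0)=0$ gives $f(x)\ge \tfrac{x}{y}f(y)$, and dividing by $x>0$ finishes. The paper itself states this lemma without proof (it is ``recalled'' as a known property of concave functions), so there is nothing to compare against; your argument supplies exactly the standard justification, and your remark that the statement should be read on $0<x\le y$ --- which is the only regime used in the applications, where the argument of $f$ is an absolute value --- is a correct and worthwhile clarification of the lemma as written.
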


Using Lemma~\ref{lem:concave:struct}, we obtain an $L_\infty$ coreset for regression for any non-decreasing concave loss function $\Psi_{Con}$ satisfying $\Psi_{Con}(0)=0$. 

We obtain an $L_\infty$ coreset for regression for any non-decreasing concave loss function $\Psi_{Con}$ satisfying $\Psi_{Con}(0)=0$.
\begin{lemma}
\label{lem:concave}
Let $P\subseteq\mathbb{R}^d$ be a set of $n$ points, $b:P\to\mathbb{R}$, $\lambda\in\mathbb{R}$, and let $\Psi_{Con}$ denote any non-decreasing concave loss function with $\Psi_{Con}(0)=0$. 
Let $P'=\{\p\circ b(\p)\,|\,\p\in P\}$, where $\circ$ denotes vertical concatenation.
Let $C'$ be the output of a call to $L_\infty-\coreset(P',d)$ and let $C\subseteq P$ so that $C'=\{\q\circ b(\q)\,|\,\q\in C\}$.
Then for every $\w\in\mathbb{R}^d$, $\max_{\p\in P}\Psi_{Con} \left(|\p^\top\w-b(\p)|\right)\le 4(d+1)^{1.5}\cdot\max_{\q\in C}\Psi_{Con}\left(|\q^\top\w-b(\q)|\right).$
\end{lemma}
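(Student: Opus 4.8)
The plan is to reduce the statement to the $z=1$ residual bound that is already available, and then to observe that a non-decreasing concave loss with $\Psi_{Con}(0)=0$ is automatically ``power-bounded with exponent $1$'', so that the constant $4(d+1)^{1.5}$ emerges exactly as the $z=1$ specialization of the power-bounded argument in Lemma~\ref{lem:power}. In this sense the lemma is not a new phenomenon but a recognition that concavity buys the same sublinear growth control that power-boundedness assumes outright.

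First I would dispose of the degenerate query $\w=0^d$ and then, for nonzero $\w$, reuse the standard lift employed throughout this section: set $\p'=\p\circ b(\p)\in\mathbb{R}^{d+1}$ and $\w'=\w\circ(-1)$, so that $(\p')^\top\w'=\p^\top\w-b(\p)$. Invoking Theorem~\ref{thm:main:apps} with $z=1$ on the query hyperplane whose normal is $\w'$ gives the base $L_\infty$ guarantee
\[
\max_{\p\in P}|\p^\top\w-b(\p)|\le 4(d+1)^{1.5}\,\max_{\q\in C}|\q^\top\w-b(\q)|.
\]

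The crux is to transfer this bound from raw residuals to loss values. Here I would invoke Lemma~\ref{lem:concave:struct}: concavity together with $\Psi_{Con}(0)=0$ forces $t\mapsto\Psi_{Con}(t)/t$ to be non-increasing on $(0,\infty)$, which yields $\Psi_{Con}(Mx)\le M\,\Psi_{Con}(x)$ for every $M\ge 1$ and every $x\ge 0$ (the case $x=0$ being immediate from $\Psi_{Con}(0)=0$). Writing $M=4(d+1)^{1.5}\ge 1$ and using that $\Psi_{Con}$ is non-decreasing, I would then chain
\begin{align*}
\max_{\p\in P}\Psi_{Con}(|\p^\top\w-b(\p)|)
&=\Psi_{Con}\!\left(\max_{\p\in P}|\p^\top\w-b(\p)|\right)\\
&\le\Psi_{Con}\!\left(M\max_{\q\in C}|\q^\top\w-b(\q)|\right)\\
&\le M\max_{\q\in C}\Psi_{Con}(|\q^\top\w-b(\q)|),
\end{align*}
where the first equality and the final passage to the maximum of the losses use monotonicity of $\Psi_{Con}$ to commute it with $\max$, the middle step applies the displayed base bound inside $\Psi_{Con}$ via monotonicity, and the extraction of the factor $M$ uses the sublinearity derived from Lemma~\ref{lem:concave:struct}.

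The only genuinely new ingredient is the sublinearity estimate $\Psi_{Con}(Mx)\le M\Psi_{Con}(x)$; everything else is the now-routine lifting from the earlier $M$-estimator lemmas such as Cauchy and Geman-McClure. I expect the main obstacle to be purely bookkeeping rather than mathematical: verifying the boundary case $x=0$ in the sublinearity inequality and confirming that the degenerate query $\w=0^d$ is covered, which follows by applying the same reasoning to the query direction $(0,\dots,0,-1)$. Notably, no casework on the magnitude of the residuals is required, unlike the Huber analysis, precisely because concavity supplies a single clean sublinearity estimate valid uniformly for all $x\ge 0$.
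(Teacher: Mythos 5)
Your proposal is correct and follows essentially the same route as the paper: the same lift $\p'=\p\circ b(\p)$, $\w'=\w\circ(-1)$, the same invocation of Theorem~\ref{thm:main:apps} at $z=1$ to get the residual bound with constant $4(d+1)^{1.5}$, and the same use of Lemma~\ref{lem:concave:struct} to convert concavity with $\Psi_{Con}(0)=0$ into the sublinearity $\Psi_{Con}(Mx)\le M\Psi_{Con}(x)$ that extracts the constant. The only difference is expository: you make the sublinearity step and the $x=0$ boundary case explicit, which the paper leaves implicit.
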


\begin{proof}
The claim is trivially true for $\w=0^d$, so it suffices to consider nonzero $\w\in\mathbb{R}^d$. 
Let $Y\in\calH_{d-1}$ such that $\w^\top\Y=0^{d-1}$ and $\Y^\top\w=0^d$.
For each $\p\in P$, let $\p'=\p\circ b(\p)=\begin{bmatrix}\p\\ b(\p)\end{bmatrix}$ denote the vertical concatenation of $\p$ with $b(\p)$.
We also define the vertical concatenation $\w'=\w\circ(-1)=\begin{bmatrix}\w\\ -1\end{bmatrix}$.
By setting $C$ to be the output of $\coreset$ on $P'=\{\p'\,|\,\p\in P\}$, then by Theorem~\ref{thm:main:apps},
\begin{align*}
\max_{\p\in P}\dist(\p',H(\w',0^{d+1}))^z&\le2^{z+1}(d+1)^{1.5z}\max_{\q\in C}\dist(\q',H(\w',0^{d+1}))^z.
\end{align*}
Thus for $z=1$, we have for every $\p\in P$,
\[|(\p')^\top\w'|\le4(d+1)^{1.5}\max_{\q\in C}|(\q')^\top\w'|.\]
Since $\Psi_{Con}$ is monotonically non-decreasing, then $\Psi_{Con}(|\p^\top\x-b(\p))$ increases as $|\p^\top\x-b(\p)|$ increases. 
Thus by Lemma~\ref{lem:concave:struct}, 
\begin{align*}
\max_{\p\in P}\Psi_{Con}\left(|\p^\top\w-b(\p)|\right)&\le\max_{\q\in C}\Psi_{Con}\left(4(d+1)^{1.5}|\q^\top\w-b(\q)|\right)\\
&\le4(d+1)^{1.5}\max_{\q\in C}\Psi_{Con}\left(|\q^\top\w-b(\q)|\right).
\end{align*}
\end{proof}

\subsection{$L_\infty$ Coreset for Tukey Regression}
\begin{lemma}
\label{lem:tukey}
Let $P\subseteq\mathbb{R}^d$ be a set of $n$ points, $b:P\to\mathbb{R}$, $\lambda\in\mathbb{R}$, and let $\Psi_{Tuk}$ denote the Tukey loss function.
Let $P'=\{\p\circ b(\p)\,|\,\p\in P\}$, where $\circ$ denotes vertical concatenation.
Let $C'$ be the output of a call to $L_\infty-\coreset(P',d)$ and let $C\subseteq P$ so that $C'=\{\q\circ b(\q)\,|\,\q\in C\}$.
Then for every $\w\in\mathbb{R}^d$, $\max_{\p\in P}\Psi_{Tuk} \left(|\p^\top\w-b(\p)|\right)\le 8(d+1)^3\cdot\max_{\q\in C}\Psi_{Tuk}\left(|\q^\top\w-b(\q)|\right).$
\end{lemma}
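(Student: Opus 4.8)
The plan is to follow the same template as the preceding $M$-estimator lemmas (e.g.\ \lemref{cauchy} and \lemref{welsch}), reducing the statement to a single structural fact about $\Psi_{Tuk}$. First I would dispose of the trivial case $\w=0^d$ and, for nonzero $\w$, set $\p'=\p\circ b(\p)$ and $\w'=\w\circ(-1)$ exactly as before, so that $|\p^\top\w-b(\p)|=|(\p')^\top\w'|$ and this quantity coincides with $\dist(\p',H(\w',0^{d+1}))$ up to the orthogonal-complement normalization, which cancels between the two sides. Applying \thmref{main:apps} with $z=2$ then yields the base inequality
\[|(\p')^\top\w'|^2\le 8(d+1)^3\max_{\q\in C}|(\q')^\top\w'|^2\]
for every $\p\in P$, which is the only place the coreset construction enters.

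The crux is a structural lemma sharpening the generic power-bounded bound of \lemref{power} (which would only give $16(d+1)^3$ at $z=2$) down to the claimed $8(d+1)^3$. Concretely, I would show that $\phi(s):=\Psi_{Tuk}(\sqrt{s})$ is concave on $[0,\infty)$ with $\phi(0)=0$. On $[0,\lambda^2]$ this is a direct computation: writing $u=s/\lambda^2$ gives $\phi(s)=\frac{\lambda^2}{6}\bigl(1-(1-u)^3\bigr)$, whose first derivative in $s$ is $\frac12(1-u)^2$ and whose second derivative in $s$ is $-\frac{1}{\lambda^2}(1-u)\le0$; on $(\lambda^2,\infty)$ the function $\phi$ is constant, and the two pieces glue $C^1$-ly since $\phi'$ vanishes at $s=\lambda^2$. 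Hence $\phi$ is concave with $\phi(0)=0$, so \lemref{concave:struct} applies to $\phi$: for $0\le s\le Ks$ with $K\ge 1$ we obtain $\phi(Ks)\le K\,\phi(s)$, that is, $\Psi_{Tuk}(\sqrt{K}\,x)\le K\,\Psi_{Tuk}(x)$ for all $K\ge1$ and $x\ge0$.

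Finally I would combine the two ingredients. Writing $M=\max_{\q\in C}|(\q')^\top\w'|$ and $K=8(d+1)^3\ge1$, the base inequality gives $|(\p')^\top\w'|\le\sqrt{K}\,M$ for every $\p$, so by monotonicity of $\Psi_{Tuk}$ and then the structural bound,
\[\Psi_{Tuk}\bigl(|(\p')^\top\w'|\bigr)\le\Psi_{Tuk}\bigl(\sqrt{K}\,M\bigr)\le K\,\Psi_{Tuk}(M)=8(d+1)^3\max_{\q\in C}\Psi_{Tuk}\bigl(|(\q')^\top\w'|\bigr).\]
Taking the maximum over $\p\in P$ and translating back via $|(\p')^\top\w'|=|\p^\top\w-b(\p)|$ gives the claim. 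I expect the concavity-in-$s$ computation to be the only nonroutine step; everything else is bookkeeping identical to the Cauchy and Welsch proofs, and the saturation of the Tukey loss beyond $\lambda$ is handled automatically because $\phi$ is constant there, rather than requiring the explicit three-way casework used for Huber in \lemref{huber}.
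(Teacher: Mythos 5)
Your proof is correct, and it takes a genuinely cleaner route than the paper's. Both arguments start identically (trivial case $\w=0^d$, lifting to $\p'$ and $\w'$, invoking \thmref{main:apps} with $z=2$ to get $|(\p')^\top\w'|^2\le 8(d+1)^3\max_{\q\in C}|(\q')^\top\w'|^2$), and both hinge on the same substitution $f(s)=\frac{\lambda^2}{6}\bigl(1-(1-s/\lambda^2)^3\bigr)$ with $f''(s)=(s-\lambda^2)/\lambda^4\le 0$ fed into \lemref{concave:struct}. The difference is in how the saturation region is handled: the paper only establishes concavity of $f$ on $[0,\lambda^2]$ and therefore must run an explicit three-way case analysis on whether $|(\p')^\top\w'|$ and $\max_{\q\in C}|(\q')^\top\w'|$ exceed $\lambda$, treating the constant plateau $\Psi_{Tuk}=\lambda^2/6$ separately in each branch. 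You instead observe that $\phi(s)=\Psi_{Tuk}(\sqrt{s})$ is concave on all of $[0,\infty)$ --- the derivative $\frac12(1-s/\lambda^2)^2$ decreases to $0$ at $s=\lambda^2$ and the constant piece glues on $C^1$-continuously --- so the single inequality $\phi(Ks)\le K\phi(s)$ for $K\ge 1$ absorbs every case at once, including the plateau. This buys a shorter proof with no casework; what it costs is the one extra (easy) verification that the two pieces of $\phi$ fit together concavely rather than merely piecewise. Both arguments yield the same constant $8(d+1)^3$, and your closing step ($\Psi_{Tuk}(M)=\max_{\q\in C}\Psi_{Tuk}(|(\q')^\top\w'|)$ by monotonicity) is the same bookkeeping the paper uses.
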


\begin{proof}
We first observe that the claim is trivially true for $\w=0^d$, so that it suffices to consider nonzero $\w\in\mathbb{R}^d$. 
Let $Y\in\calH_{d-1}$ such that $\w^\top\Y=0^{d-1}$ and $\Y^\top\w=0^d$.
For each $\p\in P$, let $\p'=\p\circ b(\p)=\begin{bmatrix}\p\\ b(\p)\end{bmatrix}$ denote the vertical concatenation of $\p$ with $b(\p)$.
We define the vertical concatenation $\w'=\w\circ(-1)=\begin{bmatrix}\w\\ -1\end{bmatrix}$.
By setting $C$ to be the output of $\coreset$ on $P'=\{\p'\,|\,\p\in P\}$, then by Theorem~\ref{thm:main:apps},
\begin{align*}
\max_{\p\in P}&\dist(\p',H(\w',0^{d+1}))^z\le2^{z+1}(d+1)^{1.5z}\max_{\q\in C}\dist(\q',H(\w',0^{d+1}))^z.
\end{align*}
Thus for $z=2$, we have for every $\p\in P$,
\[|(\p')^\top\w'|^2\le8(d+1)^3\max_{\q\in C}|(\q')^\top\w'|^2.\]
We first note that if $\frac{|(\p')^\top\w'|}{\sqrt{8}(d+1)^{1.5}}\ge\lambda$, then we trivially have $\max_{\q\in C}|(\q')^\top\w'|^2\ge\lambda^2$ so that $\max_{\q\in C}\Psi_{Tuk}\left(|\q^\top\w-b(\q)|\right)=\frac{\lambda^2}{6}\ge\Psi_{Tuk}(x)$ for all $x$. 
Thus, we would have
\[\max_{\p\in P}\Psi_{Tuk}\left(|\p^\top\w-b(\p)|\right)\le\max_{\q\in C}\Psi_{Tuk}\left(|\q^\top\w-b(\q)|\right),\] 
as desired. 
Hence, we assume $\frac{|(\p')^\top\w'|}{\sqrt{8}(d+1)^{1.5}}<\lambda$ and consider casework for whether $|(\p')^\top\w'|\le\lambda$ or $|(\p')^\top\w'|>\lambda$.

If $|(\p')^\top\w'|\le\lambda$, then since the Tukey loss function is monotonically increasing, we have
\begin{align*}
\Psi_{Tuk}\left(|(\p')^\top\w'|\right)&=\frac{\lambda^2}{6}\left(1-\left(1-\frac{|(\p')^\top\w'|^2}{\lambda^2}\right)^3\right)\\
&\le\max_{\q\in C}\frac{\lambda^2}{6}\left(1-\left(1-\frac{8(d+1)^3|(\q')^\top\w'|^2}{\lambda^2}\right)^3\right)
\end{align*}
Unfortunately, the Tukey loss function is not concave, so we cannot directly apply Lemma~\ref{lem:concave:struct}. 
However, if we define the function $f(x):=\frac{\lambda^2}{6}\left(1-\left(1-\frac{x}{\lambda^2}\right)^3\right)$, then we have
\[\frac{d^2f}{dx^2}=\frac{x-\lambda^2}{\lambda^4},\]
which is non-positive for all $x\le\lambda^2$. 
Thus by Lemma~\ref{lem:concave:struct}, we have for all $0\le x\le y\le\lambda^2$ that $\frac{f(x)}{x}\ge\frac{f(y)}{y}$. 
Since $f(x^2)=\Psi_{Tuk}(x)$, then we have for all $0\le x\le y\le\lambda$ that $\frac{\Psi_{Tuk}(x)}{x^2}\ge\frac{\Psi_{Tuk}(y)}{y^2}$. 
Hence by the assumption that $\frac{|(\p')^\top\w'|}{\sqrt{8}(d+1)^{1.5}}<\lambda$, 
\begin{align*}
\Psi_{Tuk}\left(|(\p')^\top\w'|\right)&\le 8(d+1)^3\max_{\q\in C}\frac{\lambda^2}{6}\left(1-\left(1-\frac{|(\q')^\top\w'|^2}{\lambda^2}\right)^3\right)\\
&\le8(d+1)^3\max_{\q\in C}\Psi_{Tuk}\left(|(\q')^\top\w'|\right)\\
&=8(d+1)^3\max_{\q\in C}\Psi_{Tuk}\left(|\q^\top\w-b(\q)|\right).
\end{align*}
On the other hand, if $|(\p')^\top\w'|>\lambda$, then we further consider casework on whether $\max_{\q\in C}|(\q')^\top\w'|>\lambda$ or $\max_{\q\in C}|(\q')^\top\w'|\le\lambda$. 
If $\max_{\q\in C}|(\q')^\top\w'|>\lambda$, then we immediately have 
\begin{align*}
\Psi_{Tuk}\left(|(\p')^\top\w'|\right)&=\frac{\lambda^2}{6}=\max_{\q\in C}\Psi_{Tuk}\left(|(\q')^\top\w'|\right)\\
&=\max_{\q\in C}\Psi_{Tuk}\left(|\q^\top\w-b(\q)|\right).
\end{align*}
Otherwise, suppose $\max_{\q\in C}|(\q')^\top\w'|\le\lambda$. 
Note that $|(\p')^\top\w'|^2\le8(d+1)^3\max_{\q\in C}|(\q')^\top\w'|^2$ implies $\max_{\q\in C}|(\q')^\top\w'|>\frac{\lambda}{\sqrt{8}(d+1)^{1.5}}$. 
Since the Tukey loss function is monotonically increasing, then 
\[\max_{\q\in C}\Psi_{Tuk}\left(|(\q')^\top\w'|\right)\ge\Psi_{Tuk}\left(\frac{\lambda}{\sqrt{8}(d+1)^{1.5}}\right).\]
Because $\max_{\q\in C}|(\q')^\top\w'|\le\lambda$, then we can again apply the relationship $\frac{\Psi_{Tuk}(x)}{x^2}\ge\frac{\Psi_{Tuk}(y)}{y^2}$ for all $0\le x\le y\le\lambda$, so that
\[\max_{\q\in C}\Psi_{Tuk}\left(|(\q')^\top\w'|\right)\ge\frac{1}{8(d+1)^3}\Psi_{Tuk}(\lambda).\]
Hence,
\begin{align*}
\Psi_{Tuk}\left(|(\p')^\top\w'|\right)=\Psi_{Tuk}(\lambda)\le 8(d+1)^3\max_{\q\in C}\Psi_{Tuk}\left(|(\q')^\top\w'|\right).
\end{align*}
Therefore across all cases, we have
\begin{align*}
\max_{\p\in P}\Psi_{Tuk}\left(|\p^\top\w-b(\p)|\right)\le 8(d+1)^3\cdot\max_{\q\in C}\Psi_{Tuk}\left(|\q^\top\w-b(\q)|\right).
\end{align*}
\end{proof}

\subsection{$L_\infty$ Coreset for $L_1-L_2$ Regression}
\begin{lemma}
\label{lem:ll}
Let $P\subseteq\mathbb{R}^d$ be a set of $n$ points, $b:P\to\mathbb{R}$, $\lambda\in\mathbb{R}$, and let $\Psi_{LL}$ denote the $L_1-L_2$ loss function.
Let $P'=\{\p\circ b(\p)\,|\,\p\in P\}$, where $\circ$ denotes vertical concatenation.
Let $C'$ be the output of a call to $L_\infty-\coreset(P',d)$ and let $C\subseteq P$ so that $C'=\{\q\circ b(\q)\,|\,\q\in C\}$.
Then for every $\w\in\mathbb{R}^d$, $\max_{\p\in P}\Psi_{LL} \left(|\p^\top\w-b(\p)|\right)\le 8(d+1)^3\cdot\max_{\q\in C}\Psi_{LL}\left(|\q^\top\w-b(\q)|\right).$
\end{lemma}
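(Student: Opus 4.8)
The plan is to follow the same template as the preceding $M$-estimator lemmas, reducing to the base $L_\infty$ guarantee of Theorem~\ref{thm:main:apps} and then exploiting a growth bound specific to $\Psi_{LL}$. First I would dispose of the trivial case $\w=0^d$, and for nonzero $\w$ introduce the vertical concatenations $\p'=\p\circ b(\p)$ and $\w'=\w\circ(-1)$ exactly as in Lemma~\ref{lem:cauchy}, so that $|(\p')^\top\w'|=|\p^\top\w-b(\p)|$. Applying Theorem~\ref{thm:main:apps} with $z=2$ then gives the base inequality
\[|(\p')^\top\w'|^2\le 8(d+1)^3\max_{\q\in C}|(\q')^\top\w'|^2.\]

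The heart of the argument is the structural claim that $\Psi_{LL}(x)/x^2$ is non-increasing on $[0,\infty)$. The cleanest way I would establish this is to set $f(t):=2\left(\sqrt{1+t/2}-1\right)$, so that $\Psi_{LL}(x)=f(x^2)$; a direct computation gives $f''(t)=-\tfrac18(1+t/2)^{-3/2}<0$, so $f$ is concave with $f(0)=0$, and Lemma~\ref{lem:concave:struct} yields that $f(t)/t$ is non-increasing. Substituting $t=x^2$ shows $\Psi_{LL}(x)/x^2$ is non-increasing, equivalently the power-bounded relation $\Psi_{LL}(y)/\Psi_{LL}(x)\le(y/x)^2$ for all $0\le x\le y$. (Equivalently, rationalizing the numerator gives the explicit form $\Psi_{LL}(x)=x^2/\left(\sqrt{1+x^2/2}+1\right)$, from which the monotonicity of the quotient is immediate.)

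With this in hand I would finish by casework on $x:=|(\p')^\top\w'|$ against $M:=\max_{\q\in C}|(\q')^\top\w'|$, using that $\max_{\q\in C}\Psi_{LL}(|(\q')^\top\w'|)=\Psi_{LL}(M)$ by monotonicity. If $x\le M$, monotonicity alone gives $\Psi_{LL}(x)\le\Psi_{LL}(M)\le 8(d+1)^3\Psi_{LL}(M)$. If $x>M$, the power-bounded relation gives $\Psi_{LL}(x)\le(x^2/M^2)\Psi_{LL}(M)$, which combined with the base inequality $x^2\le 8(d+1)^3M^2$ again yields $\Psi_{LL}(x)\le 8(d+1)^3\Psi_{LL}(M)$. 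Taking the maximum over $\p\in P$ in both cases completes the proof.

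The one point I expect to require care is securing the sharper constant $8(d+1)^3$ rather than the $16(d+1)^3$ that the generic power-bounded Lemma~\ref{lem:power} would yield at $z=2$. The gain comes precisely from invoking Theorem~\ref{thm:main:apps} at $z=2$ directly, so that the factor $8(d+1)^3$ multiplies $x^2/M^2$, rather than first passing through the $z=1$ bound $x\le 4(d+1)^{1.5}M$ and then squaring, which would double the constant.
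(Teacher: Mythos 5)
Your proposal is correct and follows essentially the same route as the paper: reduce to the $z=2$ bound of Theorem~\ref{thm:main:apps} via the concatenations $\p'$, $\w'$, then show $\Psi_{LL}(x)/x^2$ is non-increasing by observing that $f(t)=2\bigl(\sqrt{1+t/2}-1\bigr)$ is concave with $f(0)=0$ and invoking Lemma~\ref{lem:concave:struct}. The only difference is cosmetic --- you finish with explicit casework on $|(\p')^\top\w'|$ versus $\max_{\q\in C}|(\q')^\top\w'|$, whereas the paper chains monotonicity and the quotient bound directly --- and your remark about why invoking $z=2$ directly secures $8(d+1)^3$ rather than $16(d+1)^3$ is accurate.
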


\begin{proof}
We first observe that the claim is trivially true for $\w=0^d$. 
Therefore, it suffices to consider nonzero $\w\in\mathbb{R}^d$. 
Let $Y\in\calH_{d-1}$ such that $\w^\top\Y=0^{d-1}$ and $\Y^\top\w=0^d$.
For each $\p\in P$, let $\p'=\p\circ b(\p)=\begin{bmatrix}\p\\ b(\p)\end{bmatrix}$ denote the vertical concatenation of $\p$ with $b(\p)$.
We also define the vertical concatenation $\w'=\w\circ(-1)=\begin{bmatrix}\w\\ -1\end{bmatrix}$.
By setting $C$ to be the output of $\coreset$ on $P'=\{\p'\,|\,\p\in P\}$, then by Theorem~\ref{thm:main:apps},
\begin{align*}
\max_{\p\in P}&\dist(\p',H(\w',0^{d+1}))^z\le2^{z+1}(d+1)^{1.5z}\max_{\q\in C}\dist(\q',H(\w',0^{d+1}))^z.
\end{align*}
Thus for $z=2$, we have for every $\p\in P$,
\[|(\p')^\top\w'|^2\le8(d+1)^3\max_{\q\in C}|(\q')^\top\w'|^2.\]
The $L_1-L_2$ loss function is monotonically increasing, so that $\Psi_{LL}(|\p^\top\x-b(\p))$ increases as $|\p^\top\x-b(\p)|$ increases.
Therefore,
\begin{align*}
\max_{\p\in P}\Psi_{LL}\left(|\p^\top\w-b(\p)|\right)&=\max_{\p\in P}2\left(\sqrt{1+\frac{|\p^\top\w-b(\p)|^2}{2}}-1\right)\\
&\le\max_{\q\in C}2\left(\sqrt{1+\frac{8(d+1)^3|\q^\top\w-b(\q)|^2}{2}}-1\right).
\end{align*}
Since the $L_1-L_2$ loss function is not concave, so we cannot directly apply Lemma~\ref{lem:concave:struct}. 
Fortunately, if we define the function $f(x):=2\left(\sqrt{1+\frac{x}{2}}-1\right)$, then we have
\[\frac{d^2f}{dx^2}=-\frac{2}{16\left(\frac{x}{2}+1\right)^{3/2}},\]
which is non-positive for all $x\ge 0$. 
Thus by Lemma~\ref{lem:concave:struct}, we have for all $0\le x\le y$ that $\frac{f(x)}{x}\ge\frac{f(y)}{y}$. 
Since $f(x^2)=\Psi_{LL}(x)$, then we have for all $0\le x\le y$ that $\frac{\Psi_{LL}(x)}{x^2}\ge\frac{\Psi_{LL}(y)}{y^2}$. 
Thus,
\begin{align*}
\max_{\p\in P}\Psi_{LL}\left(|\p^\top\w-b(\p)|\right)&\le 8(d+1)^3\max_{\q\in C}2\left(\sqrt{1+\frac{|\q^\top\w-b(\q)|^2}{2}}-1\right)\\
&=8(d+1)^3\max_{\q\in C}\Psi_{LL}\left(|\q^\top\w-b(\q)|\right).
\end{align*}
\end{proof}

\subsection{$L_\infty$ Coreset for Fair Regression}
\begin{lemma}
\label{lem:fair}
Let $P\subseteq\mathbb{R}^d$ be a set of $n$ points, $b:P\to\mathbb{R}$, $\lambda\in\mathbb{R}$, and let $\Psi_{Fair}$ denote the Fair loss function.
Let $P'=\{\p\circ b(\p)\,|\,\p\in P\}$, where $\circ$ denotes vertical concatenation.
Let $C'$ be the output of a call to $L_\infty-\coreset(P',d)$ and let $C\subseteq P$ so that $C'=\{\q\circ b(\q)\,|\,\q\in C\}$.
Then for every $\w\in\mathbb{R}^d$, $\max_{\p\in P}\Psi_{Fair} \left(|\p^\top\w-b(\p)|\right)\le 8(d+1)^3\cdot\max_{\q\in C}\Psi_{Fair}\left(|\q^\top\w-b(\q)|\right).$
\end{lemma}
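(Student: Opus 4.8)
The plan is to reuse the template from the $L_1$--$L_2$ regression lemma, since the Fair loss---though convex rather than concave---still satisfies the structural property that $\Psi_{Fair}(x)/x^2$ is non-increasing on $[0,\infty)$. First I would dispose of the trivial case $\w=0^d$ and introduce the standard lifting: for each $\p\in P$ set $\p'=\p\circ b(\p)$ and $\w'=\w\circ(-1)$, with $\Y\in\calH_{d-1}$ chosen orthogonal to $\w$. Invoking Theorem~\ref{thm:main:apps} with $z=2$ then gives, for every $\p\in P$,
\[|(\p')^\top\w'|^2\le 8(d+1)^3\max_{\q\in C}|(\q')^\top\w'|^2,\]
which reduces everything to comparing scalar loss values, using that $(\p')^\top\w'=\p^\top\w-b(\p)$.

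The crux is the power-bounded comparison $\frac{\Psi_{Fair}(x)}{x^2}\ge\frac{\Psi_{Fair}(y)}{y^2}$ for all $0\le x\le y$. Exactly as in the $L_1$--$L_2$ argument, I would define the auxiliary function $f(t):=\lambda\sqrt{t}-\lambda^2\ln\!\left(1+\sqrt{t}/\lambda\right)$, so that $f(x^2)=\Psi_{Fair}(x)$ and $f(0)=0$, and then show $f$ is concave. A direct differentiation simplifies nicely to $f'(t)=\frac{\lambda}{2(\lambda+\sqrt{t})}$, which is manifestly decreasing in $t$; hence $f''(t)\le 0$ and Lemma~\ref{lem:concave:struct} applies to $f$. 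Substituting $t=x^2$ and $t=y^2$ converts the resulting inequality $f(x^2)/x^2\ge f(y^2)/y^2$ into the desired comparison on $\Psi_{Fair}$.

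The remainder is routine. Combining the base guarantee with monotonicity of $\Psi_{Fair}$, I would fix $\p\in P$, set $x=\max_{\q\in C}|(\q')^\top\w'|$ and $y=\sqrt{8(d+1)^3}\,x$, and note $|(\p')^\top\w'|\le y$. The power-bounded comparison then yields $\Psi_{Fair}(y)\le\frac{y^2}{x^2}\Psi_{Fair}(x)=8(d+1)^3\Psi_{Fair}(x)$, so $\Psi_{Fair}(|(\p')^\top\w'|)\le 8(d+1)^3\max_{\q\in C}\Psi_{Fair}(|(\q')^\top\w'|)$, and taking the maximum over $\p$ finishes the proof. I expect the only genuinely non-routine point to be verifying concavity of $f$; the clean closed form for $f'(t)$ makes this immediate, so I anticipate no real obstacle.
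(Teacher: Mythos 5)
Your proposal is correct and follows essentially the same route as the paper's proof: the same lifting to $\p'$ and $\w'$, the same invocation of Theorem~\ref{thm:main:apps} with $z=2$, and the same auxiliary function $f(t)=\lambda\sqrt{t}-\lambda^2\ln(1+\sqrt{t}/\lambda)$ whose concavity (the paper verifies $f''\le 0$ directly; your closed form $f'(t)=\frac{\lambda}{2(\lambda+\sqrt{t})}$ is a slightly cleaner way to see it) combined with Lemma~\ref{lem:concave:struct} yields $\Psi_{Fair}(x)/x^2\ge\Psi_{Fair}(y)/y^2$. No gaps.
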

\begin{proof}
Since the claim is trivially true for $\w=0^d$, then it suffices to consider nonzero $\w\in\mathbb{R}^d$. 
Let $Y\in\calH_{d-1}$ such that $\w^\top\Y=0^{d-1}$ and $\Y^\top\w=0^d$.
For each $\p\in P$, let $\p'=\p\circ b(\p)=\begin{bmatrix}\p\\ b(\p)\end{bmatrix}$ denote the vertical concatenation of $\p$ with $b(\p)$.
We also define the vertical concatenation $\w'=\w\circ(-1)=\begin{bmatrix}\w\\ -1\end{bmatrix}$.
By setting $C$ to be the output of $\coreset$ on $P'=\{\p'\,|\,\p\in P\}$, then by Theorem~\ref{thm:main:apps},
\begin{align*}
\max_{\p\in P}\dist(\p',H(\w',0^{d+1}))^z&\le2^{z+1}(d+1)^{1.5z}\max_{\q\in C}\dist(\q',H(\w',0^{d+1}))^z.
\end{align*}
Thus for $z=2$, we have for every $\p\in P$,
\[|(\p')^\top\w'|^2\le8(d+1)^3\max_{\q\in C}|(\q')^\top\w'|^2.\]
The Fair loss function is monotonically increasing, so that $\Psi_{Fair}(|\p^\top\x-b(\p))$ increases as $|\p^\top\x-b(\p)|$ increases.
Therefore,
\begin{align*}
\max_{\p\in P}\Psi_{Fair}\left(|\p^\top\w-b(\p)|\right)\le\max_{\q\in C}\Psi_{Fair}\left(\sqrt{8}(d+1)^{1.5}|\q^\top\w-b(\q)|\right).
\end{align*}
The Fair loss function is not concave, so we cannot directly apply Lemma~\ref{lem:concave:struct}. 
However, if we define the function $f(x):=\lambda\sqrt{|x|}-\lambda^2\ln\left(1+\frac{\sqrt{|x|}}{\lambda}\right)$, then we have
\[\frac{d^2f}{dx^2}=-\frac{\lambda}{4\sqrt{x}(\lambda+\sqrt{x})^2},\]
which is non-positive for all $x\ge 0$. 
Thus by Lemma~\ref{lem:concave:struct}, we have for all $0\le x\le y$ that $\frac{f(x)}{x}\ge\frac{f(y)}{y}$. 
Since $f(x^2)=\Psi_{Fair}(x)$, then we have for all $0\le x\le y$ that $\frac{\Psi_{Fair}(x)}{x^2}\ge\frac{\Psi_{Fair}(y)}{y^2}$. 
Thus,
\begin{align*}
\max_{\p\in P}\Psi_{Fair}\left(|\p^\top\w-b(\p)|\right)\le8(d+1)^3\max_{\q\in C}\Psi_{Fair}\left(|\q^\top\w-b(\q)|\right).
\end{align*}
\end{proof}

\section{ADDITIONAL EXPERIMENTS}
\label{sec:exp_ext}
In this section, we carry additional experimental results evaluating our coreset against uniform sampling on real-world datasets, with respect to the projective clustering problem and its variants.

\begin{table*}[!htb]
\caption{\textbf{Summary of our results: } Our coreset construction was applied on various application of projective clustering, of which were robust regression as well as robust subspace clustering}
\centering
\begin{tabular}{|c|c|c|c|c|c|}
\hline
Problem type & Loss function & $k$ & $j$ &  Dataset & Figure \\
\hline
Robust $(2,2)$-projective clustering & $L_1-L_2$ & $2$ & $2$ & \ref{dataset:3} & \ref{fig:proj_CASP_2_2_l12} \\ \hline
Robust $(2,2)$-projective clustering & Huber & $2$ & $3$ & \ref{dataset:3} & \ref{fig:proj_CASP_2_3_huber}\\
\hline
\end{tabular}
\label{tab:more:summary_results}
\end{table*}

\begin{figure*}[tbh!]
\centering
\begin{subfigure}[t]{1\textwidth}
\centering
\includegraphics[width=.32\textwidth]{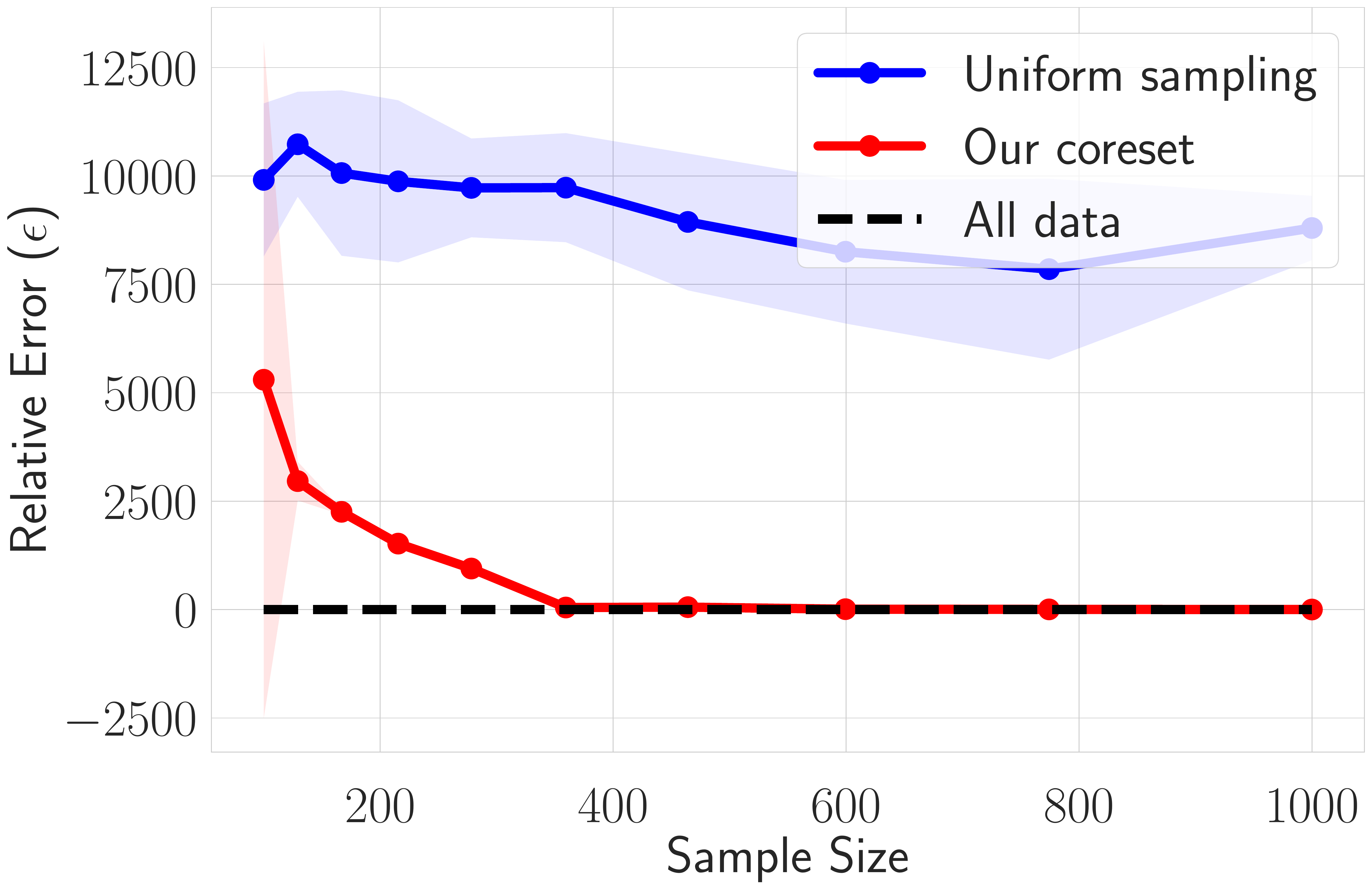}
\includegraphics[width=.32\textwidth]{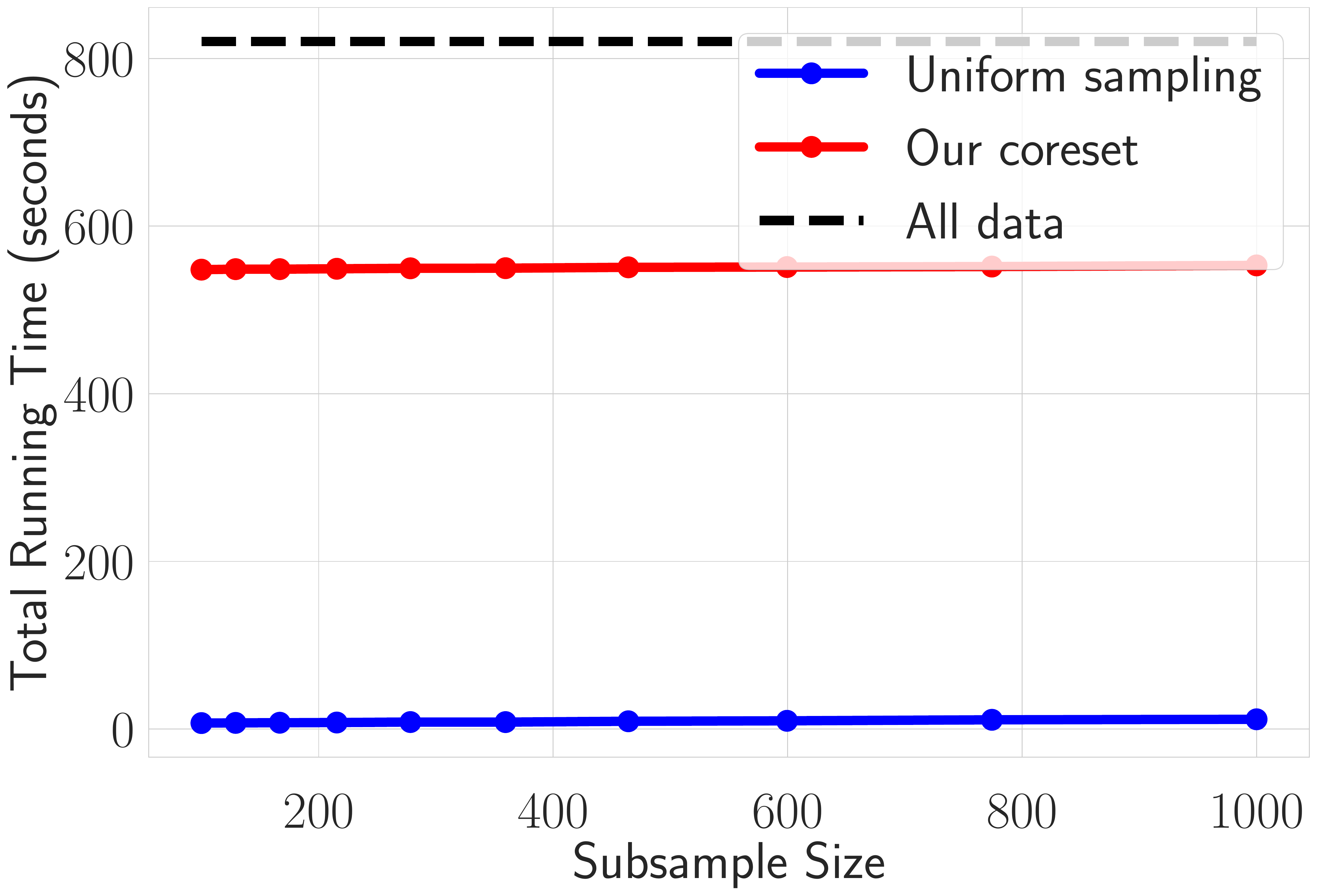}
\includegraphics[width=.32\textwidth]{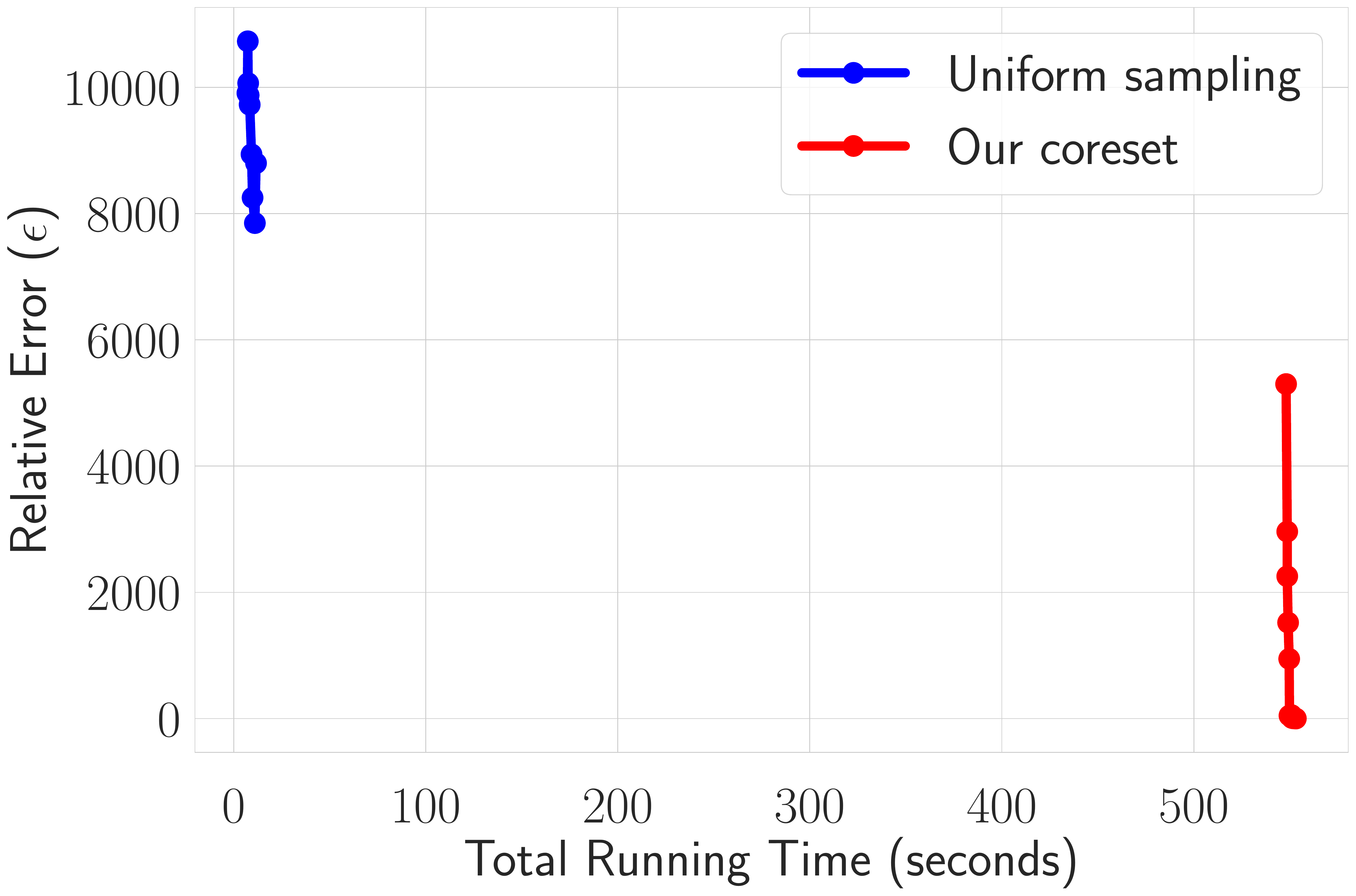}
\caption{}
\label{fig:proj_CASP_2_2_l12}
\end{subfigure}
\begin{subfigure}[t]{1\textwidth}
\centering
\includegraphics[width=.32\textwidth]{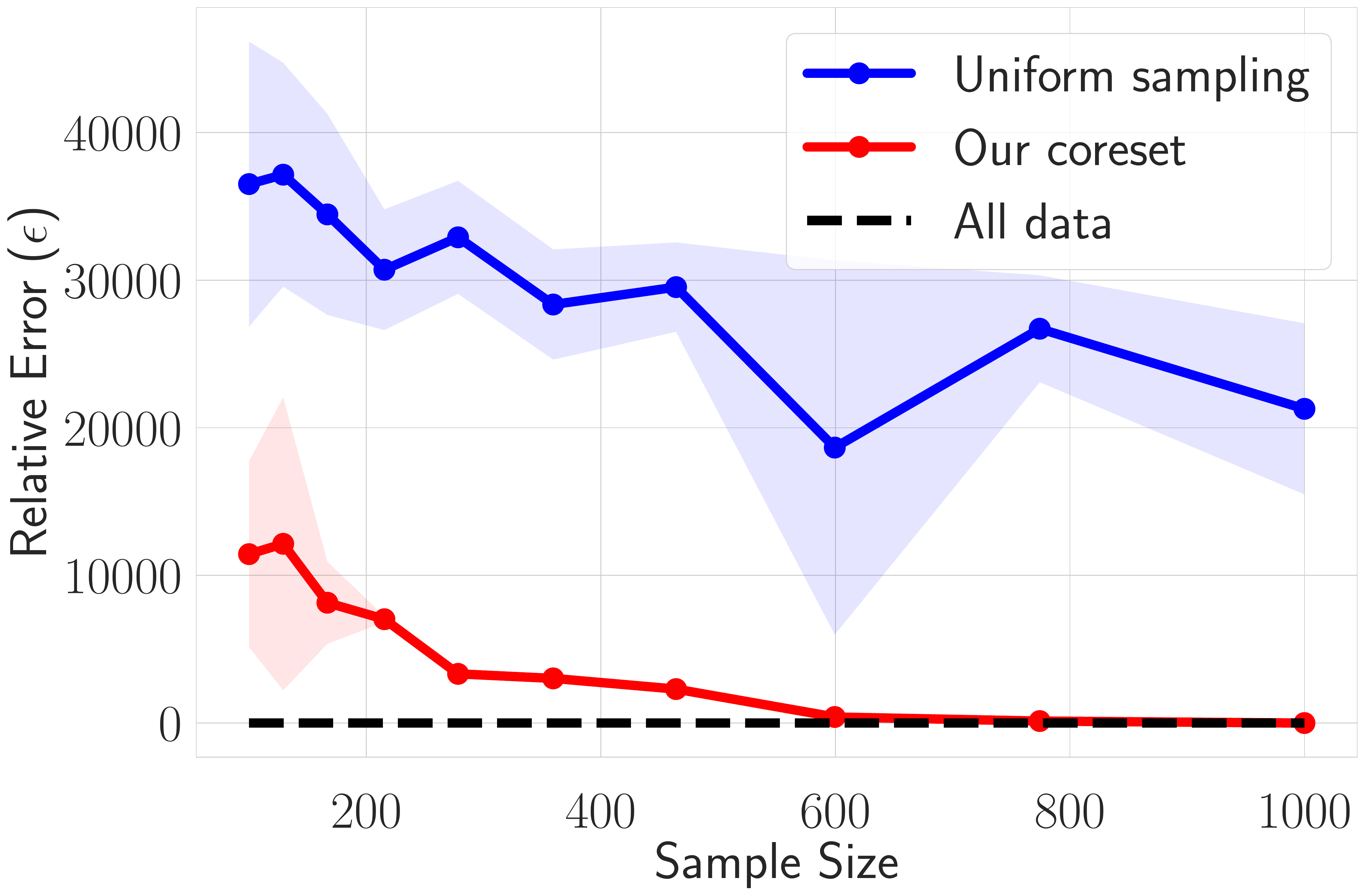}
\includegraphics[width=.32\textwidth]{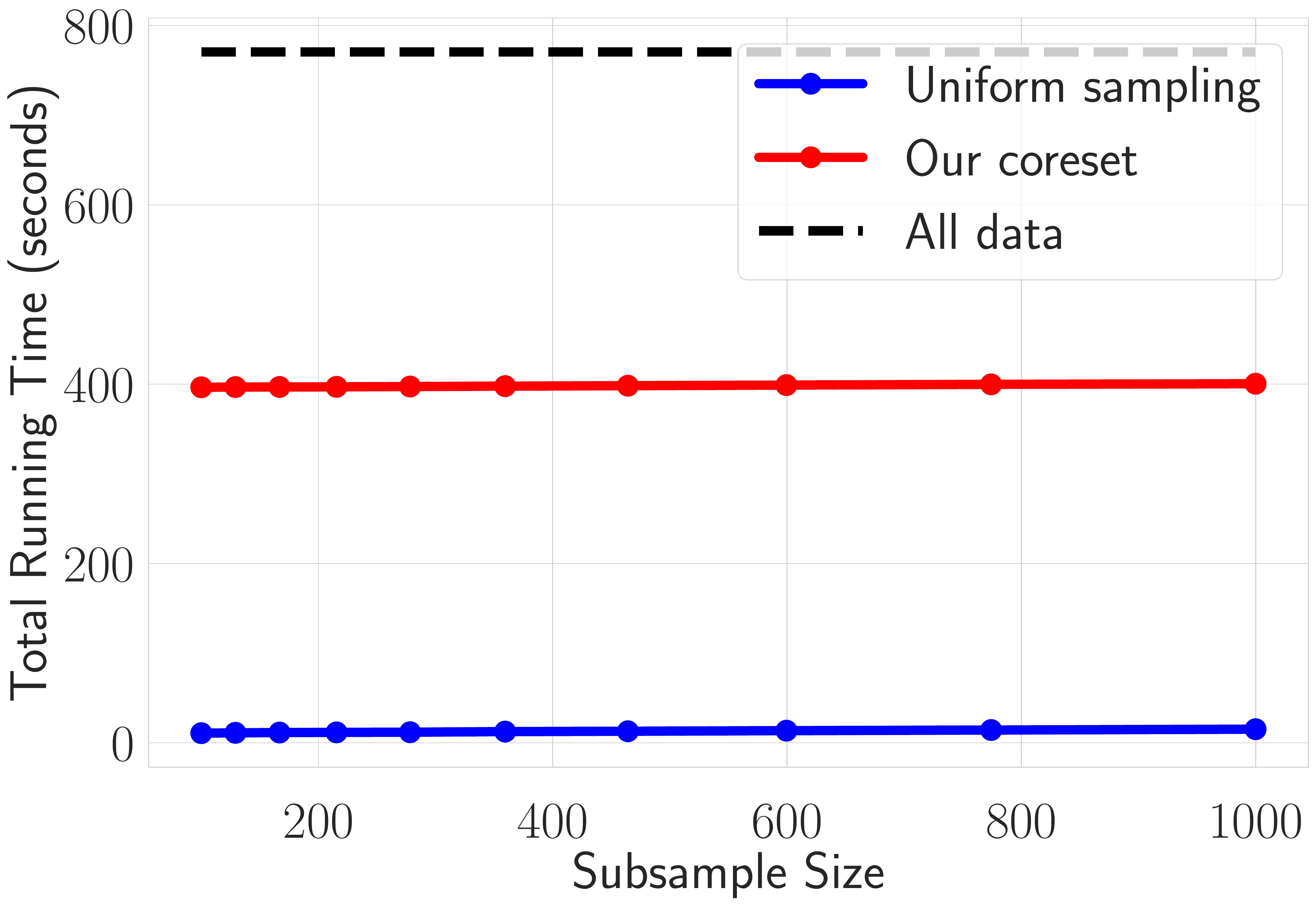}
\includegraphics[width=.32\textwidth]{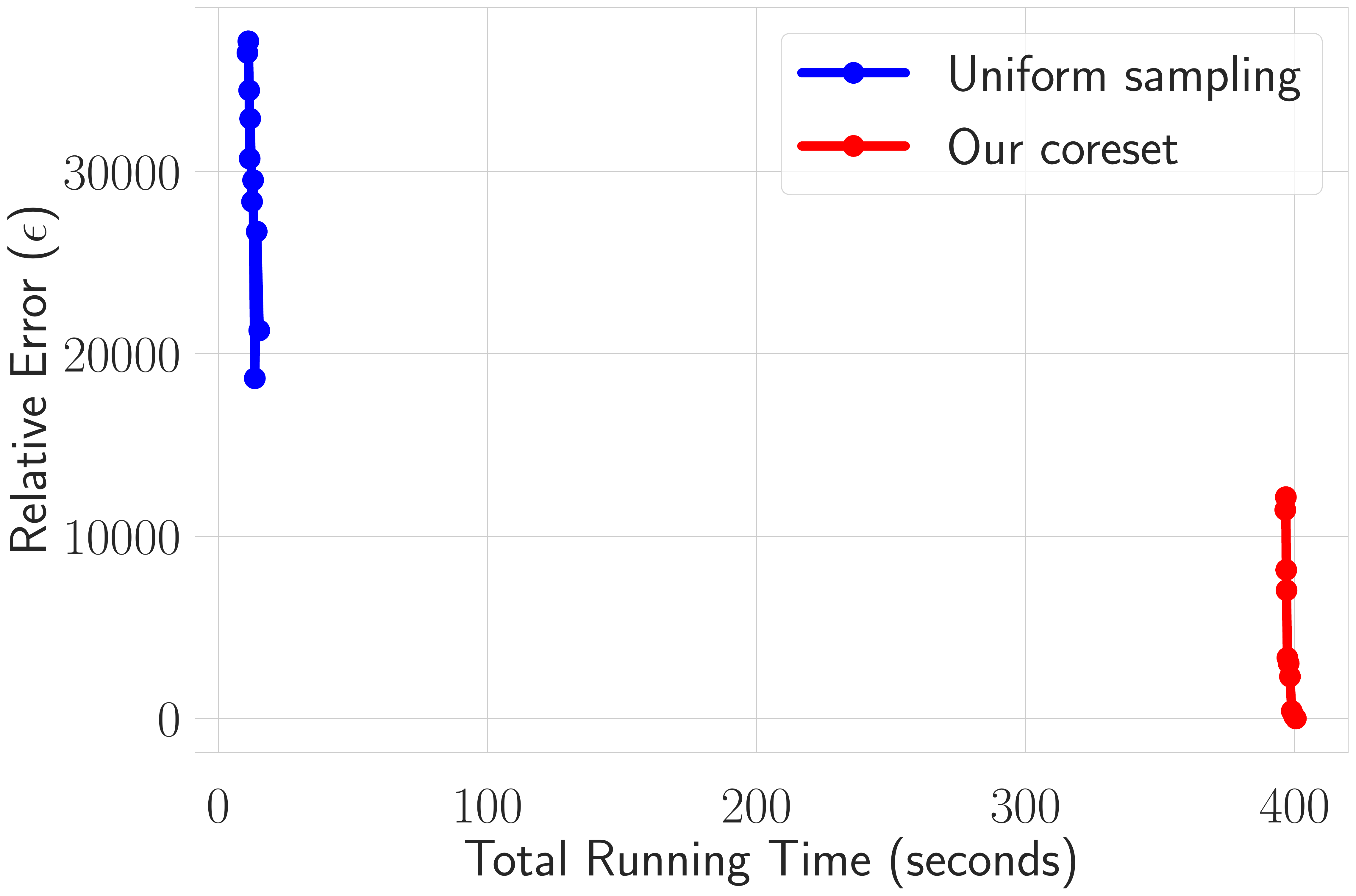}
\caption{}
\label{fig:proj_CASP_2_3_huber}
\end{subfigure}
\caption{Our experimental results: evaluating the efficacy of our coreset against uniform sampling.}
\label{fig:more:results}
\end{figure*}

\end{document}